\newtheorem{proposition}{Proposition}
\newtheorem{theorem}{Theorem}
\newtheorem{lemma}{Lemma}
\newtheorem{corollary}{Corollary}
\newtheorem{definition}{Definition}
\def\C{\ensuremath{\mathcal{C}}}
\def\Ac{\ensuremath{\mathcal{A}}}
\def\F{\ensuremath{\mathcal{F}}}
\def\sP{{\mathbb{P}}}
\def\mB{{\bm{B}}}
\def\mP{{\bm{P}}}
\def\mT{{\bm{T}}}
\def\mC{{\bm{C}}}
\def\mTg{{\bm{T}}_\gamma}
\def\vzero{{\bm{0}}}
\def\vc{{\bm{c}}}
\def\ve{{\bm{e}}}
\def\vu{{\bm{u}}}
\def\vx{{\bm{x}}}
\def\vy{{\bm{y}}}
\def\vw{{\bm{w}}}
\def\vz{{\bm{z}}}
\def\vV{{\bm{V}}}
\def\vmu{{\bm{\mu}}}
\def\vnu{{\bm{\nu}}}
\def\evc{{c}}
\def\eve{{e}}
\def\evu{{u}}
\def\evx{{x}}
\def\evy{{y}}
\def\evw{{w}}
\def\evV{{V}}
\def\evmu{{\mu}}
\def\sB{{\mathbb{B}}}
\newcommand{\E}{\mathbb{E}}
\newcommand{\R}{\mathbb{R}}
\newcommand{\St}{\mathcal{S}}
\newcolumntype{M}[1]{>{\centering\arraybackslash}m{#1}}
\DeclareMathOperator*{\argmin}{arg\,min}
\DeclareMathOperator*{\argmax}{arg\,max}
\definecolor{OliveGreen}{HTML}{3C8031}
\definecolor{Fuchsia}{HTML}{8C368C}
\title{Apprenticeship learning with prior beliefs using inverse optimization}
\author{ \name Mauricio Junca \email mj.junca20@uniandes.edu.co\\
 \addr Department of Mathematics\\
 Universidad de los Andes, Bogot\'{a}, Colombia.
 \AND
 \name Esteban Leiva \email leivamon@usc.edu\\
 \addr Department of Industrial and Systems Engineering\\
 University of Southern California, Los Angeles, CA, US.}
\begin{document}

\maketitle

\begin{abstract}
 The relationship between inverse reinforcement learning (IRL) and inverse optimization (IO) for Markov decision processes (MDPs) has been relatively underexplored in the literature, despite addressing the same problem. In this work, we revisit the relationship between the IO framework for MDPs, IRL, and apprenticeship learning (AL). We incorporate prior beliefs on the structure of the cost function into the IRL and AL problems, and demonstrate that the convex-analytic view of the AL formalism emerges as a relaxation of our framework. Notably, the AL formalism is a special case in our framework when the regularization term is absent. Focusing on the suboptimal expert setting, we formulate the AL problem as a regularized min-max problem. The regularizer plays a key role in addressing the ill-posedness of IRL by guiding the search for plausible cost functions. To solve the resulting regularized-convex-concave-min-max problem, we use stochastic mirror descent (SMD) and establish convergence bounds for the proposed method. Numerical experiments highlight the critical role of regularization in learning cost vectors and apprentice policies.
\end{abstract}
\textbf{Keywords:} Markov Decision Processes, Apprenticeship Learning, Inverse Optimization, Stochastic Mirror Descent

\section{Introduction} \label{sec:intro}

In scenarios where an agent must learn to navigate in a random or uncertain environment, it is a common practice to model the situation as a Markov decision process (MDP) and apply reinforcement learning (RL). The goal in RL is to find a policy that minimizes the total expected discounted cost for the agent. Usually, it is assumed that the cost function is known; however, specifying this function is difficult for most real-life scenarios \citep{Ng2000}. Moreover, an incorrect specification of the cost function can lead to unintended and potentially detrimental effects on the agent's behavior \citep{Amodei2016, Hadfieldmenell2020}. Consider the problem of driving: should the agent be rewarded for arriving quickly, safely, or cheaply, and how should the importance of each factor be balanced? \par

Inverse reinforcement learning (IRL) tackles this problem by reducing the work of manually designing the cost function and using observations of an expert agent's actions. Specifically, IRL aims to infer the cost function that the expert is optimizing based on recorded behavior and a model of the environment. Returning to the driving example, this approach involves observing an expert driver’s behavior and deducing the underlying objective that guides their decisions. However, the goal extends beyond identifying the cost function; in many cases, there is a desire to emulate the expert's actions, much like a student assimilating knowledge from a mentor. For instance, when children learn to run, they are not explicitly given a cost function to optimize, but an expert shows them demonstrations of how they should run. Building on this idea, learning from demonstrations (LfD) and imitation learning (IL) seek to derive a policy that matches or surpasses the expert's performance.\par

IRL was first informally proposed by  \cite{Russel1998}, and \cite{Ng2000} introduced algorithms for three scenarios: (1) when the policy, transition dynamics, and a finite state space are known; (2) when the state space is infinite; and (3) when the policy is unknown, but sample trajectories are available. Several methods have since been proposed, including a maximum margin approach \citep{Ratliff2006}, Bayesian frameworks \citep{Ramachandran2007}, and maximum entropy techniques \citep{Ziebart2008}. However, all of these methods rely on RL as a subroutine within an inner loop, leading to significant computational expenses. \par

Furthermore, the IRL problem is ill-posed \citep{Ng2000}, as multiple cost functions can explain an agent's behavior. This challenge has garnered increasing attention in the literature, with several works focusing on identifying the set of feasible cost functions that account for the expert's behavior \citep{Metelli2021, Lindner2022}. For our analysis, it is important to note that the work of \cite{Metelli2021, Metelli2023, Lazzati2025} assumes access to a generative-model oracle for both the transition dynamics of the MDP and the expert's policy. A key finding in this body of work is that learning the feasible cost set is inefficient and infeasible for large state spaces, as the sample complexity scales heavily with the size of the state space. \par

In the context of LfD or IL, the literature often adopts the apprenticeship learning (AL) formalism proposed by \cite{Abbeel2004}, which assumes access to a set of expert demonstrations and that the unknown true cost function belongs to a specific class of functions. Consequently, this assumption requires identifying a set of basis functions in advance, which can be a nontrivial task. There are two main classes of cost functions considered: (1) linear combinations of known basis functions called features \citep{Abbeel2004, Syed2007, Ziebart2008} and (2) convex combinations of a set of vectors \citep{Syed2008, Kamoutsi2021}. Building on the AL formalism, \cite{Syed2007} presented a game-theoretic view of AL and solved it using a multiplicative weights algorithm. Later, \cite{Syed2008} proposed a linear programming approach to solve the AL problem without employing IRL or RL as a subroutine. This marked an initial step toward leveraging the tools of mathematical optimization to address the LfD problem. Following this direction, \cite{Kamoutsi2021} introduced a convex-analytic approach to the LfD problem within the AL formalism using a generative-model oracle for the MDP's transitions. They formulated a bilinear min-max problem using Lagrangian duality and solved it using stochastic mirror descent (SMD). Moreover, \cite{Ho2016} solved the LfD problem for a general class of cost functions, solving an entropy-regularized-min-max problem, and connected their approach with generative adversarial networks. Nevertheless, this min-max problem is nonconvex-nonconcave, limiting its theoretical understanding. \par

\textbf{Contribution.} We revisit IO's tools for IRL and present the inverse problem for estimating the cost function of an MDP given an optimal policy \eqref{eq:IO-IRL} \citep{Erkin2010, Chan2023} and incorporate prior beliefs on the structure of the cost function \eqref{eq:IRL-IOproj}. Through this approach, we revisit the proof that the inverse-feasible set of this inverse problem is equivalent to the dual problem derived by \cite{Kamoutsi2021} and extend it to a general class of cost functions. Furthermore, we relax the assumption of expert optimality in \eqref{eq:IRL-IOproj}, propose a new problem tailored for suboptimal experts \eqref{eq:IO-AL}, and characterize its optimal solution. Using Lagrangian duality, we derive a regularized-convex-concave-min-max problem \eqref{eq:newproblem} for solving \eqref{eq:IO-AL}, which reduces to previous formulations \citep{Kamoutsi2021} when the regularization term is null. Additionally, we introduce Algorithm \ref{alg:our_alg} (SMD-RLfD) by showing that the stochastic mirror descent algorithm proposed in \cite{Jin2020} to solve $\ell_\infty$-$\ell_1$ games naturally adapts to our problem, provide theoretical convergence bounds, and establish a connection between the output of SMD-RLfD and the optimal solution of \eqref{eq:IO-AL}. \par

\subsection{Notation}
We denote the cardinality of a set $\St$ as \( |\St| \). The probability simplex over $|\St|$ elements is given by $\Delta^{|\St|} = \left\{ \vx \in \R^{|\St|} \mid \evx_i \geq 0, \sum_{i=1}^{|\St|} \evx_i = 1 \right\}$ and boxes are denoted by $\sB_b^{n} = \{\vx\in\R^n \mid \Vert \vx \Vert_\infty \leq b\}$. The canonical basis vectors are denoted by $\ve^{i} = \{\vx \in \R^n \mid \eve_i = 1 \text{ and } \eve_j=0 \: \forall j\neq i\}$. The Kronecker delta is denoted by $\delta_{ij}$. Component-wise multiplication between two vectors $\vx,\vy$ is denoted by $\vx\circ\vy$. The nonnegative real numbers are denoted by $\R_+$. The superscript $(\cdot)^*$ indicates optimality, for instance $\rho_\vc^*$ denotes the optimal discounted expected cost of an MDP. Finally, we use the same notation as \cite{Kamoutsi2021}, with minor modifications, to highlight the strong connections to their work.

\section{Preliminaries and problem formulation} \label{sec:preliminaries}
In this section, we establish the foundational concepts necessary for our study. We begin by defining the structure of infinite-horizon MDPs. We then introduce the IRL problem and discuss the LfD problem through the AL formalism. Finally, we provide an overview of IO and formally state our problem.

\subsection{Infinite Horizon MDPs}
A finite MDP is defined as a tuple $(\St,\Ac, P, \vnu_0, \vc, \gamma)$ where $\St$ is a finite state space, $\Ac$ a finite action space, and $P$ is a transition law $P = (P(\cdot \:|\: s,a))_{s,a}$ where $P(\cdot\:|\: s,a) \in \Delta^{|\St|}$. The initial state distribution is denoted by $\vnu_0 \in \Delta^{|\St|}$ and satisfies $\vnu_0(s) > 0$ for every $s\in\St$. The cost vector is defined as $\vc \in \C \subseteq \sB_1^{|\St||\Ac|}$  and the discount factor is given by $\gamma \in (0,1)$. \par

A \textit{stationary Markov policy} is a collection of distributions, indexed by $s\in\St$ and denoted by $(\pi(\cdot\:|\:s))_{s\in\St}$, where $\pi(\cdot\:|\:s) \in \Delta^{|\Ac|}$. We denote the space of stationary Markov policies by $\Pi_0$. In this framework, the MDP begins with an initial state $s_0\sim \vnu_0$. At each time-step $t$, where the current state is $s_t$: the agent selects an action according to $a_t\sim\pi(\cdot\:|\: s_t)$, the next state is determined by the transition law $s_{t+1}\sim P(\cdot\:|\: s_t, a_t)$, and a cost $\evc(s_t,a_t)$ is incurred. Note that in an infinite horizon model, the process continues indefinitely.\par 

The \textit{normalized value function} $\vV^\pi_{\vc} \in \R^{|\St|}$ of a policy $\pi$ given a cost $\vc$ is given by
\begin{align*}
    \vV^\pi_{\vc}(s) = (1-\gamma)\E^\pi_s \left[ \sum^\infty_{t=0} \gamma^t \evc(s_t,a_t)\right]
\end{align*}
where $\E^\pi_s[\cdot]$ denotes the expectation with respect to the trajectories generated by $\pi$ when starting from the state $s$. While we sometimes refer to it as a function, any function from a finite set to the reals can be naturally represented as a vector. The fundamental goal of RL is to find a policy $\pi$ such that the process $((s_t,a_t))_t$ minimizes the \textit{total expected cost}:
\begin{align*}
      \rho^*_\vc &= \min_{\pi\in\Pi_0} \rho_{\vc}(\pi) \tag{$\text{RL}_\vc$} \label{eq:RLgoal}\\
      &= \min_{\pi\in\Pi_0} (1-\gamma) \E^\pi_{\vnu_0} \left[ \sum\limits_{t=0}^{\infty} \gamma^t \evc(s_t, a_t) \right],
\end{align*}
where $\rho_{\vc}(\pi) = \langle \vnu_0, \vV^\pi_{\vc}\rangle$. Notice that we explicitly highlight the dependence of \eqref{eq:RLgoal} on the cost vector $\vc$. Furthermore, we denote $\vV^*_\vc$ as the value function corresponding to the optimal policy for $\text{RL}_\vc$. \par

The \textit{normalized occupancy measure} $\vmu_\pi \in \Delta^{|\St||\Ac|}$ of a policy $\pi$ is defined as 
\begin{align*}
    \vmu_\pi(s,a) = (1-\gamma)\sum\limits_{t=0}^\infty \gamma^t \sP^\pi_{\vnu_0}[s_t =s,\: a_t = a],
\end{align*}
where $\sP^\pi_{\vnu_0}[\cdot]$ represents the probability of an event when starting from $s\sim\vnu_0$ and following $\pi$. The occupancy measure of a state-action pair can be interpreted as the discounted visitation frequency of the pair when following a particular policy. Hence, we can also write $\rho^*_\vc = \min_{\pi\in\Pi_0}\langle \vmu_\pi, \vc \rangle$.\par

We define the transition matrix $\mP \in \R^{|\St|\times |\St||\Ac|}$ where $\mP_{s',(s,a)} = P(s'\:|\:s,a) $ and the polyhedron $\F = \{\vmu\in\R^{|\St||\Ac|} \mid \mTg\vmu = \vnu_0, \: \vmu\geq \vzero\}$ where $\mT \in \R^{|\St|\times |\St||\Ac|}$, $\mT_{s',(s,a)} = \delta_{s',s} - \gamma \mP_{s',(s,a)}$, and $\mTg = \frac{1}{(1-\gamma)}\mT$. An alternative expression for $\mTg$ that is useful for computing gradient estimators is $\mTg\vmu = \frac{1}{(1-\gamma)}(\mB- \gamma\mP)\vmu$ where $\mB$ is a binary matrix that satisfies $\mB_{s',(s,a)} = 1$ if $s'=s$ and $\mB_{s',(s,a)} = 0$ otherwise.\par

\begin{proposition}[\cite{Puterman1994}] \label{prop:puterman}
It holds that, $\F = \{\vmu_{\pi} \mid \pi \in \Pi_0\}$. 
For every \( \pi \in \Pi_0 \), we have that \( \vmu_{\pi} \in \F \). Moreover, for every feasible solution \( \vmu \in \F \), we can obtain a stationary Markov policy \( \pi_{\vmu} \in \Pi_0 \) by $
\pi_{\vmu}(a \mid x) = \frac{\evmu(x,a)}{\sum_{a' \in \Ac} \evmu(x,a')}$.
Then, the induced occupancy measure is exactly \( \mu \).
\end{proposition}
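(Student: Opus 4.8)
The plan is to prove the set equality $\F = \{\vmu_\pi \mid \pi\in\Pi_0\}$ via its two inclusions, reading the defining constraint $\mTg\vmu = \vnu_0$ as a normalized Bellman flow (continuity) equation for discounted visitation frequencies. Throughout I would abbreviate the time-$t$ state marginal by $d_t^\pi(s) = \sP^\pi_{\vnu_0}[s_t=s]$, so that the state-marginal of an occupancy measure reads $\sum_a \vmu_\pi(s,a) = (1-\gamma)\sum_{t\geq 0}\gamma^t d_t^\pi(s)$.

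For the forward inclusion, $\vmu_\pi\geq\vzero$ is immediate because every summand $\gamma^t\sP^\pi_{\vnu_0}[s_t=s,a_t=a]$ is nonnegative. To verify $\mTg\vmu_\pi=\vnu_0$, equivalently $\mT\vmu_\pi=(1-\gamma)\vnu_0$, I would compute the $s'$-component of $\mT\vmu_\pi$ as $\sum_a\vmu_\pi(s',a) - \gamma\sum_{s,a}P(s'\mid s,a)\vmu_\pi(s,a)$ and invoke the one-step identity $\sum_{s,a}P(s'\mid s,a)\sP^\pi_{\vnu_0}[s_t=s,a_t=a] = d_{t+1}^\pi(s')$. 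Substituting the series expansions, the subtracted term reindexes to $(1-\gamma)\sum_{k\geq 1}\gamma^k d_k^\pi(s')$, so the difference telescopes to $(1-\gamma)d_0^\pi(s') = (1-\gamma)\vnu_0(s')$, which is exactly the claim.

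For the reverse inclusion, fix $\vmu\in\F$ and set $\evx(s)=\sum_a\evmu(s,a)$ for the state-marginal. The first point to check is that the normalization defining $\pi_\vmu$ is well posed: the flow equation gives $\evx(s) = (1-\gamma)\vnu_0(s) + \gamma\sum_{s'',a}P(s\mid s'',a)\evmu(s'',a) \geq (1-\gamma)\vnu_0(s) > 0$, using $\vmu\geq\vzero$ and $\vnu_0(s)>0$ for every state, so no denominator vanishes. I would then let $\tilde\evx(s)=\sum_a\vmu_{\pi_\vmu}(s,a)$ be the state-marginal of the occupancy measure induced by $\pi_\vmu$; by the forward inclusion already proved, $\vmu_{\pi_\vmu}\in\F$, so both $\evx$ and $\tilde\evx$ satisfy the same linear system $\evx = (1-\gamma)\vnu_0 + \gamma (P^{\pi_\vmu})^\top \evx$, where $P^{\pi_\vmu}(s'\mid s)=\sum_a\pi_\vmu(a\mid s)P(s'\mid s,a)$ is the induced state-transition kernel.

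The crux, and the step I expect to be the main obstacle, is uniqueness of this fixed point: since $P^{\pi_\vmu}$ is row-stochastic and $\gamma\in(0,1)$, the spectral radius of $\gamma P^{\pi_\vmu}$ is strictly below one, so $I-\gamma(P^{\pi_\vmu})^\top$ is invertible and hence $\evx=\tilde\evx$. Finally, because $\pi_\vmu$ is stationary Markov, the action is conditionally independent of the past given the current state, which yields $\vmu_{\pi_\vmu}(s,a)=\pi_\vmu(a\mid s)\tilde\evx(s)$, whereas by construction $\evmu(s,a)=\pi_\vmu(a\mid s)\evx(s)$. Combining these with $\evx=\tilde\evx$ gives $\vmu_{\pi_\vmu}=\vmu$, establishing both the recovery claim and the set equality.
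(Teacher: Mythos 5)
Your proof is correct and complete: the forward inclusion via the telescoping Bellman-flow identity, the positivity of the normalizer from $\vnu_0(s)>0$, and the uniqueness of the fixed point of $\evx = (1-\gamma)\vnu_0 + \gamma (P^{\pi_\vmu})^\top \evx$ via invertibility of $I-\gamma(P^{\pi_\vmu})^\top$ together give both inclusions. The paper itself supplies no proof and simply cites \cite{Puterman1994}; your argument is precisely the standard one from that reference, so there is nothing to contrast.
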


Proposition \ref{prop:puterman} provides a correspondence between the elements of $\F$ and occupancy measures given by stationary Markov policies. Note that the condition $\mTg\vmu = \vnu_0$ can be interpreted as the Markov property of the process under $\sP^\pi_{\vnu_0}[\cdot]$. Hence, the MDP linear programming approach consists of solving the \eqref{eq:MDP-P} problem
\begin{align*}
    \rho^*_\vc=\min\limits_{\vmu \in \Delta^{|\St||\Ac|}} &\:  \langle \vmu, \vc \rangle \\
    \text{s.t}\quad &\mTg \vmu = \vnu_0, \tag{$\text{MDP-P}_\vc$} \label{eq:MDP-P}\\
    &\vmu \geq \vzero.
\end{align*}
Note that the constraints enforce that $\vmu \in \mathcal{F}$, therefore an optimal $\vmu$ corresponds to an optimal policy. The corresponding dual problem is given by
\begin{align}
   \max_{\vu \in \R^{|\St|}}\{ \: \langle \vnu_0, \vu \rangle \mid 
     \vc - \mTg^\top \vu \geq \vzero\}, \tag{$\text{MDP-D}_\vc$} \label{eq:MDP-D}
\end{align}
where an optimal $\vu$ represents the optimal value function $\vV^*_\vc$.

\subsection{Inverse reinforcement learning}\label{subsection:IRL}
The IRL problem aims to uncover the true cost function that an expert agent is optimizing given some information about the expert's behavior, for example: sample trajectories, its real policy, or an estimate of its policy \citep{Ng2000}. Formally, given an MDP without a cost vector and with access to information about an expert's policy $\pi_E$,  the IRL problem is defined by the tuple $(\St,\Ac,P, \vnu_0, \pi_{E},\gamma)$ and the goal is to determine a cost vector $\vc$ for which the policy $\pi_E$ is optimal for \eqref{eq:RLgoal} within the MDP $(\St,\Ac,P, \vnu_0, \vc,\gamma)$. \par 

\subsection{Learning from demonstrations and the apprenticeship learning formalism}
The goal of learning from demonstrations is to learn a policy that matches or outperforms the expert's policy $\pi_E$ for an unknown true cost vector $\vc_{\text{true}}$. The apprenticeship learning formalism \citep{Abbeel2004} has been routinely used in literature for addressing the LfD problem. The AL formalism assumes that the unknown true cost function $\vc_{\text{true}}$ belongs to a class of functions $\C$ and searches for a policy that solves the following min-max problem
\begin{align*}
    \beta^* := \min_{\pi\in\Pi_0}\max_{\vc\in \C}  \langle \vmu_{\pi}, \vc \rangle - \langle \vmu_{\pi_E}, \vc \rangle = \min_{\pi\in\Pi_0}\max_{\vc\in \C} \langle \vmu_{\pi} - \vmu_{\pi_E}, \vc \rangle, \tag{$\text{LfD}_{\pi_E}$} \label{eq:lfd}
\end{align*}
An optimal solution to \eqref{eq:lfd} is called an apprentice policy $\pi_A$ and satisfies $$\langle \vmu_{\pi_A}, \vc_{\text{true}} \rangle \leq \langle \vmu_{\pi_E}, \vc_{\text{true}} \rangle + \beta^*.$$\par

In optimization-focused approaches to LfD, the $\vc_{\text{true}}$ is assumed to belong to a convex hull \begin{align*}
\mathcal{C} = \mathcal{C}_{\text{conv}} := \left\{\vc_{\vw} := \sum\limits_{i=1}^{n_c} \evw_i\vc_i \: \Big| \: \evw_i \geq 0, \sum\limits_{i=1}^{n_c} \evw_i = 1\right\}
\end{align*}
\citep{Syed2008, Kamoutsi2021} of a set of vectors $\{\vc_i\}_{i=1}^{n_c} \subseteq \R^{|\St||\Ac|}$ where $\Vert \vc_i \Vert_\infty \leq 1$ for each $i=1,...,n_c$. It is assumed that this set of vectors is known; however, in practice, an initial estimation step is required to determine this set, a task that is generally nontrivial.

\subsection{A primer on inverse optimization} \label{sec:primerIO}
Inverse optimization is a mathematical framework that fits optimization models to decision data. Given an observed optimal solution, it seeks to learn the objectives and constraints of the underlying model. For example, IRL can be thought of as an inverse optimization problem, as it searches for the cost function that an optimal agent is optimizing. \par 

Consider the general forward optimization problem \eqref{eq:FOP} for a given parameter $\theta$ in the parameter space $\Gamma$:
\begin{align*}
    \min\limits_{\vx \in \R^n}\{f(&\vx, \theta) \:|\: \vx\in X(\theta)\}, \tag{$\text{FOP}_{\theta}$} \label{eq:FOP}
\end{align*}
where $X(\theta)$ denotes the feasible set for $\vx$, which depends on $\theta$. Given an optimal solution $\hat{\vx}$, the inverse optimization problem consists of finding a $\theta^*\in\Gamma$ that makes $\hat{\vx}$ optimal for \eqref{eq:FOP} with $\theta = \theta^*$ and is optimal in some way. For this purpose, define the optimal solution set  
$X^{\text{opt}}(\theta) := \argmin\limits_{\vx}\{f(\vx,\theta) \: | \: \vx \in X(\theta)\}$
and the inverse-feasible set
$\Theta^{\text{inv}}(\hat{\vx}) := \{\theta \in\Gamma | \: \hat{\vx}\in X^{\text{opt}}(\theta)\}$. Naturally, we want to find a $\theta \in \Theta^{\text{inv}}(\hat{\vx})$, but rather than selecting an arbitrary $\theta$ from this set, we aim for one that minimizes a certain criterion. \par

Hence, the inverse optimization problem \eqref{eq:INVOPT} is defined as:
\begin{align*}
    \min\limits_{\theta \in \Gamma}\{F(\theta) \mid \theta &\in \Theta^{\text{inv}}(\hat{\vx})\}, \tag{\text{INV-OPT}} \label{eq:INVOPT}
\end{align*}
where $F$ should convey information about the quality of $\theta$ given some prior knowledge, and the search space $\Gamma$ should be appropriately chosen for each instance of the problem.

\subsection{Our problem}

Suppose that the environment is modeled as an MDP where only the state space $\St$, action space $\Ac$, and discount factor $\gamma$ are known. We assume that the learner has access to a generative-model oracle for the MDP's transition dynamics, as well as a generative-model oracle of an expert's occupancy measure $\vmu_{\pi_E}$ (not necessarily optimal), and a prior belief $\hat{\vc}$ of the cost function the expert is trying to optimize for. We aim to learn a cost function $\vc_A$ and an apprentice policy $\pi_A$, such that $\pi_A$ is optimal for $\text{RL}_{\vc_A}$, and $\vc_A$ remains close to the prior $\hat{\vc}$, while $\pi_A$ performs at least as well as $\pi_E$ under $\vc_A$ (see \eqref{eq:IO-AL} below). 

To address this problem, we use Algorithm \ref{alg:our_alg} (SMD-RLfD), which employs the expert and transition oracles alongside the proxy cost vector $\hat{\vc}$ to generate expected $\epsilon$-approximate solutions of \eqref{eq:newproblem}, which is the unconstrained formulation of \eqref{eq:IO-AL}. Hereafter, we refer to our framework as the complete process of fixing a proxy cost vector, accessing the necessary oracles, and applying SMD-RLfD to solve \eqref{eq:newproblem} and obtain the resulting learned policy and cost vector.

\section{The inverse optimization viewpoint}
In this section, we show how the IRL problem has been addressed with the tools of IO and establish the equivalence between the AL  formulation presented in \citet{Kamoutsi2021} and the inverse-feasible set of problem \eqref{eq:IO-IRL}. Afterwards, we demonstrate how prior beliefs about the structure of the cost vector can be incorporated into the IRL formulation \eqref{eq:IRL-IOproj} and present our primary contribution: the formulation of problem \eqref{eq:IO-AL} and the characterization of its optimal solution.

\subsection{IRL via IO}
We will use the ideas of Subsection \ref{sec:primerIO} applied to the forward optimization problem $\text{MDP-P}_{\vc_{\text{true}}}$, where the parameter $\theta$ corresponds to the true cost vector $\vc_{\text{true}}$ the expert is optimizing for. Note that we assume the existence of $\vc_{\text{true}}$ because the IRL problem assumes that the expert is optimal for some cost function. Therefore, let us suppose that $\vc_{\text{true}}$ lies in a convex class of cost functions $\C$ and that the expert's policy $\pi_E$ is optimal for $\text{RL}_{\vc_\text{true}}$, which means that its corresponding occupancy measure $\vmu_{\pi_E}$ is optimal for $\text{MDP-P}_{\vc_{\text{true}}}$. The following proposition follows from complementary slackness for linear problems \citep{Bertsimas1997}.
\begin{proposition}[Complementary slackness] \label{prop:complementaryslackness}
    An element $\vmu_{\pi}$ is an optimal solution to \eqref{eq:MDP-P} if and only if there exists a vector $\vu \in \R^{|\St|}$ such that $\vc - \mTg^\top \vu \geq \vzero$ and $\langle \vmu_{\pi}, \vc - \mTg^\top \vu \rangle = 0$.
\end{proposition}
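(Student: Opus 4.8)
The plan is to recognize \eqref{eq:MDP-P} and \eqref{eq:MDP-D} as a primal--dual pair of linear programs and to read off the claimed equivalence from standard LP duality, handling the two implications separately. Throughout I treat $\vmu_\pi$ as an element of $\F$, so that by Proposition \ref{prop:puterman} it is automatically primal feasible (it satisfies $\mTg\vmu_\pi=\vnu_0$ and $\vmu_\pi\ge\vzero$), and I regard the condition $\vc-\mTg^\top\vu\ge\vzero$ as dual feasibility for \eqref{eq:MDP-D}. The key algebraic identity I will use repeatedly is that, for any primal-feasible $\vmu$, the constraint $\mTg\vmu=\vnu_0$ lets me rewrite the dual objective as $\langle\vnu_0,\vu\rangle=\langle\mTg\vmu,\vu\rangle=\langle\vmu,\mTg^\top\vu\rangle$.

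For the forward implication, suppose $\vmu_\pi$ solves \eqref{eq:MDP-P}. The primal feasible region $\F$ is nonempty by Proposition \ref{prop:puterman} and is contained in the compact simplex $\Delta^{|\St||\Ac|}$, so the program is feasible and bounded; hence strong duality holds and \eqref{eq:MDP-D} admits an optimizer $\vu$ with $\langle\vmu_\pi,\vc\rangle=\langle\vnu_0,\vu\rangle$. Dual feasibility gives $\vc-\mTg^\top\vu\ge\vzero$, and combining the identity above with strong duality yields
\begin{align*}
\langle\vmu_\pi,\vc-\mTg^\top\vu\rangle=\langle\vmu_\pi,\vc\rangle-\langle\vnu_0,\vu\rangle=0,
\end{align*}
which is the asserted complementary slackness condition.

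For the reverse implication, suppose a $\vu$ exists with $\vc-\mTg^\top\vu\ge\vzero$ and $\langle\vmu_\pi,\vc-\mTg^\top\vu\rangle=0$. This is a weak-duality sandwich and requires no appeal to strong duality. For an arbitrary primal-feasible $\vmu'$, nonnegativity of $\vmu'$ and of $\vc-\mTg^\top\vu$ give $\langle\vmu',\vc\rangle\ge\langle\vmu',\mTg^\top\vu\rangle=\langle\vnu_0,\vu\rangle$, so the dual value $\langle\vnu_0,\vu\rangle$ lower-bounds the primal objective over all feasible points. The complementary-slackness hypothesis, together with the same identity, shows $\langle\vmu_\pi,\vc\rangle=\langle\vmu_\pi,\mTg^\top\vu\rangle=\langle\vnu_0,\vu\rangle$, so $\vmu_\pi$ attains this lower bound and is therefore optimal.

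I do not anticipate a genuine obstacle here, since this is the textbook complementary-slackness characterization; the only points requiring care are justifying that strong duality applies in the forward direction (which I obtain from nonemptiness of $\F$ and compactness of the simplex) and being consistent about rewriting $\langle\vnu_0,\vu\rangle$ through the primal equality constraint. I will also remark that the scalar condition $\langle\vmu_\pi,\vc-\mTg^\top\vu\rangle=0$ is equivalent to the per-coordinate conditions $\evmu(s,a)\,(\vc-\mTg^\top\vu)(s,a)=0$, because both factors are entrywise nonnegative and hence their inner product vanishes exactly when every product does.
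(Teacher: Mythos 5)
Your proof is correct and matches the paper's approach: the paper simply cites this as the standard complementary slackness characterization for linear programs (Bertsimas \& Tsitsiklis), and your argument is precisely that textbook derivation, with the forward direction via strong duality and the reverse via the weak-duality sandwich. The only substantive care point, rewriting $\langle\vnu_0,\vu\rangle$ as $\langle\vmu,\mTg^\top\vu\rangle$ through primal feasibility, is handled correctly.
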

Remember that $\vu$ is the dual variable for the equality constraint in \eqref{eq:MDP-P} and represents the value function. Therefore, the inverse-feasible set for $\vmu_{\pi_E}$ consists of the cost functions in $\C$ for which such a $\vu$ exists
\begin{align*}
    \Theta^{\text{inv}}(\vmu_{\pi_E}) := \{\vc \in \C \: | \: \exists \vu \in \R^{|\St|} \: : \: \vc - \mTg^\top \vu \geq \vzero, \: \langle \vmu_{\pi_E}, \vc - \mTg^\top \vu \rangle = 0\}.
\end{align*}
Substituting for the inverse-feasible set in \eqref{eq:INVOPT} and choosing an appropriate function $F$ for comparing cost vectors, we arrive to the inverse reinforcement learning problem through inverse optimization \citep{Erkin2010, Chan2023}
\begin{align*}
    \min\limits_{\vc\in \C, \vu \in \R^{|\St|} } \quad F(\vc)&
     \\
    \text{s.t} \quad \quad \quad  \vc - \mTg^\top \vu &\geq \vzero, \tag{IRL-IO} \label{eq:IO-IRL}\\
    \langle\vmu_{\pi_E}, \vc - \mTg^\top \vu  \rangle &= 0.
\end{align*}

\subsection{Connections to LfD and the AL formalism} \label{sec:Lfd-AL}
\cite{Kamoutsi2021} considered the LfD problem under the assumption that the true cost function $\vc_{\text{true}}$ belongs to the convex hull $\C_{\text{conv}}$ of a given set of vectors. By applying an epigraphic transformation to \eqref{eq:lfd}, where the validity of this transformation depends on the convex hull assumption on $\vc_{\text{true}}$, and deriving its dual, they arrived at the optimization problem \eqref{eq:dualAL}:
\begin{align*}
    \max_{\vc, \vu} \{ \langle \vmu_{\pi_E}, \mTg^\top \vu - \vc \rangle \: | \: \vc\in\C_{\text{conv}}, \vc - \mTg^\top \vu \geq \vzero\}. \tag{$\text{D}_{\pi_E}$} \label{eq:dualAL}
\end{align*}
They focus on this problem and optimize its unconstrained version derived through Lagrangian duality, where the dual variable corresponding to the constraint $\vc - \mTg^\top \vu \geq \vzero$ represents the apprentice state-action visitation probability.
\begin{theorem}[cf. Proposition 2 in \cite{Kamoutsi2021}]\label{theorem:IO-IRL}
    Suppose that $\vmu_{\pi_E}$ is an optimal solution for \eqref{eq:MDP-P} where $\vc \in \C$. Then the following equality holds:
    $$\Theta^{\text{inv}}(\vmu_{\pi_E}) = \Pi_1\left( \argmax\limits_{(\vc,\vu)} \{ \langle \vmu_{\pi_E}, \mTg^\top \vu - \vc \rangle \: | \: \vc \in \C, \: \vc - \mTg^\top\vu \geq 0 \}\right)$$
    where $\Pi_1$ denotes the projection in the first component.
\end{theorem}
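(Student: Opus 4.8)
The plan is to exploit the fact that the objective of \eqref{eq:dualAL} is sign-definite on its feasible region, so that its maximizers are precisely the points attaining value zero, and that these coincide with the pairs $(\vc,\vu)$ that certify membership of $\vc$ in the inverse-feasible set. I would write the objective as $\langle \vmu_{\pi_E}, \mTg^\top \vu - \vc\rangle = -\langle \vmu_{\pi_E}, \vc - \mTg^\top \vu\rangle$. First I would observe that on the feasible set the constraint $\vc - \mTg^\top \vu \geq \vzero$ together with $\vmu_{\pi_E}\geq\vzero$ (it is an occupancy measure, hence lies in $\Delta^{|\St||\Ac|}$) forces $\langle \vmu_{\pi_E}, \vc - \mTg^\top \vu\rangle \geq 0$, and therefore the objective is bounded above by $0$ throughout the feasible region.

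Next I would show the bound is tight and attained. By hypothesis $\vmu_{\pi_E}$ is optimal for \eqref{eq:MDP-P} with some $\vc\in\C$, so Proposition \ref{prop:complementaryslackness} supplies a $\vu$ with $\vc - \mTg^\top \vu \geq \vzero$ and $\langle \vmu_{\pi_E}, \vc - \mTg^\top \vu\rangle = 0$. This pair $(\vc,\vu)$ is feasible for \eqref{eq:dualAL} and attains objective value $0$, so the optimal value of \eqref{eq:dualAL} is exactly $0$ and the $\argmax$ is nonempty. Consequently the $\argmax$ coincides with the set of feasible pairs on which the objective vanishes, namely $\{(\vc,\vu) \mid \vc\in\C,\ \vc - \mTg^\top\vu \geq \vzero,\ \langle \vmu_{\pi_E}, \vc - \mTg^\top\vu\rangle = 0\}$.

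Finally I would match the two sets by a double inclusion that, once the preceding step is in place, is essentially a restatement of the definitions. For the inclusion $\subseteq$, any $\vc\in\Theta^{\text{inv}}(\vmu_{\pi_E})$ comes equipped, by the very definition of the inverse-feasible set, with a $\vu$ making $(\vc,\vu)$ feasible with vanishing objective; hence $(\vc,\vu)$ lies in the $\argmax$ and $\vc\in\Pi_1(\argmax)$. For $\supseteq$, any $\vc\in\Pi_1(\argmax)$ has a witness $\vu$ with $(\vc,\vu)$ feasible and objective zero, which are exactly the defining conditions of $\Theta^{\text{inv}}(\vmu_{\pi_E})$. I expect the only genuine subtlety, rather than an obstacle, to be the role of the optimality hypothesis: it is precisely what guarantees the optimal value equals $0$ and is attained, so that the projection of the $\argmax$ lands on the zero-level set; without it the maximum could be strictly negative, the inverse-feasible set empty, and the claimed equality would fail. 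Everything else reduces to the nonnegativity argument and complementary slackness.
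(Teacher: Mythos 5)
Your proof is correct and follows the same route the paper intends (the paper defers the argument to \cite{Kamoutsi2021} and the definition of $\Theta^{\text{inv}}$): the objective of \eqref{eq:dualAL} is nonpositive on the feasible set by nonnegativity of $\vmu_{\pi_E}$ and of $\vc-\mTg^\top\vu$, the value $0$ is attained via Proposition~\ref{prop:complementaryslackness} under the optimality hypothesis, and the zero-level set of the objective projects exactly onto the inverse-feasible set. Your closing remark on the role of the optimality hypothesis is also the right observation, and is consistent with the paper's note that \eqref{eq:IRL-IOproj} becomes infeasible for suboptimal experts.
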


This implies that the dual problem \eqref{eq:dualAL} serves as an alternative representation of the inverse-feasible set $\Theta^{\text{inv}}(\vmu_{\pi_E})$. In contrast, in problem \eqref{eq:IO-IRL} we choose an element within the inverse-feasible set that minimizes $F$. In this sense, under the assumption of expert's optimality and that $\vc_{\text{true}} \in \C_{\text{conv}}$, the AL formalism finds an arbitrary element of the inverse-feasible set, whereas \eqref{eq:IO-IRL} has a criterion for searching within this space and considers a general convex class of cost functions $\C$.

\subsection{Incorporating prior beliefs}

Suppose we are given a proxy cost vector $\hat{\vc}$ that reflects our prior beliefs about the structure of the true cost vector, which are not necessarily accurate. Leveraging this information, we aim to guide the search within the inverse feasible set in problem \eqref{eq:IO-IRL}. To this end, we project $\hat{\vc}$ onto $\Theta^{\text{inv}}(\vmu_{\pi_E})$ by solving the following optimization problem:
\begin{align}
    \min\limits_{\vc\in \C, \vu \in \R^{|\St|} } \quad \Vert \vc - \hat{\vc}\Vert^2_2& \nonumber
     \\
    \text{s.t} \quad \quad \quad  \vc - \mTg^\top \vu &\geq \vzero,\tag{$\text{IRL-IO}_{\hat{\vc}}$} \label{eq:IRL-IOproj}\\
    \langle\vmu_{\pi_E}, \vc - \mTg^\top \vu  \rangle &= 0 \nonumber .
\end{align}
Figure \ref{fig:illustration_projection} illustrates the setting for problem \eqref{eq:IRL-IOproj}. The big polyhedral region represents the set of all occupancy measures $\F$. In particular, if we assume that the expert is optimal and the induced policy is deterministic, then $\vmu_{\pi_E}$ is a vertex of $\F$ and the true cost vector $\vc_{\text{true}}$, in green, is within the inverse-feasible set $\Theta^{\text{inv}}(\vmu_{\pi_E})$. The proxy cost vector $\hat{\vc}$, in red, is not necessarily inside the inverse-feasible set and will be projected onto the inverse-feasible set by solving Problem \eqref{eq:IRL-IOproj}. This yields the cost vector closest to our prior belief that satisfies the expert's optimality conditions.
\begin{figure}[ht]
    \centering
    \begin{tikzpicture}
        \draw[-{Stealth},red] (0,0) -- (1,0.2); 
        \draw[red] (1,0.0) node [label={}] {\small{$\hat{\vc}$}};
        
        \draw[-{Stealth},OliveGreen] (0,0) -- (0.45,0.5); 
        \draw[OliveGreen] (0.3,0.6) node [label={}] {\small{$c$}};
        
        \draw (0,0)--(0,1.7);
        \draw (0,0)--(2.3,1.2);
        \draw (0,0.92)--(0.5,0.92);
        \draw (0.70,0.35)--(0.5,0.92);
        
        \draw[smooth,tension=1, -{Stealth}] plot coordinates{ (0.65,1.45) (0.45,1.3) (0.35,0.8) };
        
        \draw (1.5,1.5) node [label={}] {\small{$\Theta^{\text{\tiny{inv}}}(\vmu_{\pi_{\text{\tiny{E}}}})$}};
        
        \draw[Fuchsia] (-3.0,0)--(0,0) ;
        
        \draw[Fuchsia] (0,0)--(0.6,-1) ;
        \draw[Fuchsia] (-0.3,-2.2)--(0.6,-1) ;
        
        \draw (0,0) node [label={}] {\textbullet};
        \draw (-0.3,-0.3) node [label={}] {\small{$\vmu_{\pi_{\text{\tiny{E}}}}$}};
        \draw[Fuchsia] (-1,-1) node [label={}] {\small{$\mathcal{F}$}};
        
    \end{tikzpicture}
    \caption{Illustration of \eqref{eq:IRL-IOproj}.}
    \label{fig:illustration_projection}
\end{figure}

Since we are still selecting an element from the inverse-feasible set, we can derive results similar to those presented by \cite{Kamoutsi2021} for solutions to Problem \eqref{eq:IRL-IOproj} under the assumption of expert optimality. The following corollary follows directly from the proof of Theorem \ref{theorem:IO-IRL}. 
\begin{corollary}[Optimal expert] \label{cor:optimalExpert}
    Assume that $\pi_E$ is optimal for \eqref{eq:RLgoal} with $\vc\in\C$. A pair $(\vc_A,\vu_A)$ is optimal for Problem \eqref{eq:IRL-IOproj} if and only if $\pi_E$ is optimal for \eqref{eq:RLgoal} with $\vc=\vc_A$ and $\vu_A = \vV_{\vc_A}^*$. In particular, the cost vector $\vc_A$ is the projection of $\hat{\vc}$ onto the inverse feasible set, and $\pi_E$ is optimal for $\vc_A$.
\end{corollary}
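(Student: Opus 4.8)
The plan is to read off the two constraints of \eqref{eq:IRL-IOproj} as exactly the complementary-slackness system of Proposition \ref{prop:complementaryslackness}, so that (after projecting out $\vu$) the feasible set of \eqref{eq:IRL-IOproj} coincides with the inverse-feasible set $\Theta^{\text{inv}}(\vmu_{\pi_E})$ already analyzed in the proof of Theorem \ref{theorem:IO-IRL}. Indeed $\vc - \mTg^\top\vu \geq \vzero$ together with $\langle \vmu_{\pi_E}, \vc - \mTg^\top\vu\rangle = 0$ are precisely dual feasibility and complementary slackness for the pair \eqref{eq:MDP-P}--\eqref{eq:MDP-D}. Since the objective $\Vert\vc-\hat{\vc}\Vert_2^2$ does not involve $\vu$, the whole corollary reduces to (a) identifying, for each feasible $\vc$, the admissible $\vu$, and (b) recognizing what minimizing $\Vert\vc-\hat{\vc}\Vert_2^2$ over feasible $\vc$ means geometrically.

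For the forward (``only if'') direction I would start from an optimal, hence feasible, pair $(\vc_A,\vu_A)$. Feasibility says that $\vmu_{\pi_E}$ (primal feasible by Proposition \ref{prop:puterman}) and $\vu_A$ (dual feasible) satisfy complementary slackness, so Proposition \ref{prop:complementaryslackness} gives that $\vmu_{\pi_E}$ is optimal for \eqref{eq:MDP-P} with cost $\vc_A$, i.e.\ $\pi_E$ is optimal for \eqref{eq:RLgoal} with $\vc=\vc_A$, while simultaneously $\vu_A$ is optimal for \eqref{eq:MDP-D}. The crucial remaining step is to upgrade ``dual optimal'' to ``$\vu_A=\vV^*_{\vc_A}$'' by showing \eqref{eq:MDP-D} has a unique maximizer. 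I would rewrite $\vc_A-\mTg^\top\vu\geq\vzero$ as the Bellman sub-solution inequality $\vu\leq T_{\vc_A}\vu$, where $T_{\vc_A}$ is the $\gamma$-contraction whose unique fixed point is $\vV^*_{\vc_A}$; monotonicity then yields $\vu\leq T_{\vc_A}\vu\leq T_{\vc_A}^2\vu\leq\cdots\to\vV^*_{\vc_A}$, so every dual-feasible $\vu$ satisfies $\vu\leq\vV^*_{\vc_A}$ componentwise, and because $\vnu_0>\vzero$ the linear objective $\langle\vnu_0,\vu\rangle$ is maximized uniquely at $\vu=\vV^*_{\vc_A}$. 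Hence feasibility alone already forces $\vu_A=\vV^*_{\vc_A}$.

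The remaining direction and the ``in particular'' clause then come from the objective. Because $\vu$ is pinned down by $\vc$, problem \eqref{eq:IRL-IOproj} is equivalent to $\min_{\vc\in\Theta^{\text{inv}}(\vmu_{\pi_E})}\Vert\vc-\hat{\vc}\Vert_2^2$, i.e.\ the Euclidean projection of $\hat{\vc}$ onto $\Theta^{\text{inv}}(\vmu_{\pi_E})$. I would note that $\Theta^{\text{inv}}(\vmu_{\pi_E})$ is convex and closed, being the image under the linear map $\Pi_1$ of the convex set cut out by the linear constraints together with the convex class $\C$, and that it is nonempty by the standing assumption that $\pi_E$ is optimal for some $\vc\in\C$; hence the projection exists and is unique. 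This identifies the optimal $\vc_A$ as that projection which, paired with $\vu_A=\vV^*_{\vc_A}$, closes the characterization.

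I expect the main obstacle to be the uniqueness step $\vu_A=\vV^*_{\vc_A}$: complementary slackness by itself only forces $\vc_A-\mTg^\top\vu_A$ to vanish on the support of $\vmu_{\pi_E}$ and says nothing about $\vu_A$ on states the expert never visits, so $\vu_A$ cannot be recovered locally. It is only the global duality/contraction argument, leaning on $\vnu_0>\vzero$ to make the dual objective strictly favor the largest feasible $\vu$, that pins $\vu_A$ down to the optimal value function on all states. A secondary point requiring care is the literal reading of the stated equivalence: the genuinely nontrivial content is the ``only if'' structural characterization together with the reduction to a projection, while the ``if'' direction is the (easy) statement that this unique projection, paired with its value function, is optimal.
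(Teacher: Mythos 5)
Your proposal is correct and follows essentially the same route the paper intends: the paper's ``proof'' is just the remark that the corollary follows from the proof of Theorem~\ref{theorem:IO-IRL}, i.e.\ that the constraints of \eqref{eq:IRL-IOproj} are exactly the complementary-slackness system of Proposition~\ref{prop:complementaryslackness}, so the feasible $\vc$'s form $\Theta^{\text{inv}}(\vmu_{\pi_E})$, the admissible $\vu$ is pinned to $\vV^*_{\vc}$ by dual uniqueness (using $\vnu_0>\vzero$), and the objective then realizes the Euclidean projection of $\hat{\vc}$ onto that set. Your write-up simply supplies the details (notably the Bellman sub-solution/contraction argument for uniqueness of the dual optimum) that the paper leaves implicit, and your caveat about the literal reading of the ``if and only if'' is apt.
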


It is important to note that when the expert is suboptimal, problem \eqref{eq:IRL-IOproj} is infeasible, as the complementary slackness equality cannot be satisfied. To account for the possibility of suboptimal expert behavior, we can relax the complementary slackness condition in \eqref{eq:IRL-IOproj}. Weighing the beliefs of the expert's optimality and the quality of the cost function estimate with parameter $\alpha\in\R_{+}$, we arrive to problem \eqref{eq:IO-AL}:
\begin{align*}
\min\limits_{\vc\in \C, \vu \in \R^{|\St|} } \quad \alpha\Vert \vc - \hat{\vc} \Vert^2_2& + \langle \vmu_{\pi_E}, \vc - \mTg^\top \vu \rangle
 \\
\text{s.t} \quad \quad \quad \hspace{7px} \vc - \mTg^\top& \vu \geq \vzero. \tag{$\text{IO-AL}_\alpha$} \label{eq:IO-AL}
\end{align*}

\begin{proposition}[Suboptimal expert] \label{prop:suboptimalexpert}
    A pair $(\vc_A, \vu_A)$ is optimal for \eqref{eq:IO-AL} if and only if the apprentice policy $\pi_A$ is optimal for \eqref{eq:RLgoal} with $\vc=\vc_A$ and $\vu_A = \vV^*_{\vc_A}$. Furthermore, the optimal value corresponds to $\alpha\Vert \vc_A - \hat{\vc} \Vert^2_2 + \rho_{\vc_A}(\pi_E) - \rho_{\vc_A}(\pi_A)$.
\end{proposition}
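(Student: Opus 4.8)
The plan is to eliminate the value-function variable $\vu$ from \eqref{eq:IO-AL} by observing that the inner minimization over $\vu$ is exactly the MDP dual \eqref{eq:MDP-D}, and then to recover the apprentice policy from complementary slackness (Proposition \ref{prop:complementaryslackness}). Since the objective is the sum of a convex quadratic in $\vc$ and a term linear in $(\vc,\vu)$, and the constraint $\vc-\mTg^\top\vu\ge\vzero$ is affine with $\C$ convex, \eqref{eq:IO-AL} is a convex program, so the Karush--Kuhn--Tucker (KKT) conditions are both necessary and sufficient and can be used to settle the biconditional.

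First I would simplify the bilinear term. Because $\vmu_{\pi_E}\in\F$ (Proposition \ref{prop:puterman}) it satisfies $\mTg\vmu_{\pi_E}=\vnu_0$, whence $\langle\vmu_{\pi_E},\vc-\mTg^\top\vu\rangle=\langle\vmu_{\pi_E},\vc\rangle-\langle\mTg\vmu_{\pi_E},\vu\rangle=\rho_\vc(\pi_E)-\langle\vnu_0,\vu\rangle$. The objective of \eqref{eq:IO-AL} thus becomes $\alpha\Vert\vc-\hat\vc\Vert_2^2+\rho_\vc(\pi_E)-\langle\vnu_0,\vu\rangle$, in which $\vu$ enters only through $-\langle\vnu_0,\vu\rangle$ together with the feasibility constraint, while the regularizer and the membership $\vc\in\C$ involve $\vc$ alone.

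Next I would minimize over $\vu$ with $\vc$ held fixed. Minimizing $-\langle\vnu_0,\vu\rangle$ subject to $\mTg^\top\vu\le\vc$ is precisely \eqref{eq:MDP-D}; by strong duality for the MDP linear program \eqref{eq:MDP-P}--\eqref{eq:MDP-D} its optimal value is $\rho^*_\vc$, attained at the optimal value function $\vu=\vV^*_\vc$. Hence an optimal pair necessarily has $\vu_A=\vV^*_{\vc_A}$, and \eqref{eq:IO-AL} reduces to $\min_{\vc\in\C}\big[\alpha\Vert\vc-\hat\vc\Vert_2^2+\rho_\vc(\pi_E)-\rho^*_\vc\big]$. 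Writing $\pi_A$ for an optimal policy of $\text{RL}_{\vc_A}$ gives $\langle\vnu_0,\vu_A\rangle=\rho^*_{\vc_A}=\rho_{\vc_A}(\pi_A)$, so the optimal value equals $\alpha\Vert\vc_A-\hat\vc\Vert_2^2+\rho_{\vc_A}(\pi_E)-\rho_{\vc_A}(\pi_A)$, as claimed.

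For the biconditional I would read the remaining structure from the KKT conditions, in which the multiplier $\vmu$ of the constraint $\vc-\mTg^\top\vu\ge\vzero$ is the quantity the paper identifies with the apprentice occupancy measure. Stationarity in $\vu$ gives $\mTg\vmu=\mTg\vmu_{\pi_E}=\vnu_0$, so $\vmu\ge\vzero$ forces $\vmu\in\F$, i.e. $\vmu=\vmu_{\pi_A}$ for a policy $\pi_A$ by Proposition \ref{prop:puterman}; complementary slackness $\langle\vmu_{\pi_A},\vc_A-\mTg^\top\vu_A\rangle=0$ with feasibility then makes $\vmu_{\pi_A}$ optimal for \eqref{eq:MDP-P} at $\vc_A$ by Proposition \ref{prop:complementaryslackness}, so $\pi_A$ is optimal for $\text{RL}_{\vc_A}$ and $\vu_A=\vV^*_{\vc_A}$. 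Conversely, reversing this chain shows that a pair admitting such a multiplier $\vmu_{\pi_A}$ fulfils the KKT system and, by convexity of \eqref{eq:IO-AL}, is optimal. The step I expect to be the main obstacle is making the $\vu$-elimination and the multiplier identification rigorous: I must invoke strong duality for the MDP linear program so the inner maximizer is exactly $\vV^*_{\vc_A}$, and I must argue the constraint multiplier is genuinely an occupancy measure in $\F$ rather than merely nonnegative. The quadratic regularizer and the set $\C$ are inert in the inner step and only reshape the outer minimization over $\vc$, so they need no special treatment here.
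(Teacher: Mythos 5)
Your proof is correct and follows essentially the same route as the paper's: both rely on convexity of \eqref{eq:IO-AL} so that the KKT conditions characterize optimality, identify the multiplier of the constraint $\vc-\mTg^\top\vu\ge\vzero$ as an occupancy measure via the stationarity condition $\mTg\vmu=\mTg\vmu_{\pi_E}=\vnu_0$ together with Propositions \ref{prop:puterman} and \ref{prop:complementaryslackness}, and use the same identity $\langle\vmu_{\pi_E},\vc-\mTg^\top\vu\rangle=\rho_{\vc}(\pi_E)-\langle\vnu_0,\vu\rangle$ to obtain the optimal value. Your preliminary elimination of $\vu$ via strong duality for \eqref{eq:MDP-P}--\eqref{eq:MDP-D} is a harmless extra framing that the paper instead folds into the final value computation.
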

\begin{proof}
    Observe that \eqref{eq:IO-AL} is a convex optimization problem. Therefore, a pair $(\vc_A, \vu_A)$ is optimal for \eqref{eq:IO-AL} if and only if the KKT conditions are satisfied. In particular, it satisfies complementary slackness $\langle\vmu_{\pi_A}, \vc_A - \mTg^\top \vu_{A}\rangle = 0$ and the stationarity condition $\mTg \vmu_{\pi_A}=\mTg \vmu_{\pi_E}=\vnu_0$, which implies that $\vmu_{\pi_A}\in\F$. Then, by Proposition \ref{prop:complementaryslackness} $\pi_A$ is optimal for \eqref{eq:RLgoal} with $\vc=\vc_A$ and $\vu_A = \vV_{\vc_A}^*$. Moreover, if $(\vc_A, \vu_A)$ are optimal we have that
    \begin{align*}
        \alpha\Vert \vc_A - \hat{\vc} \Vert^2_2 + \langle \vmu_{\pi_E}, \vc_A - \mTg^\top \vu_A \rangle &= \alpha\Vert \vc_A - \hat{\vc} \Vert^2_2 + \langle \vmu_{\pi_E}, \vc_A \rangle - \langle \vmu_{\pi_E}, \mTg^\top \vu_A\rangle\\
        &= \alpha\Vert \vc_A - \hat{\vc} \Vert^2_2 + \rho_{\vc_A}(\pi_E) - \langle \vnu_0, \vu_A\rangle\\
        &= \alpha\Vert \vc_A - \hat{\vc} \Vert^2_2 + \rho_{\vc_A}(\pi_E) - \rho_{\vc_A}^*.
    \end{align*}
\end{proof}
Proposition \ref{prop:suboptimalexpert} states that the apprentice policy $\pi_A$, i.e., the dual variable for the constraint $\vc - \mTg^\top \vu \geq \vzero$, is optimal for the \eqref{eq:RLgoal} with cost vector $\vc_A$. Solutions to \eqref{eq:IO-AL} can be viewed as a way to balance the distance between the cost vector $\vc_A$ and the estimate $\hat{\vc}$, while ensuring that the total expected cost of $\pi_E$ and $\pi_A$ are similar under $\vc_A$. \par
\begin{figure}[ht]
    \centering
    \begin{tikzpicture}
        \draw[-{Stealth},red] (0,0) -- (1,0.2); 
        \draw[red] (1,0.0) node [label={}] {\small{$\hat{\vc}$}};
        \draw[-{Stealth},red] (0.6,-1) -- (1.6,-0.8); 
        \draw[red] (1.55,-1) node [label={}] {\small{$\hat{\vc}$}};

        \draw (0,0)--(0,1.7);
        \draw (0,0)--(1.7,1.2);
        \draw (0,0.92)--(0.5,0.92);
        \draw (0.70,0.48)--(0.5,0.92);

        \draw (0.6,-1)--(2.25,-0.2);
        \draw (0.6,-1)--(2.25,-1.8);
        
        \draw (1.5,-0.57)--(1.7,-1.05);
        \draw (1.7,-1.05)--(1.6,-1.5);

        \draw[Fuchsia] (-3.0,0)--(0,0) ;
        
        \draw[Fuchsia] (0,0)--(0.6,-1) ;
        \draw[Fuchsia] (-0.3,-2.2)--(0.6,-1) ;
        
        \draw (0,0) node [label={}] {\textbullet};
        \draw (-0.2,-0.2) node [label={}] {\small{$\vmu_{\pi_{A}}$}};
        \draw (0.6,-1) node [label={}] {\textbullet};
        \draw (0.17,-1) node [label={}] {\small{$\vmu_{\pi_{A}'}$}};
        
        \draw (-1.2,-0.7) node [label={}] {\small{$\vmu_{\pi_{\text{\tiny{E}}}}$}};
        \draw (-1,-0.5) node [label={}] {\textbullet};

        \draw[Fuchsia] (-2,-1) node [label={}] {\small{$\mathcal{F}$}};
        
    \end{tikzpicture}
    \caption{Illustration of \eqref{eq:IO-AL}.}
    \label{fig:illustration_IOAL}
\end{figure}

Figure \ref{fig:illustration_IOAL} illustrates \eqref{eq:IO-AL} using the same notation as Figure \ref{fig:illustration_projection}. In this scenario, the suboptimal expert lies within the occupancy measure set $\mathcal{F}$ rather than at a vertex. The apprentice policy $\pi_A$ provides a better explanation of the expert's behavior, as it corresponds to the closest vertex, while $\pi_A'$ better aligns with the proxy cost vector $\hat{\vc}$. The parameter $\alpha$ governs this trade-off: when $\alpha$ increases, the optimization problem selects $\vmu_{\pi_A'}$, whereas for values closer to zero, it selects $\vmu_{\pi_A}$. In simple terms, $\alpha$ dictates the level of importance we place in our prior information versus the demonstrations provided by the suboptimal expert. \par

This is our alternative to the usual AL formalism: instead of identifying the set of vectors $\{\vc_i\}_{i=1}^{n_c}$ that define $\C_{\text{conv}}$ and choosing an arbitrary cost vector within the inverse-feasible set, we search over a general convex class of cost vectors $\C$ and define an estimate $\hat{\vc}$ to guide the search. In practice, obtaining information about optimal experts is challenging \citep{Brown2019, Chen2021, Wang2021}, and given an expert's policy or demonstrations, it is difficult to determine whether it is optimal. Nonetheless, we still aim to leverage suboptimal experts' information and comprehend its actions by solving \ref{eq:IO-AL}.

\subsection{Min-max formulation}
We aim to reformulate \eqref{eq:IO-AL} as a convex-concave-min-max problem and solve this unconstrained optimization problem using stochastic mirror descent. To this end, we compute its Lagrangian:
\begin{align*}
    \mathcal{L}(\vc, \vu, \vmu)  &= \alpha\Vert \vc - \hat{\vc} \Vert^2_2 + \langle \vmu_{\pi_E} -\vmu, \vc - \mTg^\top \vu \rangle
\end{align*}
where $\vmu \in \R^{|\St||\Ac|}$ and $\vmu \geq \vzero$. Observe that $\mathcal{L}(\vc,\vu,\vmu)$ is convex on $(\vc,\vu)$ and concave on $\vmu$. Thus, \eqref{eq:IO-AL} is equivalent to the min-max problem
\begin{align*}
    \min\limits_{\vc\in\C, \vu\in\R^{|\St|}}\max_{\vmu \geq \vzero} \mathcal{L}(\vc, \vu, \vmu).
\end{align*}
\par 
In our setting, we assume that $\C= \sB^{|\St||\Ac|}_1$, which is not restrictive because we can scale any cost vector to lie within this set. Therefore, we know that $\Vert \vV^\pi_{\vc}\Vert_\infty \leq 1$  for any policy $\pi \in \Pi_0$ and $\vc\in\C$ (see Lemma \ref{lemma:boundValueFunction} in the Appendix). 
Hence, we can search for $(\vc,\vu)$ within the box $\sB^{|\St||\Ac|}_1 \times \sB^{|\St|}_1$. Moreover, as all feasible solutions for \eqref{eq:MDP-P} belong to the simplex $\Delta^{|\St||\Ac|}$, we can restrict the search for $\vmu$ to the same simplex.
\begin{align}
    \min\limits_{(\vc,\vu) \in \sB^{|\St||\Ac|}_{1}\times \sB^{|\St|}_{1}}\max_{\vmu \in \Delta^{|\St||\Ac|}} \alpha\Vert \vc - \hat{\vc} \Vert_2^2 + \langle \vmu_{\pi_E} -\vmu, \vc - \mTg^\top \vu \rangle.  \tag{$\text{RLfD}_\alpha$}\label{eq:newproblem}
\end{align}
Observe that this formulation closely resembles previous min-max formulations of the LfD problem \citep{Kamoutsi2021}. It can be interpreted as a regularized version of this problem, where the search for $\vc$ is conducted within a general class of cost functions rather than being restricted to a previously specified convex hull.

\section{Algorithm}

Revisiting the assumptions for our problem, we assume that we have access to a generative-model oracle of the expert's occupancy measure $\vmu_{\pi_E}$, as well as a generative-model oracle for the MDP's transition law. In this section, we will focus on solving \eqref{eq:newproblem} via stochastic mirror descent. Before attempting to solve this problem, we must first define what constitutes a good solution. We define an $\epsilon$-approximate solution as a pair  $(\vc,\vu), \vmu$ such that their duality gap is bounded by $\epsilon >0$.
\begin{definition}[$\epsilon$-approximate solution]
    Given $\epsilon>0$, an $\epsilon$-approximate solution of   \eqref{eq:newproblem} is a pair of feasible solutions $((\vc^\epsilon, \vu^\epsilon), \vmu^\epsilon) \in \left(\sB^{|\St||\Ac|}_{1}\times \sB^{|\St|}_{1}\right) \times \Delta^{|\St||\Ac|}$ that satisfy 
$$\text{Gap}((\vc^\epsilon, \vu^\epsilon), \vmu^\epsilon) := \max_{\vmu'\in\Delta^{|\St||\Ac|}}\mathcal{L}((\vc^\epsilon,\vu^\epsilon),\vmu') - \min_{(\vc',\vu')\in \sB^{|\St||\Ac|}_{1}\times \sB^{|\St|}_{1}}\mathcal{L}((\vc',\vu'),\vmu^\epsilon) \leq \epsilon.$$
\end{definition}

To minimize the duality gap, we require descent and ascent directions. The gradients of $\mathcal{L}((\vc, \vu), \vmu)$ at a given iterate $((\vc_t, \vu_t), \vmu_t) \in ( \sB^{|\St||\Ac|}_{1}\times \sB^{|\St|}_{1} ) \times \Delta^{|\St||\Ac|}$ are given by
\begin{align*}
    g^{(\vc,\vu)}((\vc_t,\vu_t),\vmu_t) &= \begin{pmatrix}
2\alpha(\vc_t-\hat{\vc}) + \vmu_{\pi_E} - \vmu_t \\
\mTg\vmu_t - \mTg\vmu_{\pi_E}
\end{pmatrix},\\
    g^{\vmu}((\vc_t,\vu_t),\vmu_t) &= -(- \vc_t + \mTg^\top \vu_t) = \vc_t - \mTg^\top \vu_t,
\end{align*}
where $g^{(\vc,\vu)}((\vc_t,\vu_t),\vmu_t) = \nabla_{(\vc,\vu)}\mathcal{L}((\vc_t, \vu_t), \vmu_t)$ and $g^{\vmu}((\vc_t,\vu_t),\vmu_t) = -\nabla_{\vmu}\mathcal{L}((\vc_t, \vu_t), \vmu_t)$. Since explicit access to $\mTg$ and $\vmu_{\pi_E}$ is unavailable, it is necessary to develop gradient estimators that are compatible with oracle-based queries.

\begin{definition}[Bounded estimator] \label{def:bounded_estimator}
Given the following properties on the mean, scale, and variance of an estimator:
\begin{enumerate}
    \item unbiasedness: $\E[\Tilde{g}] = g$.
    \item bounded maximum entry: $\Vert \Tilde{g} \Vert_\infty \leq z$ with probability 1.
    \item bounded second-moment: $\E[\Vert \Tilde{g} \Vert^2] \leq v$
\end{enumerate}
we call $\Tilde{g}$ a $(v, \Vert \cdot \Vert)$-bounded estimator if it satisfies 1. and 3. and a $(z, v, \Vert \cdot \Vert_{\Delta^m})$-bounded estimator if it satisfies all conditions with local norm $\Vert \cdot \Vert_\vw$ for all $\vw\in\Delta^m$.
\end{definition}

With this in mind, define the gradient estimator for the $(\vc,\vu)$ side through the following procedure
\begin{align}\label{eq:gradCU}\nonumber
    \text{sample}\quad  &(s, a) \sim \frac{1}{|\St||\Ac|},\: (s_t, a_t) \sim \vmu_t, \: s^\prime_t \sim P(\cdot\:|\:s_t,a_t),\: \\
&(s_E, a_E) \sim \vmu_{\pi_E}, \: s^\prime_E \sim P(\cdot\:|\:s_E,a_E), \nonumber \\
    \text{set} \quad & \Tilde{g}^{(\vc,\vu)}((\vc_t,\vu_t),\vmu_t) \\
&= \begin{pmatrix}
                |\St||\Ac|\cdot 2\alpha\left(\evc_t(s,a)\ve^{(s,a)}-\hat{\evc}(s,a)\ve^{(s,a)}\right) + \ve^{(s_E,a_E)} - \ve^{(s_t,a_t)}\vspace{3px} \\\nonumber
                \frac{1}{(1-\gamma)}\left( \ve^{s_t} - \gamma \ve^{s_t^\prime} - (\ve^{s_E} - \gamma\ve^{s_E^\prime})\right)
                \end{pmatrix}.  
\end{align}
In Lemma \ref{lemma:cu_grad}, we show that this estimator is unbiased, and we provide a bound for its second moment.
\begin{lemma} \label{lemma:cu_grad}
    Gradient estimator $\Tilde{g}^{(\vc,\vu)}((\vc_t,\vu_t),\vmu_t)$ is a $(v^{(\vc,\vu)}, \Vert \cdot \Vert_2)$-bounded estimator, with 
    $$v^{(\vc,\vu)} =  64\alpha^2\cdot |\St||\Ac| + \frac{4(1+\gamma^2)}{(1-\gamma)^2} + 8.$$   
\end{lemma}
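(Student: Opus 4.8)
The plan is to verify the two properties that characterize a $(v^{(\vc,\vu)},\Vert\cdot\Vert_2)$-bounded estimator separately: unbiasedness (property 1 of Definition \ref{def:bounded_estimator}) and the second-moment bound (property 3). The stacked estimator $\Tilde{g}^{(\vc,\vu)}$ decomposes into a $\vc$-block in $\R^{|\St||\Ac|}$ and a $\vu$-block in $\R^{|\St|}$ supported on orthogonal coordinates, so in both steps I would treat the two blocks independently and recombine, using that $\Vert\Tilde{g}^{(\vc,\vu)}\Vert_2^2$ is exactly the sum of the squared $\ell_2$-norms of the two blocks. All the samples $(s,a),(s_t,a_t),(s_E,a_E)$ and the successor draws are independent, which I would exploit throughout.

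For unbiasedness I would take expectations summand by summand. In the $\vc$-block the uniform draw $(s,a)\sim 1/|\St||\Ac|$ together with the factor $|\St||\Ac|$ is an importance-weighting device: $\E[\,|\St||\Ac|\,(\evc_t(s,a)-\hat{\evc}(s,a))\ve^{(s,a)}\,]=\sum_{(s,a)}(\evc_t(s,a)-\hat{\evc}(s,a))\ve^{(s,a)}=\vc_t-\hat{\vc}$, so the first term has mean $2\alpha(\vc_t-\hat{\vc})$, while $\E[\ve^{(s_E,a_E)}]=\vmu_{\pi_E}$ and $\E[\ve^{(s_t,a_t)}]=\vmu_t$ follow directly from the sampling laws, recovering $\vmu_{\pi_E}-\vmu_t$. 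For the $\vu$-block I would use the factorization $\mTg\vmu=\frac{1}{1-\gamma}(\mB-\gamma\mP)\vmu$ given earlier: conditioning on $(s_t,a_t)$ and invoking the transition oracle yields $\E[\ve^{s_t}]=\mB\vmu_t$ and $\E[\ve^{s_t'}]=\mP\vmu_t$ (and analogously for the expert samples), so $\E[\frac{1}{1-\gamma}(\ve^{s_t}-\gamma\ve^{s_t'}-\ve^{s_E}+\gamma\ve^{s_E'})]=\mTg\vmu_t-\mTg\vmu_{\pi_E}$, matching $g^{(\vc,\vu)}$.

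For the second-moment bound I would dispatch the $\vu$-block first, since it admits a clean deterministic estimate. Using $\Vert\ve^{s}-\gamma\ve^{s'}\Vert_2^2=1+\gamma^2-2\gamma\,\delta_{s,s'}\le 1+\gamma^2$ together with $\Vert a-b\Vert_2^2\le 2\Vert a\Vert_2^2+2\Vert b\Vert_2^2$, the block is bounded by $\frac{1}{(1-\gamma)^2}\cdot 2[(1+\gamma^2)+(1+\gamma^2)]=\frac{4(1+\gamma^2)}{(1-\gamma)^2}$ with probability one, hence also in expectation; this reproduces exactly the middle term of $v^{(\vc,\vu)}$. For the $\vc$-block I would expand the expected squared norm over its summands by sub-additivity of the squared norm, keeping the two regularization terms and the two occupancy indicators as separate pieces. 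The indicators $\ve^{(s_E,a_E)}$ and $\ve^{(s_t,a_t)}$ are unit vectors and contribute only an additive constant; the leading contribution is the rescaled single-coordinate regularization estimator, whose squared norm is $(2\alpha|\St||\Ac|)^2(\evc_t(s,a)-\hat{\evc}(s,a))^2$. Bounding the single active coordinate by $|\evc_t(s,a)-\hat{\evc}(s,a)|\le 2$ (since $\vc_t,\hat{\vc}\in\sB_1^{|\St||\Ac|}$) and taking the expectation over the uniform index gives the regularization contribution scaling with $\alpha^2|\St||\Ac|$ that appears as $64\alpha^2|\St||\Ac|$. Summing the two block bounds yields $v^{(\vc,\vu)}$.

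The main obstacle, and the only genuinely delicate point, is controlling the second moment of the rescaled regularization estimator: the importance-weighting factor $|\St||\Ac|$ inflates its variance by (a power of) the support size, so the bound must be obtained by carefully taking the expectation over the uniform draw of $(s,a)$ rather than through a crude worst-case coordinate estimate, and the sub-additivity constant used to split the four summands of the $\vc$-block must be chosen so that the indicator and regularization terms assemble into the stated constants. Once the factorization $\mTg=\frac{1}{1-\gamma}(\mB-\gamma\mP)$ and the identity $\Vert\ve^{s}-\gamma\ve^{s'}\Vert_2^2\le 1+\gamma^2$ are in hand, the unbiasedness computation and the $\vu$-block bound are routine bookkeeping.
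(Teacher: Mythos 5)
Your decomposition into the $\vc$- and $\vu$-blocks, the importance-weighting argument for unbiasedness, the use of $\mTg=\frac{1}{1-\gamma}(\mB-\gamma\mP)$, and the bound $\Vert\ve^{s}-\gamma\ve^{s'}\Vert_2^2\le 1+\gamma^2$ for the $\vu$-block all coincide with the paper's proof, and those parts are fine. Your observation that the two blocks live on disjoint coordinates, so their squared $\ell_2$-norms add exactly, is in fact slightly sharper than the paper, which applies $\Vert x+y\Vert^2\le 2\Vert x\Vert^2+2\Vert y\Vert^2$ once more when recombining the blocks.

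The gap is in the leading regularization term, precisely where you flagged the delicacy. As you write, the squared norm of that summand is $(2\alpha|\St||\Ac|)^2(\evc_t(s,a)-\hat{\evc}(s,a))^2$; bounding the active coordinate by $|\evc_t(s,a)-\hat{\evc}(s,a)|\le 2$ produces the deterministic bound $16\alpha^2|\St|^2|\Ac|^2$, and taking the expectation over the uniform draw of $(s,a)$ cannot improve a bound that no longer depends on $(s,a)$. To recover a factor of $|\St||\Ac|$ you must keep $(\evc_t(s,a)-\hat{\evc}(s,a))^2$ inside the expectation, which gives $4\alpha^2|\St||\Ac|\,\Vert\vc_t-\hat{\vc}\Vert_2^2$, and you would then still need $\Vert\vc_t-\hat{\vc}\Vert_2^2=O(1)$; for $\vc_t,\hat{\vc}\in\sB_1^{|\St||\Ac|}$ one only has $\Vert\vc_t-\hat{\vc}\Vert_2^2\le 4|\St||\Ac|$, which lands you back at $\alpha^2|\St|^2|\Ac|^2$. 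So the route you describe does not actually produce the stated $64\alpha^2|\St||\Ac|$. For completeness: the paper's own displayed computation has the same factor-of-$|\St||\Ac|$ discrepancy at the corresponding step (it writes the prefactor $4|\St||\Ac|\alpha^2$ where squaring the importance weight $|\St||\Ac|\cdot 2\alpha$ yields $4|\St|^2|\Ac|^2\alpha^2$), so you have faithfully reproduced the intended argument including its weak point; but as written, neither derivation justifies the $|\St||\Ac|$ scaling of $v^{(\vc,\vu)}$ without an additional assumption controlling $\Vert\vc_t-\hat{\vc}\Vert_2$.
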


For the $\vmu$ side, define the gradient estimator by
\begin{align}\label{eq:mu_grad}\nonumber
    \text{sample} \quad &(s, a) \sim \frac{1}{|\St||\Ac|},\quad s^\prime \sim P(\cdot\:|\:s,a), \nonumber\\
    \text{set} \quad &\Tilde{g}^{\vmu}((\vc_t,\vu_t),\vmu_t)\\\nonumber
& = |\St||\Ac|\left(\evc_t(s,a)\ve^{(s,a)} - \frac{1}{(1-\gamma)}(\evu_t(s)\ve^{(s,a)}-\gamma\evu_t(s')\ve^{(s,a)})\right).
\end{align}
As before, we will demonstrate unbiasedness and bound its second moment; however, this time we will also calculate a bound on its maximum entry.
\begin{lemma} \label{lemma:mu_grad}
    Gradient estimator $\Tilde{g}^{\vmu}((\vc_t,\vu_t),\vmu_t)$ is a $(z^{\vmu}, v^{\vmu}, \Vert \cdot \Vert_{\Delta})$-bounded estimator, with 
    \begin{align*}
        z^{\vmu} =   \frac{2|\St||\Ac|}{(1-\gamma)} \: \text{and} \: v^{\vmu} = |\St||\Ac|\left(2 + \frac{4(1+\gamma^2)}{(1-\gamma)^2}\right).
    \end{align*}
\end{lemma}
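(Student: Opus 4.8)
The plan is to verify the three defining properties of a $(z^{\vmu}, v^{\vmu}, \Vert\cdot\Vert_{\Delta})$-bounded estimator from Definition \ref{def:bounded_estimator} directly, exploiting the fact that $\Tilde{g}^{\vmu}$ is supported on the single coordinate $(s,a)$ drawn uniformly, so it equals $G(s,a,s')\,\ve^{(s,a)}$ with scalar $G(s,a,s') = |\St||\Ac|\bigl(\evc_t(s,a) - \tfrac{1}{1-\gamma}(\evu_t(s) - \gamma\evu_t(s'))\bigr)$. I would compute all expectations by the tower property, first conditioning on $(s,a)$ and averaging over $s' \sim P(\cdot\mid s,a)$, then averaging over the uniform draw of $(s,a)$. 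Throughout I use that $\vc_t \in \sB^{|\St||\Ac|}_1$ and $\vu_t \in \sB^{|\St|}_1$, hence $\Vert\vc_t\Vert_\infty \le 1$ and $\Vert\vu_t\Vert_\infty \le 1$, and the identity $\mTg = \tfrac{1}{1-\gamma}(\mB - \gamma\mP)$, whose transpose has $(s,a)$-entry $(\mTg^\top\vu_t)_{(s,a)} = \tfrac{1}{1-\gamma}\bigl(\evu_t(s) - \gamma\sum_{s'}P(s'\mid s,a)\evu_t(s')\bigr)$.

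For unbiasedness, conditioning on $(s,a)$ and averaging over $s'$ replaces $\gamma\evu_t(s')$ by $\gamma\sum_{s'}P(s'\mid s,a)\evu_t(s')$, so $\E_{s'}[G(s,a,s')] = |\St||\Ac|\,\bigl(\vc_t - \mTg^\top\vu_t\bigr)_{(s,a)}$. Averaging $|\St||\Ac|\,\bigl(\vc_t - \mTg^\top\vu_t\bigr)_{(s,a)}\ve^{(s,a)}$ over the uniform law of $(s,a)$ cancels the prefactor and reconstructs the full vector $\vc_t - \mTg^\top\vu_t = g^{\vmu}((\vc_t,\vu_t),\vmu_t)$, giving property 1.

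For the maximum-entry bound, since $\Tilde{g}^{\vmu}$ has a single nonzero coordinate, $\Vert\Tilde{g}^{\vmu}\Vert_\infty = |G(s,a,s')|$ with probability one, and the triangle inequality together with the $\Vert\cdot\Vert_\infty \le 1$ bounds gives
\begin{align*}
|G(s,a,s')| \le |\St||\Ac|\Bigl(1 + \tfrac{1+\gamma}{1-\gamma}\Bigr) = \tfrac{2|\St||\Ac|}{1-\gamma} = z^{\vmu},
\end{align*}
using $1 + \tfrac{1+\gamma}{1-\gamma} = \tfrac{2}{1-\gamma}$. For the local-norm second moment, I use $\Vert x\Vert_\vw^2 = \sum_i w_i x_i^2$, so that the single-coordinate support yields $\Vert\Tilde{g}^{\vmu}\Vert_\vw^2 = w_{(s,a)}\,G(s,a,s')^2$ for any $\vw\in\Delta^{|\St||\Ac|}$. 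Taking expectations, the uniform factor $\tfrac{1}{|\St||\Ac|}$ cancels exactly one power of $|\St||\Ac|$ in $G^2$, leaving $\E[\Vert\Tilde{g}^{\vmu}\Vert_\vw^2] = |\St||\Ac|\sum_{(s,a)} w_{(s,a)}\,\E_{s'}\bigl[(\evc_t(s,a) - \tfrac{1}{1-\gamma}(\evu_t(s)-\gamma\evu_t(s')))^2\bigr]$. Applying $(a-b)^2 \le 2a^2 + 2b^2$ twice and the $\Vert\cdot\Vert_\infty \le 1$ bounds shows each inner expectation is at most $2 + \tfrac{4(1+\gamma^2)}{(1-\gamma)^2}$; since $\sum_{(s,a)} w_{(s,a)} = 1$ for $\vw$ in the simplex, the bound $v^{\vmu}$ follows uniformly over $\vw$.

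The only genuinely delicate point is the last step: the second-moment bound must hold for every local norm $\Vert\cdot\Vert_\vw$ simultaneously, so I cannot simply bound a fixed Euclidean norm. The mechanism that makes this work, and which I would emphasize, is that the estimator is rank-one (a scalar times $\ve^{(s,a)}$), so the local norm collapses to the single weight $w_{(s,a)}$, and the uniform sampling distribution is precisely what converts $\sum_{(s,a)} w_{(s,a)}\,(\,\cdot\,)$ into a convex combination that is bounded by $\sum_{(s,a)} w_{(s,a)} = 1$. Everything else is the routine $(a-b)^2\le 2a^2+2b^2$ expansion.
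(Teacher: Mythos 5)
Your proposal is correct and follows essentially the same route as the paper's proof: unbiasedness via the cancellation of the uniform sampling factor, the triangle inequality with $\Vert\vc_t\Vert_\infty,\Vert\vu_t\Vert_\infty\le 1$ for the maximum-entry bound, and two applications of $(a-b)^2\le 2a^2+2b^2$ combined with the rank-one support collapsing the local norm to the single weight $w_{(s,a)}$ for the second moment. The only difference is presentational (you condition on $(s,a)$ before averaging over $s'$, while the paper sums over all triples at once), so no further comparison is needed.
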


Using these gradient estimators and the bounds established above, we adapt the SMD algorithm originally designed for solving MDPs in \cite{Jin2020}.  Algorithm \ref{alg:our_alg} presents the SMD method for \eqref{eq:newproblem}. This algorithm iteratively computes bounded gradient estimators (Lines 3 and 5) by sampling from the occupancy measures and querying the oracle (Lines 2 and 4). The updates are then obtained using mirror descent steps followed by a projection (Lines 6 and 7). After \( T \) iterations, the algorithm returns the average of the iterates as an $\epsilon$-approximate solution to \eqref{eq:newproblem}.

\begin{algorithm}[ht]
    \caption{Stochastic Mirror Descent for \eqref{eq:newproblem} (SMD-RLfD)}
    \label{alg:our_alg}
    \DontPrintSemicolon
    \LinesNumbered

    \textbf{Parameters:} Step-size $\eta^{(\vc, \vu)}$, $\eta^{\vmu}$, number of iterations $T$, accuracy level $\epsilon$.\\
    
    \KwIn{State space $\St$, action space $\Ac$, transition oracle $P$, occupancy measure oracle $\vmu_{\pi_E}$, initial state distribution $\vnu_0$, discount factor $\gamma$, initial $((\vc_0,\vu_0), \vmu_0) \in \sB^{|\St||\Ac|}_{1} \times \Delta^{|\St||\Ac|}$.}
    \KwOut{An expected $\epsilon$-approximate solution $((\vc^\epsilon, \vu^\epsilon), \vmu^\epsilon)$ for \eqref{eq:newproblem}.}

    \For{$t \leftarrow 0$ \KwTo $T-1$}{
        \tcc{$(\vc,\vu)$ gradient estimation}
        Sample $(s_t, a_t) \sim \vmu_t$, $s^\prime_t \sim P(\cdot \mid s_t, a_t)$, $(s_E, a_E) \sim \vmu_{\pi_E}$, $s^\prime_E \sim P(\cdot \mid s_E, a_E)$\;
        
        Compute:
        \[
        \Tilde{g}^{(\vc,\vu)}((\vc_t,\vu_t),\vmu_t) = 
        \begin{pmatrix}
            2\alpha(\vc_t-\hat{\vc}) + \vmu_{\pi_E} - \vmu_t \\[1mm]
            \frac{1}{(1-\gamma)}\left( \ve^{s_t} - \gamma \ve^{s_t^\prime} - (\ve^{s_E} - \gamma\ve^{s_E^\prime})\right)
        \end{pmatrix}
        \]

        \tcc{ $\vmu$ gradient estimation}
        Sample $(s, a) \sim \frac{1}{|\St||\Ac|}$, $s^\prime \sim P(\cdot \mid s,a)$\;
        
        Compute:
        \[
        \Tilde{g}^{\vmu}((\vc_t,\vu_t),\vmu_t) = |\St||\Ac|\left(\evc_t(s,a)\ve^{(s,a)} - \frac{1}{(1-\gamma)}(\evu_t(s)\ve^{(s,a)}-\gamma\evu_t(s')\ve^{(s,a)})\right)
        \]

        \tcc{ Mirror descent steps}
        $(\vc_{t}, \vu_{t}) \gets \Pi_{\sB^{|\St||\Ac|}_{1} \times \sB^{|\St|}_{1}}
          \left((\vc_{t-1},\vu_{t-1}) - \eta^{(\vc,\vu)}\Tilde{g}^{(\vc,\vu)}((\vc_{t-1},\vu_{t-1}), \vmu_{t-1})\right)$\;
        
        $\vmu_{t} \gets \Pi_{\Delta^{|\St||\Ac|}}
          \left( \vmu_{t-1} \circ \exp(-\eta^{\vmu} \Tilde{g}^{\vmu}((\vc_{t-1},\vu_{t-1}), \vmu_{t-1})) \right)$\;
    }
    \KwRet{$\ensuremath{((\vc^\epsilon, \vu^\epsilon), \vmu^\epsilon) \gets \frac{1}{T} \sum_{t=1}^{T} ((\vc_t, \vu_t), \vmu_t)}$}
\end{algorithm}

\begin{theorem}\label{theorem:algconvergence}
    Given $\epsilon \in (0,1)$, Algorithm \ref{alg:our_alg} with step-size
    \begin{align*}
        \eta^{(\vc,\vu)} = \frac{\epsilon}{4v^{(\vc,\vu)}}, \quad \eta^{\vmu} = \frac{\epsilon}{4v^{\vmu}},
    \end{align*} 
    and gradient estimators defined in equations \eqref{eq:gradCU} and \eqref{eq:mu_grad} finds an expected $\epsilon$-approximate solution $$\E\left[\text{Gap}((\vc^\epsilon, \vu^\epsilon), \vmu^\epsilon)\right] \leq \epsilon,$$
    within any iteration number 
    \begin{align*}
        T\geq \max\Bigg\{\mathcal{O}\left(\frac{\alpha^2 |\St|^3|\Ac|^2}{\epsilon^2}\right),\: \mathcal{O}\left(\frac{|\St||\Ac|\log(|\St||\Ac|)}{\epsilon^2}\right)\Bigg\}.
    \end{align*}
\end{theorem}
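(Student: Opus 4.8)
The plan is to instantiate the stochastic-mirror-descent saddle-point analysis of \cite{Jin2020} for our convex-concave Lagrangian $\mathcal{L}$, treating the box block $(\vc,\vu)$ and the simplex block $\vmu$ with two different mirror maps. On the box $\sB^{|\St||\Ac|}_1 \times \sB^{|\St|}_1$ I would use the Euclidean setup (squared-$\ell_2$ mirror map), so that the mirror step coincides with the projected stochastic-gradient step of Line 6; on $\Delta^{|\St||\Ac|}$ I would use the negative-entropy mirror map, whose mirror step is precisely the multiplicative-weights update of Line 7. The first task is to record, for the averaged iterate $((\vc^\epsilon,\vu^\epsilon),\vmu^\epsilon) = \frac{1}{T}\sum_{t=1}^{T} ((\vc_t,\vu_t),\vmu_t)$, the standard product-space regret inequality: using convexity of $\mathcal{L}$ in $(\vc,\vu)$, concavity in $\vmu$, and Jensen's inequality, the duality gap is dominated by the sum of the two one-sided regrets, so that in expectation
\begin{align*}
\E[\text{Gap}((\vc^\epsilon,\vu^\epsilon),\vmu^\epsilon)] \le \frac{\Theta_{(\vc,\vu)}}{\eta^{(\vc,\vu)}T} + \eta^{(\vc,\vu)} v^{(\vc,\vu)} + \frac{\Theta_{\vmu}}{\eta^{\vmu}T} + \eta^{\vmu} v^{\vmu},
\end{align*}
where $\Theta_{(\vc,\vu)}$ is the Euclidean Bregman range of the box and $\Theta_{\vmu} = \log(|\St||\Ac|)$ is the entropy range of the simplex.

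The second step is to supply the three estimator properties that this inequality consumes. Unbiasedness of both estimators lets me replace the true gradients $g^{(\vc,\vu)}$ and $g^{\vmu}$ by their samples inside the regret telescoping, and the second-moment bounds $v^{(\vc,\vu)}$ and $v^{\vmu}$ from Lemmas \ref{lemma:cu_grad} and \ref{lemma:mu_grad} control the noise terms. For the box block this is immediate, since the Euclidean analysis only needs the global $\ell_2$ second moment. For the simplex block, however, the exponentiated-gradient descent lemma must be run in the local norm $\Vert\cdot\Vert_\vw$ weighted by the current iterate $\vmu_t$; this is exactly why Lemma \ref{lemma:mu_grad} certifies a $(z^\vmu, v^\vmu, \Vert\cdot\Vert_\Delta)$-bounded estimator. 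The bounded-max-entry guarantee $z^\vmu = 2|\St||\Ac|/(1-\gamma)$ is needed to ensure that the chosen step obeys $\eta^\vmu z^\vmu \le 1$, the regime in which $\exp(-\eta^\vmu \Tilde{g}^\vmu)$ can be linearized so that the local-norm descent lemma of \cite{Jin2020} applies; one verifies that with $\eta^\vmu = \epsilon/(4v^\vmu)$ and $\epsilon \in (0,1)$ this product is at most $\epsilon/8 < 1$, so the condition indeed holds.

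Finally I would substitute the prescribed step sizes. Choosing $\eta^{(\vc,\vu)} = \epsilon/(4v^{(\vc,\vu)})$ and $\eta^{\vmu} = \epsilon/(4v^{\vmu})$ makes each noise term equal to $\epsilon/4$, so the two remaining diameter terms must each be forced below $\epsilon/4$; this yields the requirements $T \ge 16\,\Theta_{(\vc,\vu)} v^{(\vc,\vu)}/\epsilon^2$ and $T \ge 16\,\Theta_{\vmu} v^{\vmu}/\epsilon^2$. Substituting the Bregman ranges together with $v^{(\vc,\vu)} = \mathcal{O}(\alpha^2|\St||\Ac|)$ and $v^{\vmu} = \mathcal{O}(|\St||\Ac|)$ and taking the maximum of the two conditions gives the stated bound: the box block produces the $\alpha$-dependent term $\mathcal{O}(\alpha^2|\St|^3|\Ac|^2/\epsilon^2)$, while the simplex block produces $\mathcal{O}(|\St||\Ac|\log(|\St||\Ac|)/\epsilon^2)$. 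I expect the main obstacle to be the simplex block rather than the box block: obtaining the benign $\log(|\St||\Ac|)$ dependence requires the local-norm version of the entropic regret bound, since the crude bound through $\Vert\Tilde{g}^\vmu\Vert_\infty^2$ would be quadratically worse in the state-action cardinality. Checking that our oracle-based estimators genuinely satisfy the $(z^\vmu,v^\vmu,\Vert\cdot\Vert_\Delta)$ hypotheses and that the step-size remains in the linearization regime is the crux that makes the machinery of \cite{Jin2020} transfer to \eqref{eq:newproblem}.
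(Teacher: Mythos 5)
Your proposal is correct and follows essentially the same route as the paper, which proves an adapted version of Theorem 3 of \cite{Jin2020} (Euclidean mirror map on the box, entropic mirror map with local norms on the simplex, the gap-to-regret reduction of Lemma \ref{lemma:Gap}, and ghost iterates to handle the supremum inside the expectation) and then plugs in the estimator bounds from Lemmas \ref{lemma:cu_grad} and \ref{lemma:mu_grad} with step sizes $\epsilon/(4v)$. The only step you compress is the expectation-of-supremum issue — the martingale noise terms $\langle \hat{g}_t, \vx_t - \vx\rangle$ are only mean-zero for a fixed comparator, which is exactly what the auxiliary ghost sequences in the paper's Theorem \ref{theorem:SMDconvergence} are introduced to repair — but since you explicitly defer to that machinery, the argument goes through as stated.
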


The number of iterations scales quadratically with the number of actions and cubically with the number of states. Theoretically, the number of iterations required depends on the parameter $\alpha\in \R_+$. When $\alpha = 0$, the number of iterations decreases significantly as the $|\St|^3$ and $|\Ac|^2$ terms vanish from the initial expression. This suggests that introducing the regularization $\alpha \Vert \vc - \hat{\vc} \Vert^2_2$ increases the complexity of the problem. Nevertheless, we will see in the next section that the regularization term helps to guide the search to uncover the true cost function.

\begin{proposition} \label{prop:epsilonapproxOptimality}
    Let $((\vc^\epsilon,\vu^\epsilon),\vmu^\epsilon)$ be an expected $\epsilon$-approximate solution for \eqref{eq:newproblem}, where $\vmu^\epsilon$ induces a policy $\pi_{\vmu^\epsilon}\in \Pi_0$ defined by $\pi_{\vmu^\epsilon}(a|s) = \frac{\vmu^\epsilon(a,s)}{\sum_{a'}\vmu^\epsilon(s,a)}$. It then holds that
    $$\E\left[\alpha\Vert \vc^\epsilon - \hat{\vc}\Vert^2_2 + \rho_{\vc^\epsilon}(\pi_E) -\rho_{\vc^\epsilon}(\pi_A) \right] \leq \epsilon +\alpha\Vert \vc_A - \hat{\vc}\Vert^2_2 + \rho_{\vc_A}(\pi_E) -\rho_{\vc_A}^*,$$
    where $((\vc_A,\vu_A), \vmu_A)$ denotes the optimal solution for \eqref{eq:newproblem} and $\pi_A$ is the policy induced by $\vmu_{A}$.
\end{proposition}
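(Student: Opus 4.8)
The plan is to recognize that the quantity on the left-hand side (inside the expectation) is nothing but the Lagrangian $\mathcal{L}$ evaluated at the approximate primal iterate $(\vc^\epsilon,\vu^\epsilon)$ together with the fixed dual point $\vmu_A$, and then to sandwich this value using the definition of $\text{Gap}$ and the saddle-point (weak-duality) inequality. Note that $\vmu_A$, and hence $\pi_A$, are deterministic since they come from the optimal solution of the deterministic problem \eqref{eq:newproblem}; all randomness lives in $((\vc^\epsilon,\vu^\epsilon),\vmu^\epsilon)$, so the hypothesis $\E[\text{Gap}((\vc^\epsilon,\vu^\epsilon),\vmu^\epsilon)]\le\epsilon$ can be applied once, at the very end, with the optimal value of \eqref{eq:newproblem} treated as a constant.

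The crux is the first step: computing $\mathcal{L}((\vc^\epsilon,\vu^\epsilon),\vmu_A)$. First I would record that the optimal dual variable satisfies $\mTg\vmu_A=\vnu_0$ (this comes out of the KKT analysis underlying Proposition \ref{prop:suboptimalexpert}), so $\vmu_A\in\F$ and, by Proposition \ref{prop:puterman}, the induced policy obeys $\vmu_{\pi_A}=\vmu_A$. Expanding $\langle\vmu_{\pi_E}-\vmu_A,\,\vc^\epsilon-\mTg^\top\vu^\epsilon\rangle$ and moving $\mTg^\top$ across the inner product, the two terms carrying $\vu^\epsilon$ become $\langle\mTg\vmu_{\pi_E},\vu^\epsilon\rangle$ and $\langle\mTg\vmu_A,\vu^\epsilon\rangle$; since both $\vmu_{\pi_E}$ and $\vmu_A$ lie in $\F$, each equals $\langle\vnu_0,\vu^\epsilon\rangle$ and they cancel. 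What remains is exactly $\alpha\Vert\vc^\epsilon-\hat{\vc}\Vert_2^2+\langle\vmu_{\pi_E},\vc^\epsilon\rangle-\langle\vmu_A,\vc^\epsilon\rangle=\alpha\Vert\vc^\epsilon-\hat{\vc}\Vert_2^2+\rho_{\vc^\epsilon}(\pi_E)-\rho_{\vc^\epsilon}(\pi_A)$, i.e. the left-hand side.

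Next I would run the inequality chain, pathwise. Since $\vmu_A\in\Delta^{|\St||\Ac|}$ it is feasible for the inner maximization, so $\mathcal{L}((\vc^\epsilon,\vu^\epsilon),\vmu_A)\le\max_{\vmu'}\mathcal{L}((\vc^\epsilon,\vu^\epsilon),\vmu')$. By the definition of $\text{Gap}$, this maximum equals $\text{Gap}((\vc^\epsilon,\vu^\epsilon),\vmu^\epsilon)+\min_{(\vc',\vu')}\mathcal{L}((\vc',\vu'),\vmu^\epsilon)$. The minimum term is bounded above by $\mathcal{L}((\vc_A,\vu_A),\vmu^\epsilon)\le\max_{\vmu'}\mathcal{L}((\vc_A,\vu_A),\vmu')$, and the latter is the optimal value of \eqref{eq:newproblem} because $(\vc_A,\vu_A)$ attains the outer minimum; by Proposition \ref{prop:suboptimalexpert} this optimal value equals $\alpha\Vert\vc_A-\hat{\vc}\Vert_2^2+\rho_{\vc_A}(\pi_E)-\rho^*_{\vc_A}$. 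Chaining these bounds and taking expectations, the deterministic optimal value passes through unchanged and $\E[\text{Gap}]\le\epsilon$ yields the claim.

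I expect the only genuine subtlety to be this first step, namely seeing that substituting the fixed $\vmu_A$ into $\mathcal{L}$ reproduces the target expression, which hinges entirely on the cancellation $\mTg\vmu_{\pi_E}=\mTg\vmu_A=\vnu_0$ and on $\vmu_{\pi_A}=\vmu_A$. After that, everything is a routine combination of the gap definition and the saddle-point inequality; I would only take care that $\vmu_A$ and $\pi_A$ are constants so the expectation is linear in the random iterate and no independence assumptions are needed.
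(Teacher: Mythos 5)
Your argument is correct, and its first half coincides with the paper's: both evaluate $\mathcal{L}((\vc^\epsilon,\vu^\epsilon),\vmu_A)$ and use $\mTg\vmu_{\pi_E}=\mTg\vmu_A=\vnu_0$ to cancel the $\vu^\epsilon$ terms and recover the left-hand side. Where you diverge is in handling the minimization side of the gap. The paper plugs the specific comparator $(\vc_A,\vV^{\pi_{\vmu^\epsilon}}_{\vc_A})$ into the inner minimum, invokes Lemma~2 of \cite{Kamoutsi2021} to rewrite $\mathcal{L}((\vc_A,\vV^{\pi_{\vmu^\epsilon}}_{\vc_A}),\vmu^\epsilon)$ as $\alpha\Vert\vc_A-\hat{\vc}\Vert_2^2+\rho_{\vc_A}(\pi_E)-\rho_{\vc_A}(\pi_{\vmu^\epsilon})$ (this is where the induced policy $\pi_{\vmu^\epsilon}$ enters, precisely because $\vmu^\epsilon$ need not be an occupancy measure), and then uses $\rho_{\vc_A}(\pi_{\vmu^\epsilon})\ge\rho^*_{\vc_A}$. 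You instead use the comparator $(\vc_A,\vu_A)$ and the chain $\min_{(\vc',\vu')}\mathcal{L}((\vc',\vu'),\vmu^\epsilon)\le\mathcal{L}((\vc_A,\vu_A),\vmu^\epsilon)\le\max_{\vmu'}\mathcal{L}((\vc_A,\vu_A),\vmu')$, identifying the last quantity with the optimal value. This is cleaner in that it bypasses the external lemma and never needs $\pi_{\vmu^\epsilon}$ at all; its only cost is that you must justify $\max_{\vmu'\in\Delta^{|\St||\Ac|}}\mathcal{L}((\vc_A,\vu_A),\vmu')=\alpha\Vert\vc_A-\hat{\vc}\Vert_2^2+\rho_{\vc_A}(\pi_E)-\rho^*_{\vc_A}$, i.e.\ that restricting the dual variable to the simplex loses nothing at the saddle point. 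That step is not automatic from Proposition~\ref{prop:suboptimalexpert} alone (which concerns \eqref{eq:IO-AL}), but it does follow from it: dual feasibility $\vc_A-\mTg^\top\vu_A\ge\vzero$ together with complementary slackness $\langle\vmu_A,\vc_A-\mTg^\top\vu_A\rangle=0$ and $\vmu_A\in\Delta^{|\St||\Ac|}$ forces $\min_{\vmu'\in\Delta^{|\St||\Ac|}}\langle\vmu',\vc_A-\mTg^\top\vu_A\rangle=0$, so the inner maximum equals $\alpha\Vert\vc_A-\hat{\vc}\Vert_2^2+\langle\vmu_{\pi_E},\vc_A-\mTg^\top\vu_A\rangle=\alpha\Vert\vc_A-\hat{\vc}\Vert_2^2+\rho_{\vc_A}(\pi_E)-\langle\vnu_0,\vu_A\rangle$ as required. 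With that one sentence added, your proof is complete and, if anything, more self-contained than the paper's.
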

\begin{proof}
As $((\vc^\epsilon,\vu^\epsilon),\vmu^\epsilon)$ is an $\epsilon$-approximate solution we know that 
\begin{align*}
    \E\left[\mathcal{L}((\vc^\epsilon,\vu^\epsilon),\vmu_{A}) - \mathcal{L}((\vc_A,\vV^{\pi_{\vmu^\epsilon}}_{\vc_A}),\vmu^\epsilon) \right] \leq \epsilon.
\end{align*}
Moreover, 
\begin{align*}
    &\mathcal{L}((\vc^\epsilon,\vu^\epsilon),\vmu_{A}) - \mathcal{L}((\vc_A,\vV^{\pi_{\vmu^\epsilon}}_{\vc_A}),\vmu^\epsilon)\\
    = &\alpha\Vert \vc^\epsilon - \hat{\vc} \Vert_2^2 - \alpha\Vert \vc_A - \hat{\vc} \Vert_2^2+ \langle \vmu_{\pi_E} -\vmu_A, \vc^\epsilon - \mTg^\top \vu^\epsilon \rangle - \langle \vmu_{\pi_E} -\vmu^\epsilon, \vc_A - \mTg^\top \vV^{\pi_{\vmu^\epsilon}}_{\vc_A} \rangle\\
    \stackrel{(i)}{=}  &\alpha\Vert \vc^\epsilon - \hat{\vc} \Vert_2^2 - \alpha\Vert \vc_A - \hat{\vc} \Vert_2^2+ \rho_{\vc^\epsilon}(\pi_E) - \rho_{\vc^\epsilon}(\pi_A) - \rho_{\vc_A}(\pi_E) + \rho_{\vc_A}(\pi_{\vmu^\epsilon})\\
    \stackrel{(ii)}{\geq}  &\alpha\Vert \vc^\epsilon - \hat{\vc} \Vert_2^2 - \alpha\Vert \vc_A - \hat{\vc} \Vert_2^2+ \rho_{\vc^\epsilon}(\pi_E) - \rho_{\vc^\epsilon}(\pi_A) - \rho_{\vc_A}(\pi_E) + \rho_{\vc_A}(\pi_A),
\end{align*}
where $(i)$ follows from $\langle \vmu_{1} - \vmu_2, -\mTg^\top\vu\rangle = \langle \vnu_0, \vu\rangle - \langle \vnu_0, \vu\rangle = 0$ and Lemma 2 (Appendix B.1) in \cite{Kamoutsi2021} and 
$(ii)$ from $\rho_{\vc_A}^* = \rho_{\vc_A}(\pi_A) \leq \rho_{\vc_A}(\pi)$ for any policy $\pi\in\Pi_0$. Rearranging these terms, we arrive to
\begin{align*}
    \E\left[\alpha\Vert \vc^\epsilon - \hat{\vc}\Vert^2_2 + \rho_{\vc^\epsilon}(\pi_E) -\rho_{\vc^\epsilon}(\pi_A) \right] \leq \epsilon +\alpha\Vert \vc_A - \hat{\vc}\Vert^2_2 + \rho_{\vc_A}(\pi_E) -\rho_{\vc_A}^*.
\end{align*}
\end{proof}
Proposition \ref{prop:epsilonapproxOptimality} establishes a connection between expected $\epsilon$-approximate solutions for \eqref{eq:newproblem} and its optimal solution. Specifically, it shows that an expected $\epsilon$-approximate solution achieves an objective value that is at most $\epsilon$ worse than that of the optimal solution. Note that we do not assume that $\vmu^\epsilon$ is an occupancy measure, which is not always the case for an arbitrary $\vmu\in\Delta^{|\St||\Ac|}$.

\section{Numerical experiments}

We analyze the performance of the proposed framework using two case studies: a low-dimensional inventory management problem and a higher-dimensional Gridworld problem. The inventory example serves as a tractable system to build intuition and offer insights into the practical selection of proxy cost vector $\hat{\vc}$. Moreover, we conduct ablation analyses on prior misspecification and expert suboptimality, and execute a direct comparison with the convex hull methodology of \cite{Kamoutsi2021}. The Gridworld example is employed to demonstrate the method's convergence properties and quantify the specific impact of the regularization term on the learned cost vector in a higher-dimensional space.

To facilitate reproducibility, the complete source code and experimental setups are publicly available at \url{https://github.com/EstebanLeiva/apprenticeshiplearning.git}. Because SMD-RLfD yields expected $\epsilon$-approximate solutions for \eqref{eq:newproblem}, we account for stochasticity by executing the algorithm for $N$ independent runs. Each run uses randomly generated initial vectors $((\vc_0, \vu_0), \vmu_0)$ and proceeds for $T$ iterations; we report the average of the resulting outputs. 

\subsection{Single-product inventory control}
The single-product stochastic inventory control problem is a classic model for finite state MDPs \citep{Puterman1994}. It is defined by a state space $\mathcal{S} = \{0, 1, \dots, M\}$ representing the inventory level on hand, and an action space $\mathcal{A}(s) = \{0, 1, \dots, M-s\}$ representing the quantity ordered, where $M=15$ is the maximum capacity. The system transition from the inventory level $L = s+a$ to the next state $s' = \max(0, L - D)$ is governed by a stochastic demand $D$, which is modeled using a known Poisson distribution with mean $\lambda=10$. The goal is to minimize the expected operational cost $c(s, a)$, which is linear with respect to order cost $ c_o=3$, holding cost $ c_h =0.5$, and sell price $c_s=15$. This cost is defined by the equation $c(s, a) = c_o \cdot a + c_h \cdot (s + a) - c_s \cdot \mathbb{E}[\min(s+a, D)]$, where $\mathbb{E}[\min(s+a, D)]$ is the expected number of units sold. Note that the dimension of the cost vector depends on $M$, but it is completely determined by the three parameters above. At each timestep, the expert observes the inventory level $s$ and selects an order quantity $a$ (assuming instantaneous replenishment), after which demand $d$ is fulfilled with no backlog, and the system updates the inventory level for the next iteration. Finally, the discount factor is fixed to 0.9, and initial states are chosen uniformly.

Solving \eqref{eq:MDP-P} to optimality yields a policy that consistently restocks to a level of 14 units (Figure \ref{fig:inventory_opt_pol}). The policy avoids utilizing the full capacity, which can be understood as a consequence of the relationship between holding costs and expected demand.
\begin{figure}[ht]
    \centering
    \includegraphics[width=0.8\linewidth]{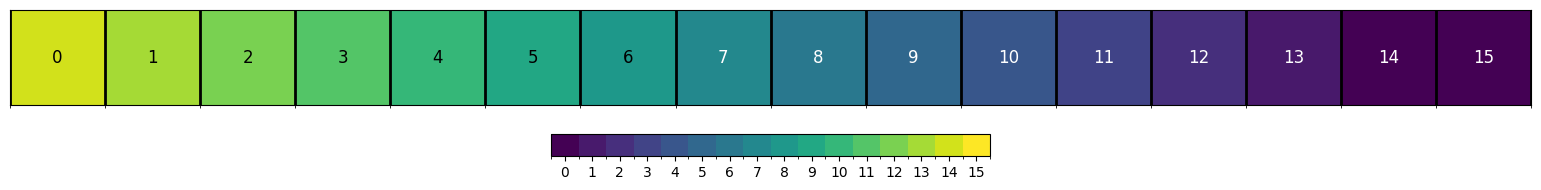}
    \caption{Visualization of the optimal policy. Each cell represents a state, with the inventory level displayed in the center. The color coding indicates the optimal order quantity for that state. For example, at an inventory level of s=1, the optimal action is to order 13 units.}
    \label{fig:inventory_opt_pol}
\end{figure}
We begin with a single instance of our framework. Consider a suboptimal expert whose policy is optimal with respect to a misspecified cost function parameterized by $c_o=5$, $c_h=8$, and $c_s=15$. Intuitively, because this expert perceives a significantly higher holding cost, the resulting policy maintains a reduced inventory level (Figure \ref{fig:inventory_subopt_pol}).
\begin{figure}[ht]
    \centering
    \includegraphics[width=0.8\linewidth]{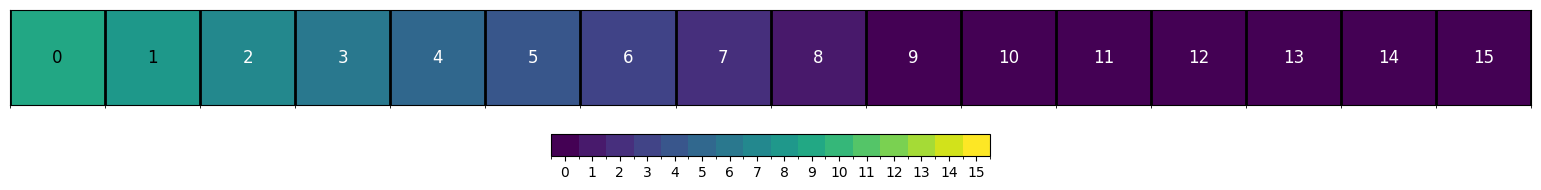}
    \caption{Visualization of the suboptimal policy.}
    \label{fig:inventory_subopt_pol}
\end{figure}
Having defined the suboptimal expert, we next require a proxy cost vector $\hat{\vc}$ representing our prior beliefs on the cost vector. Crucially, this prior must be established ``independently'' of the observed expert to avoid circularity. In this context, independence implies that $\hat{\vc}$ is derived exclusively from exogenous information, such as market prices or physical constraints, rather than being inferred from the expert's trajectories. If $\hat{\vc}$ were derived exclusively from the observed demonstrations, it would fall within the inverse optimality set $\Theta^{\text{inv}}(\vmu_{\pi_E})$, thereby eliminating the necessary trade-off between the expert and the prior (as illustrated in Figure \ref{fig:illustration_IOAL}). Therefore, based strictly on the problem structure and excluding expert data, we suppose $\hat{\vc}$ is defined using the parameters $(c_o,c_h,c_s)$=(4,2,14).
\begin{figure}[ht]
    \centering
    \includegraphics[width=0.8\linewidth]{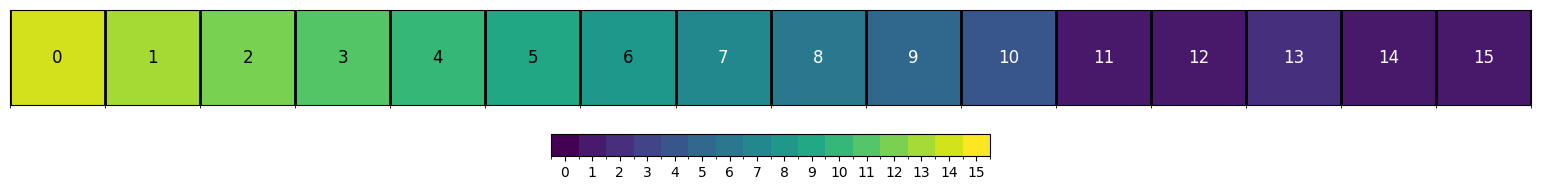}
    \caption{Visualization of the apprentice policy.}
    \label{fig:inventory_apprentice_pol}
\end{figure}
We set $\alpha=0.1$ and execute the SMD algorithm for $N=10^3$ independent runs, with $T=10^4$ iterations and both step-sizes fixed at $10^{-2}$. Figure \ref{fig:inventory_apprentice_pol} illustrates the policy induced by $\vmu^\epsilon$. As observed, the learned policy is identical to the optimal policy with the exception of the last five inventory states, where it orders between one and two units. This demonstrates that the method is capable of recovering a near-optimal policy. To recover the scalar parameters $(c_o, c_h, c_s)$ from the high-dimensional learned cost vector $\vc_A$, we solve a linear least-squares regression problem of the form $\min_{\mathbf{x}} \|\mathbf{A} \mathbf{x} - \mathbf{c}_A\|_2^2$. Here, $\mathbf{x} = [c_o, c_h, c_s]^\top$, and the row of the design matrix $\mathbf{A}$ corresponding to a state-action pair $(s,a)$ is defined by the feature vector $\mathbf{A}_{(s,a)} = [ a,  s+a, -\mathbb{E}[\min(s+a, D)]]$. Solving this regression, we obtain the recovered parameters $(c_o, c_h, c_s) = (3.4, 2.3, 14.1)$. The following sections discuss the effect of the choice of $\hat{\vc}$ and the suboptimality of the expert in more detail.

\subsubsection{Misspecified prior beliefs}
We assess the sensitivity of our framework to misspecifications in the proxy cost vector $\hat{\vc}$. To quantify this, we consider the single-product inventory problem with true parameters $(c_o, c_h, c_s)=(3, 0.5, 15)$. We fix the inputs to the problem \eqref{eq:newproblem} as follows: we use an optimal expert (see Figure \ref{fig:inventory_opt_pol}) and set the regularization parameter $\alpha=0.1$. With these inputs held constant, we perturb the proxy cost vector to analyze the effect of deviations from the ground truth. We introduce noise to the proxy by defining $\hat{c}_o = 3 + \zeta(0.3u_1)$, $\hat{c}_h = 0.5 + \zeta(0.05u_2)$, and $\hat{c}_s = 15 + \zeta(1.5u_3)$, where $u_i \sim U[-1,1]$ are i.i.d.\ random variables. This formulation scales the perturbation such that $\zeta=1$ corresponds to a maximum deviation of $10\%$ per coordinate. We evaluate this sensitivity across a grid of 100 equally spaced values for $\zeta \in [0, 10]$, representing error magnitudes ranging from $0\%$ to $100\%$. For each $\zeta$, we generate 5 independent realizations of the random noise, run the SMD algorithm, and report the average of the recovered costs and the average $\ell_1$ distance between the learned cost vector $\vc^\epsilon$ and the true cost vector $\vc_{\text{true}}$.
\begin{figure}
    \centering
    \subfigure[Recovered cost vectors.]{
        \includegraphics[width=0.45\textwidth]{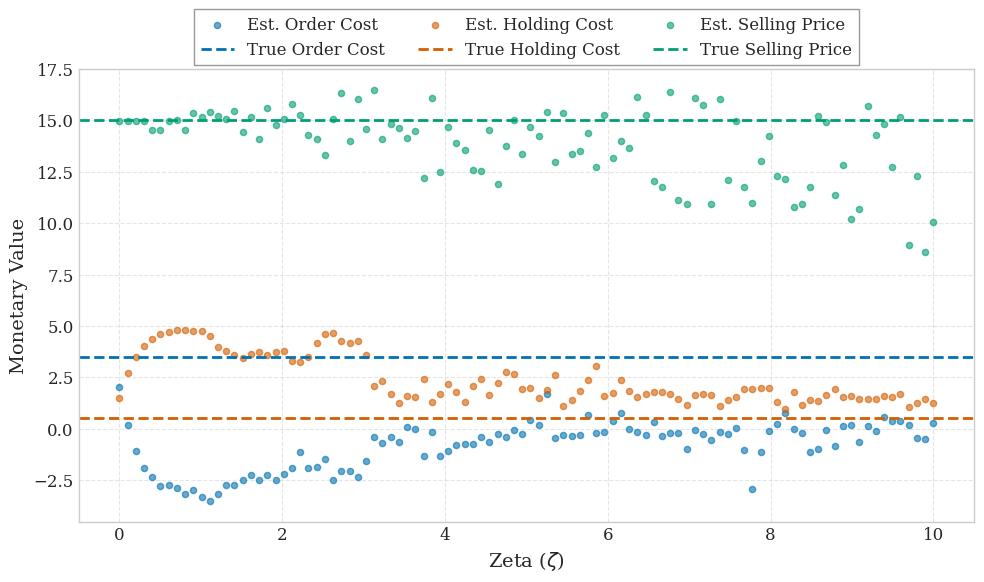}
        \label{fig:inventory_epsilon_true_vs_estimate}
    }
    \hspace{0.5cm}
    \subfigure[$\ell_1$ distance between the learned cost vector and the truth.]{
        \includegraphics[width=0.45\textwidth]{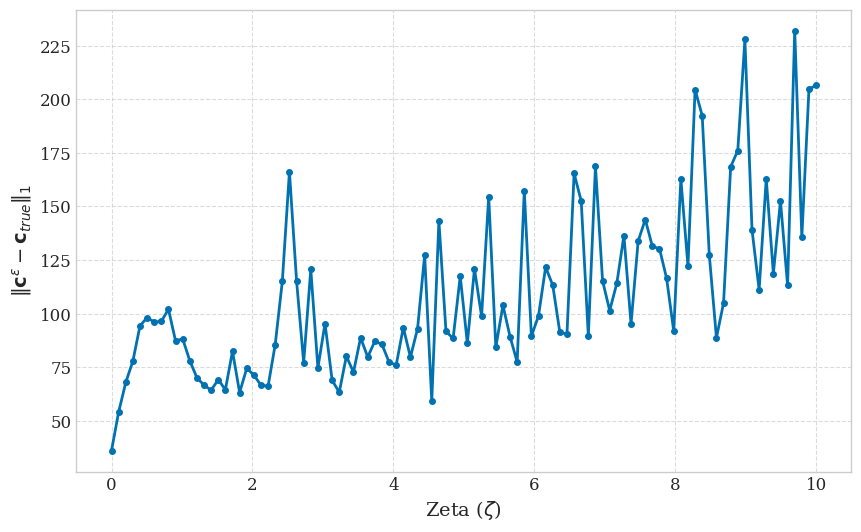}
        \label{fig:inventory_epsilon_l1}
    }
    \caption{Effect of perturbations in the proxy cost vector $\hat{\vc}$ on the learned cost.}
\end{figure}
Figure \ref{fig:inventory_epsilon_true_vs_estimate} shows the difference between the recovered parameters that define the cost vector and the true parameters, while Figure \ref{fig:inventory_epsilon_l1} shows the $\ell_1$-norm between the actual cost vectors. Note that priority is given to the highest cost (selling price), and the distance increases with $\zeta$ as expected. Let us remark that SMD-RLfD is an approximate algorithm; hence, the differences are never null.

\subsubsection{Expert suboptimality and regularization}
Our framework is designed to mitigate potential expert suboptimality by regularizing the solution against a prior belief. Our objective is to demonstrate that if the expert is suboptimal, a correctly specified prior can ``help'' the learning process and recover a performant cost vector and policy. To quantify this impact, we revisit the inventory problem defined by $(c_o, c_h, c_s)=(3, 0.5, 15)$. We establish a controlled setting by fixing a suboptimal expert given by Figure \ref{fig:inventory_subopt_pol} and providing the true cost vector as the proxy $\hat{\vc}$. We then vary the regularization strength $\alpha$ across the interval $[0,0.5]$ and evaluate the learned cost vector, see Figure \ref{fig:inventory_alpha_vs_cost}, and the resulting apprentice policy's performance against the true cost vector, see Figure \ref{fig:inventory_discounted_vs_alpha}. As anticipated, the use of an informative proxy cost vector $\hat{\vc}$ facilitates the recovery of the true cost vector as the regularization parameter $\alpha$ increases. Moreover, the policy induced by the learned $\vmu^\epsilon$ achieves a consistently lower discounted expected cost with respect to the true cost vector $\vc_{\text{true}}$ across the tested values of $\alpha$. This indicates that the learned policy is robust to variations in $\alpha$ and outperforms the expert. Note that the discounted expected cost for the policy induced by $\pi_{\vmu^\epsilon}$ was computed using policy evaluation.

\begin{figure}
    \centering
    \subfigure[Recovered cost vectors.]{
        \includegraphics[width=0.45\textwidth]{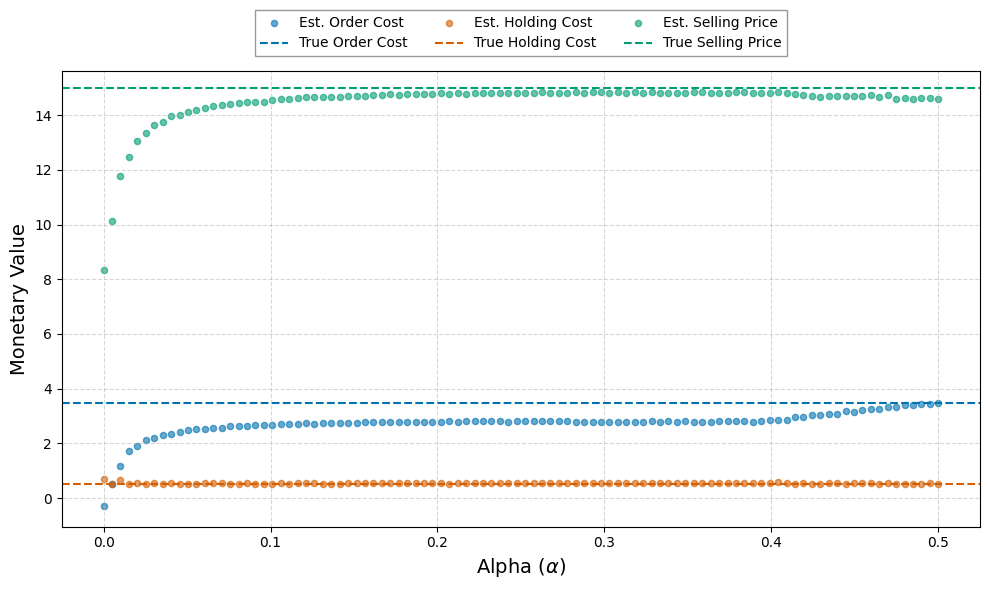}
        \label{fig:inventory_alpha_vs_cost}
    }
    \hspace{0.5cm}
    \subfigure[Discounted expected cost.]{
        \includegraphics[width=0.45\textwidth]{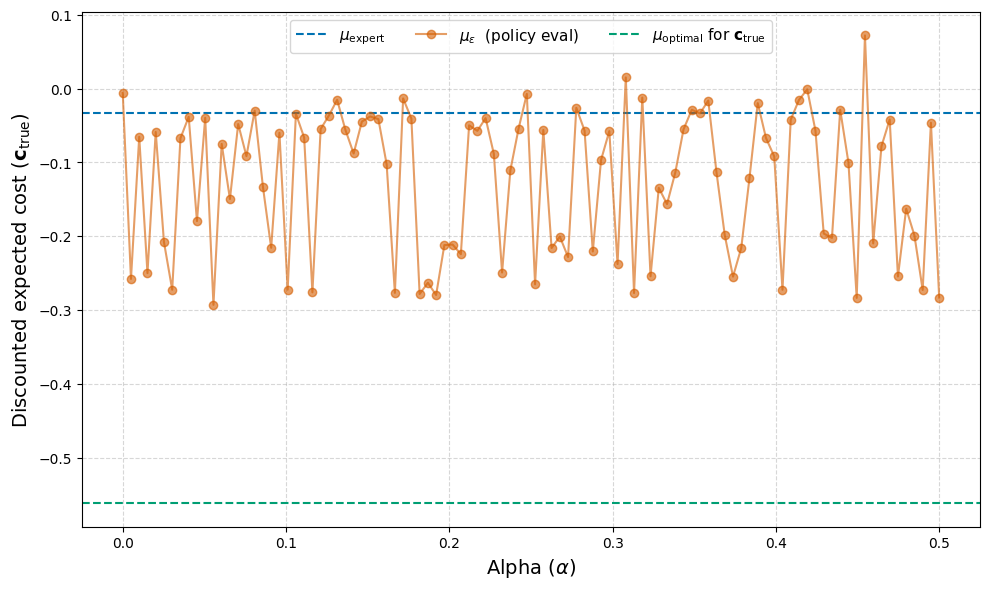}
        \label{fig:inventory_discounted_vs_alpha}
    }
    \caption{Learned cost vectors and apprentice policies under varying regularization levels $\alpha$.}
\end{figure}

Using the same experimental setting, we analyze the quantity $\alpha \Vert \hat{\vc} - \vc^\epsilon \Vert + \rho_{\vc^\epsilon}(\pi_E) - \rho_{\vc^\epsilon}(\pi_A)$. As established in Proposition~\ref{prop:epsilonapproxOptimality}, this expression links the outputs of SMD-RLfD to the optimal values of~\ref{eq:IO-AL}, which are characterized in Proposition~\ref{prop:suboptimalexpert}. Figure~\ref{fig:inventory_alpha_proposition} illustrates the trade-off given by the proposition. The left vertical axis displays the discounted expected cost with respect to the learned cost vector $\vc^\epsilon$ for three policies: the expert policy $\vmu_E$, the policy induced by $\vmu^\epsilon$, and the optimal policy for $\vc^\epsilon$. The right vertical axis plots the norm of the difference $\Vert \hat{\vc} - \vc^\epsilon \Vert$ across varying values of $\alpha$.
\begin{figure}
    \centering
    \includegraphics[width=0.5\linewidth]{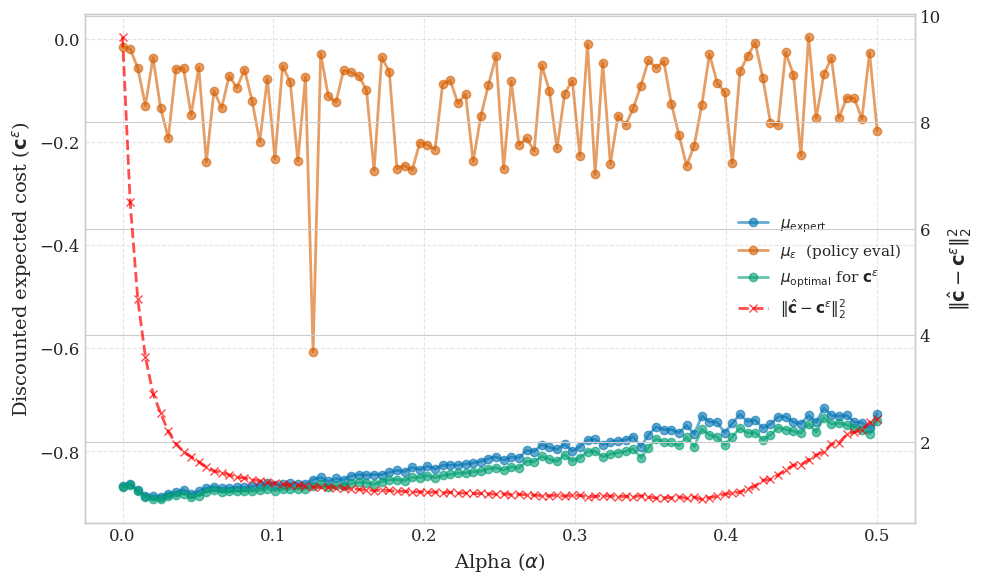}
    \caption{Illustration of Propositions \ref{prop:suboptimalexpert} and \ref{prop:epsilonapproxOptimality}.}
    \label{fig:inventory_alpha_proposition}
\end{figure}

We observe a clear balance between these terms: as $\alpha$ increases, the optimization places greater weight on the norm, reducing $\Vert \hat{\vc} - \vc^\epsilon \Vert$ while allowing the performance gap $\rho_{\vc^\epsilon}(\pi_E) - \rho_{\vc^\epsilon}(\pi_A)$ to widen. Conversely, when $\alpha$ is small, the performance gap becomes negligible, while the norm difference increases inversely. In any case, note that $\vc^\epsilon$ is chosen such that the expert policy is nearly optimal. Finally, we observe that the discounted expected cost of the policy induced by $\vmu^\epsilon$ remains stable to variations in $\alpha$.

\subsubsection{Restricting the search space to a convex hull}
As it was noted in Section \ref{sec:Lfd-AL}, the apprenticeship learning literature typically assumes that the true cost vector $\vc_{\text{true}}$ lies within the convex hull of a prespecified set of basis vectors $\{\vc_i\}_{i=1}^{n_c}$ \citep{Syed2008, Kamoutsi2021}. While this assumption reduces the dimensionality of the search space from $|\St||\Ac|$ to $n_c$, it limits flexibility and requires a robust method for feature engineering. Since the algorithm presented in \cite{Kamoutsi2021} targets a linearly relaxed saddle-point formulation with different convergence properties, we adapt SMD-RLfD to optimize over the vectors $\vc_\vw = \mC\vw$, where $\vw\in\Delta^{n_c}$ and the columns of $\mC \in \R^{|\St||\Ac|\times n_c}$ correspond to the basis vectors. This adaptation entails updating the gradient estimators for the objective function \eqref{eq:newproblem} and shifting the mirror descent step from the box $\sB_1^{|\St||\Ac|}$ to the simplex $\Delta^{n_c}$. Moreover, setting $\alpha=0$ in this adapted formulation recovers the problem solved by \cite{Kamoutsi2021}, as guaranteed by the equality in Theorem \ref{theorem:IO-IRL}. This enables a controlled comparison of the optimization domains by applying both the original and modified algorithms to the equivalent unregularized problem where $\alpha=0$.

The inventory problem serves as an ideal testbed for the convex hull formulation because its cost vector is a linear combination of three features: ordering $\vc^o$, holding $\vc^h$, and sales $\vc^s$. This structure effectively reduces the dimensionality of the cost parameters from $|\St| |\Ac|$ to $3$. We compare the convergence of SMD-RLfD against the modified variant (SMD-H) that optimizes weights $\vw \in \Delta^3$, when considering an optimal expert and maintaining common algorithm parameters $T=10,000$ and $N=100$. Figure \ref{fig:inventory_convergence_comparison} displays the $\ell_1$-norm of the difference between solutions at consecutive iterations. The results indicate that the convex hull approach accelerates the convergence of $\vw$, a behavior consistent with the suitability of mirror descent for optimization over a simplex domain.
\begin{figure}
    \centering
    \includegraphics[width=0.5\linewidth]{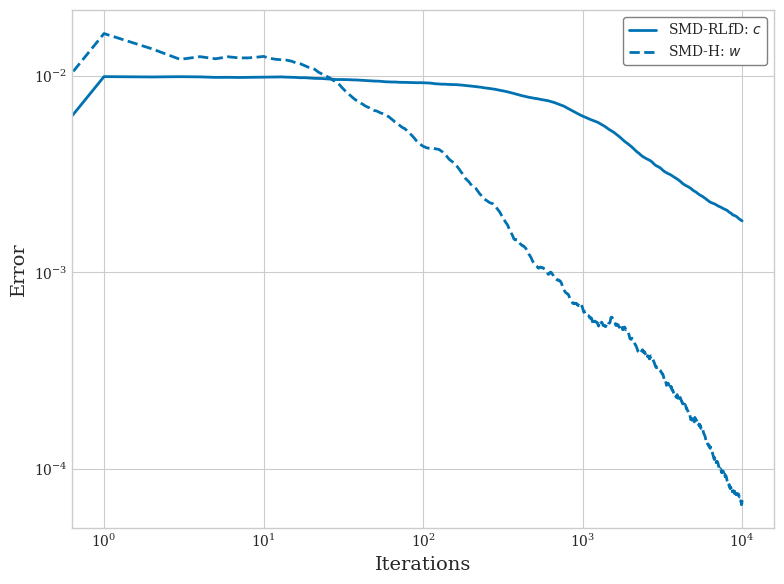}
    \caption{Impact of the search space geometry (convex hull vs. box) on the convergence rate of the cost vector.}
    \label{fig:inventory_convergence_comparison}
\end{figure}

We extend the experimental evaluation to varying state space dimensions. For each configuration, we consider an optimal expert policy using the fixed parameters $(c_o, c_h, c_s) = (3, 0.5, 15)$. Figure \ref{fig:inventory_apprentice_comparison} presents the discounted expected cost of the resulting apprentice policies with respect to the true cost vector. We observe that while the convex hull formulation yields similar performance in smaller state spaces, our framework outperforms it as the problem dimension grows. This trend may be attributed to the stochastic nature of the solution algorithms. The box formulation exhibits greater flexibility in adapting to error variations, whereas the convex hull proves too rigid to adapt effectively in higher dimensions.
\begin{figure}[ht]
    \centering
    \includegraphics[width=0.5\linewidth]{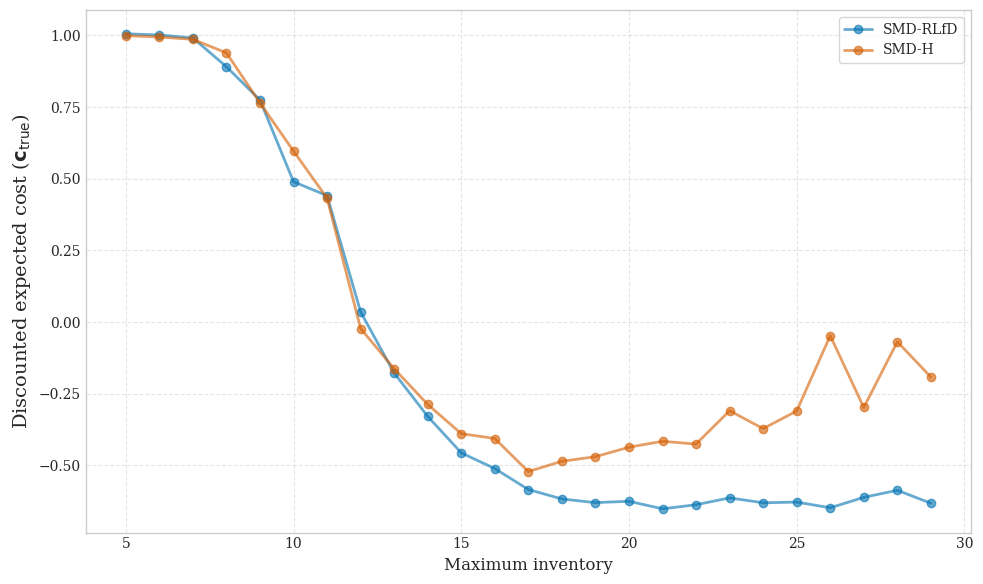}
    \caption{Comparison of the discounted expected cost between the proposed framework and the convex hull approach across state space sizes.}
    \label{fig:inventory_apprentice_comparison}
\end{figure}

\subsection{Gridworld} \label{sec:gridworld}
We use a standard $H \times W$ Gridworld environment (Figure \ref{fig:gridworld}), where each cell is a unique state. Obstacles (shown in red) incur a cost of $1$, terminal cells (shown in green) incur a cost of $-1$, and all other cells (shown in white) incur a cost of $0$. The action set is $\{\text{up}, \text{down}, \text{left}, \text{right}\}$, but a ``wind'' introduces a $20\%$ chance of drifting left or otherwise altering the intended move. If the resulting move is out of bounds, the agent remains in the same cell. The discount factor is $0.7$, and initial states are chosen uniformly. We solve the corresponding \eqref{eq:MDP-P} with a linear solver to obtain the optimal occupancy measure (Figure \ref{fig:optimal-policy}). To construct a suboptimal expert, we terminate the solver early and use the resulting $\vmu$ as the expert's occupancy measure (Figure \ref{fig:expert-policy}). To visualize the policies induced by these occupancy measures, we extract the most likely action at each state by computing $\argmax_{a\in \Ac} \vmu(s,a)$ and display it in the corresponding cell.
\begin{figure}[ht]
    \centering
    \subfigure[Gridworld environment.]{
        \includegraphics[width=0.25\textwidth]{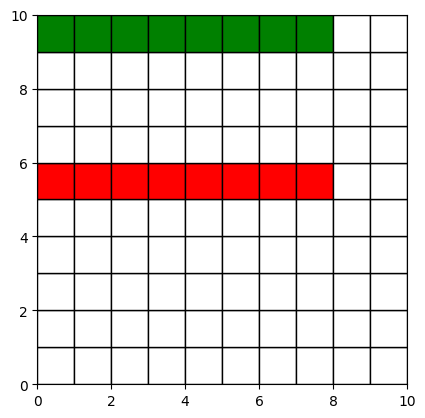}
        \label{fig:gridworld}
    }
    \hspace{0.5cm}
    \subfigure[Optimal policy.]{
        \includegraphics[width=0.25\textwidth]{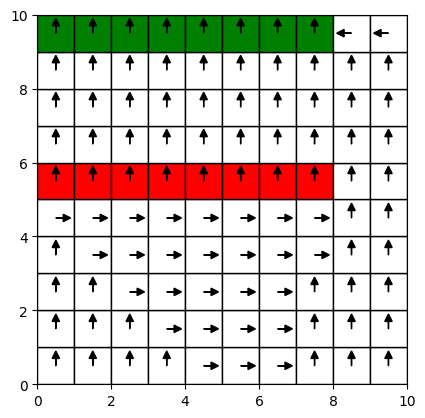}
        \label{fig:optimal-policy}
    }
    \hspace{0.5cm}
    \subfigure[Expert's policy.]{
        \includegraphics[width=0.25\textwidth]{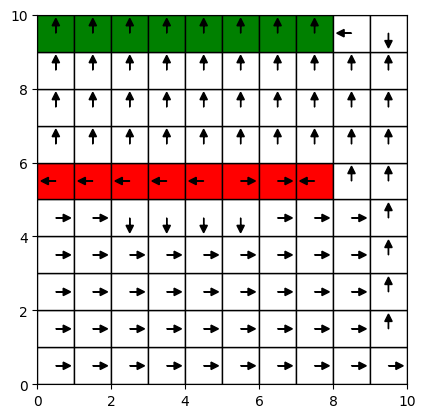}
        \label{fig:expert-policy}
    }
    \caption{Illustration of the Gridworld environment, the optimal policy, and the expert's policy.}
\end{figure}
Unlike the inventory control example, defining the cost vector in this setting as a convex combination of basis vectors is not straightforward. This highlights a key advantage of our framework's flexibility: it obviates the need for preliminary feature engineering or basis estimation processes that are often time-consuming and prone to error. Consequently, our method is directly applicable without requiring these preprocessing steps, since even without a prior belief of the cost vector, we can just set $\alpha=0$. Furthermore, modeling this region as a convex hull would require $2^{|\St||\Ac|}$  basis vectors, rendering the approach computationally intractable, even for small state-action spaces.

\subsubsection{Impact of regularization on the learned cost vector}
We study the effect of the regularization term $\alpha\Vert\mathbf{c} - \hat{\mathbf{c}}\Vert^2_2$ on problem \eqref{eq:newproblem} to demonstrate that incorporating prior knowledge, even when imperfect, yields learned cost vectors that align more closely with the true environment. To this end, we define a cost vector $\hat{\vc}$ that is zero everywhere except for a randomly selected subset of obstacle and goal states. This choice was made based on practical considerations, as in most real-world scenarios, we rarely have access to a highly accurate estimate of the cost vector. Specifically, for obstacles, $\hat{\vc}$ is set to $1$ for a randomly selected $50\%$ of the obstacle states, while for goal states, $\hat{\mathbf{c}}$ is set to $-1$ for a randomly chosen $50\%$ of them. Moreover, regarding the algorithm's parameters, we chose $N=20$, $T=10^{5}$, and both step-sizes as $10^{-2}$. \par

Figure \ref{fig:effect-reg} depicts the learned cost vectors for each action $\{\text{up}, \text{down}, \text{left}, \text{right}\}$ under varying levels of regularization $\alpha$. As $\alpha$ increases, the cost vectors display more white regions, indicating near-zero cost values, and more accurately highlight the main obstacles. In turn, this leads to a better approximation of the true cost structure. However, when the regularization is too strong, it may overly penalize costs associated with obstacle areas that are less frequently demonstrated, thereby ignoring parts of the environment that do not appear in the demonstration data. Consequently, selecting an appropriate $\alpha$ is crucial to achieve a cost vector that balances fidelity to the true environment and robustness in identifying cost structures that are not present in the estimate $\hat{\vc}$.
\begin{figure}
    \centering
    \begin{tabular}{ M{0.75cm} M{0.19\textwidth} M{0.19\textwidth} M{0.19\textwidth} M{0.19\textwidth} }
    
         & $\alpha=0.1$ & $\alpha=0.005$ & $\alpha=0.001$ & $\alpha=0$ \\
         
         \text{up} 
            & \includegraphics[width=\linewidth]{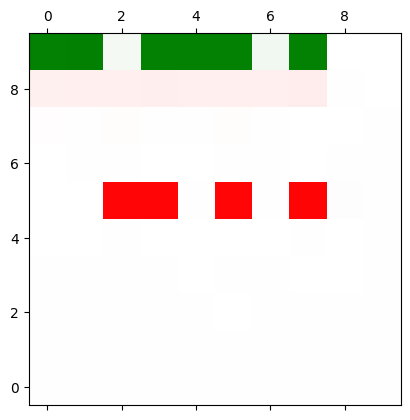}
            & \includegraphics[width=\linewidth]{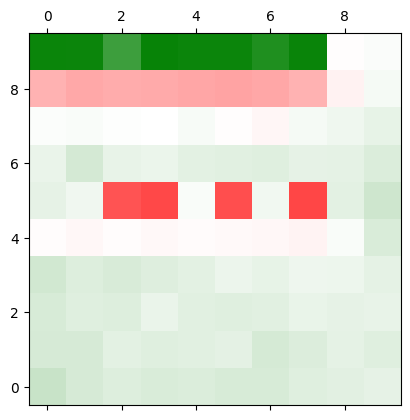}
            & \includegraphics[width=\linewidth]{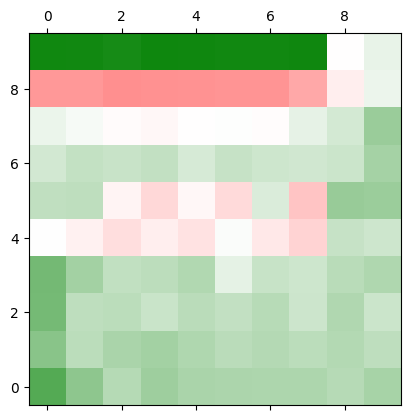}
            & \includegraphics[width=\linewidth]{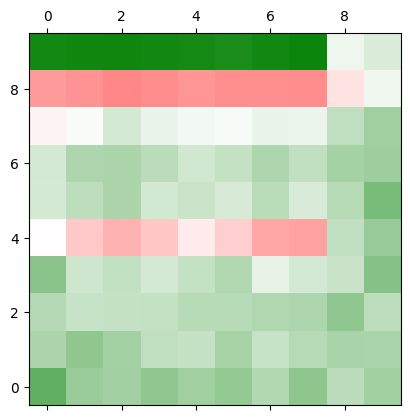} \\
            
         \text{down}
            & \includegraphics[width=\linewidth]{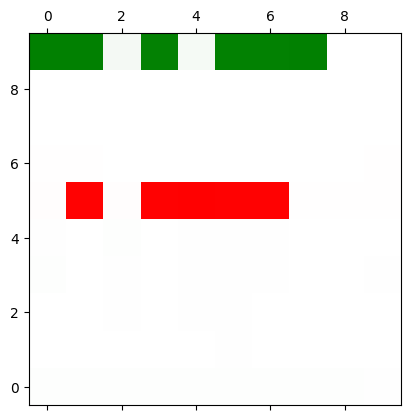}
            & \includegraphics[width=\linewidth]{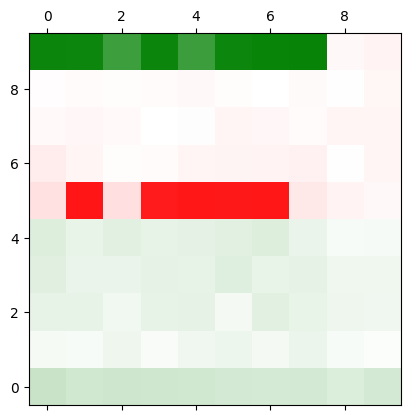}
            & \includegraphics[width=\linewidth]{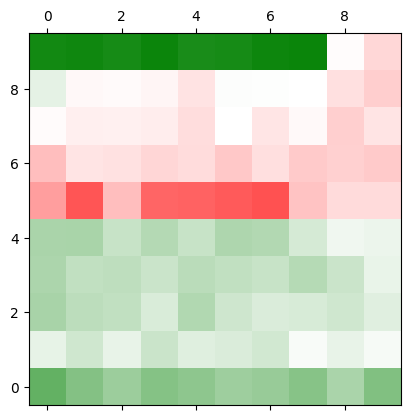}
            & \includegraphics[width=\linewidth]{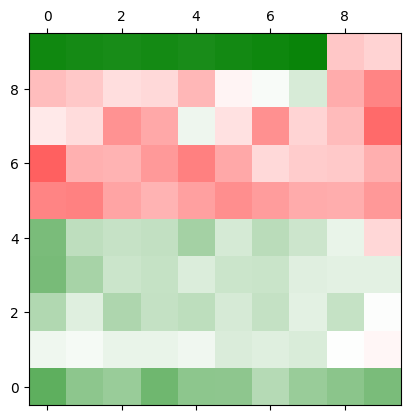}\\
            
        \text{left}
            & \includegraphics[width=\linewidth]{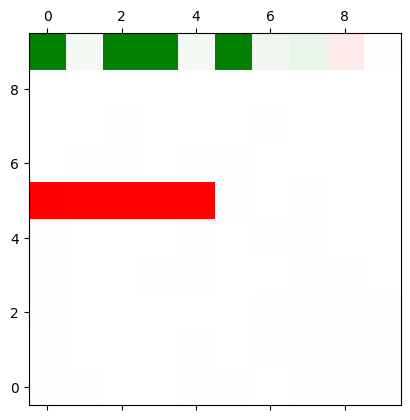}
            & \includegraphics[width=\linewidth]{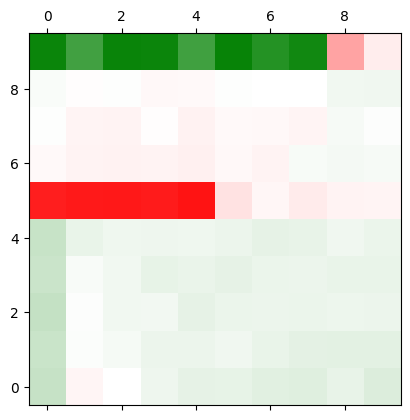}
            & \includegraphics[width=\linewidth]{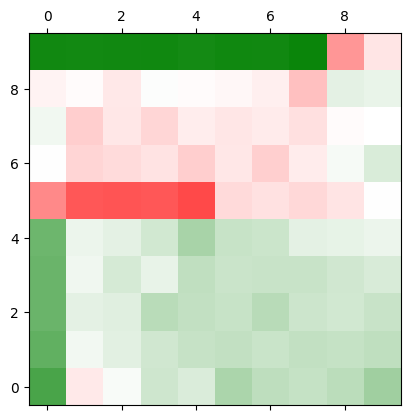}
            & \includegraphics[width=\linewidth]{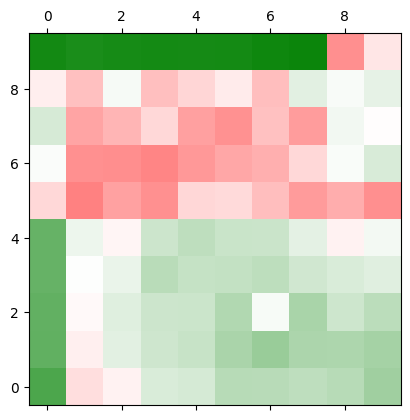} \\
            
        \text{right}
            & \includegraphics[width=\linewidth]{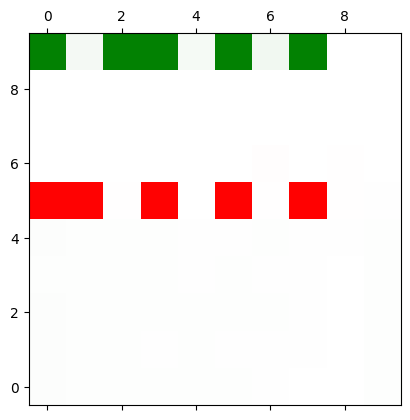}
            & \includegraphics[width=\linewidth]{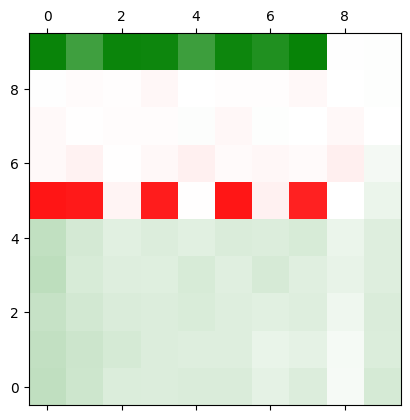}
            & \includegraphics[width=\linewidth]{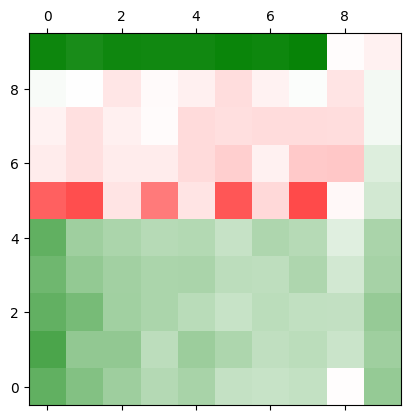}
            & \includegraphics[width=\linewidth]{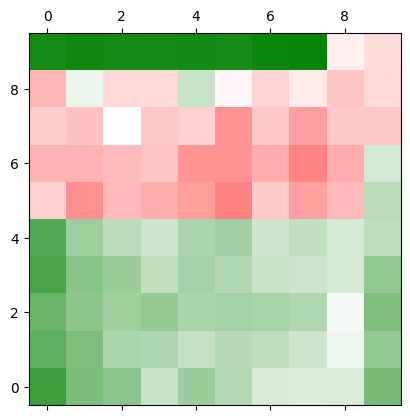}
            
    \end{tabular}
    \caption{Effect of the regularization on the cost vector.}
    \label{fig:effect-reg}
\end{figure}

Figure \ref{fig:effect-reg-pol} displays the apprentice policy obtained using SMD-RLfD for various values of $\alpha$. Notably, when $\alpha$ is set to 0.005 or 0.001, the apprentice policy closely resembles the one derived by exclusively weighting the expert’s policy (i.e., $\alpha=0$). This observation is significant, as the previous analysis demonstrated that incorporating regularization yields a cost vector for the apprentice policies that aligns more closely with the true cost vector.
\begin{figure}
    \centering
    \begin{tabular}{ccccc}
      & $\alpha=0.1$ & $\alpha=0.005$  & $\alpha=0.001$  & $\alpha=0$ \\
         & \includegraphics[width=0.2\textwidth]{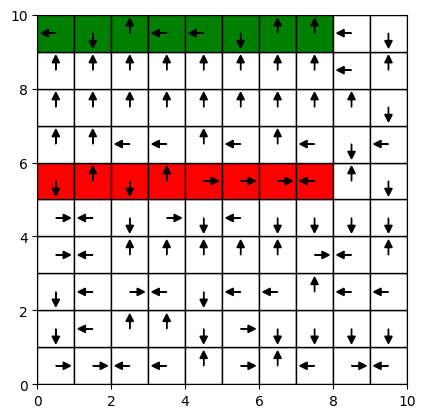}
         & \includegraphics[width=0.2\textwidth]{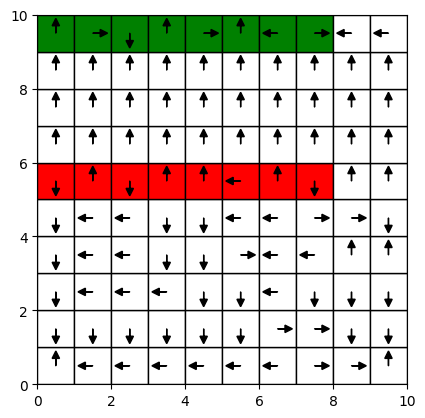}
         & \includegraphics[width=0.2\textwidth]{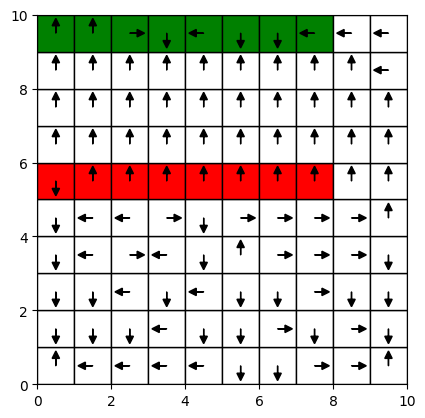}
         & \includegraphics[width=0.2\textwidth]{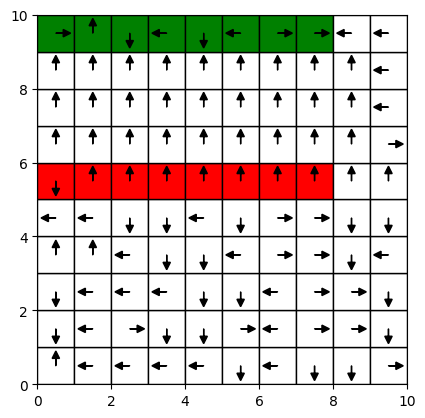} \\
    \end{tabular}
    \caption{Effect of regularization on the apprentice policy.}
    \label{fig:effect-reg-pol}
\end{figure}

\subsubsection{Convergence and regularization}
In the same experimental setting, we examined the convergence of the solutions up to iteration $t<T$ by computing the $\ell_1$-norm of the difference between the solutions at iterations $t$ and $t-1$. We plotted this norm for each of the $\alpha$ values considered in the previous experiments. According to this sense of convergence, all $\alpha$ values exhibit comparable convergence rates for $\vu$ and $\vmu$. However, for $\vc$, the convergence rate at $\alpha=0.1$ is faster than that observed for smaller values of $\alpha$. This behavior aligns with the increased penalty imposed for deviating from $\hat{\vc}$ under stronger regularization, thereby accelerating convergence for $\vc$.

\begin{figure}[ht]
    \centering
    \begin{tabular}{ccccc}
      & $\alpha=0.1$ & $\alpha=0.005$  & $\alpha=0.001$  & $\alpha=0$ \\
         & \includegraphics[width=0.215\textwidth]{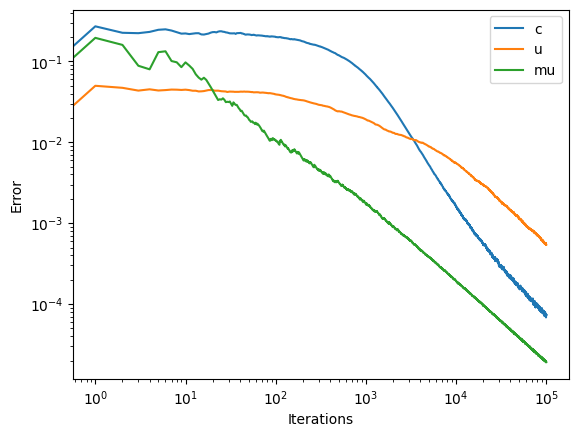}
         & \includegraphics[width=0.215\textwidth]{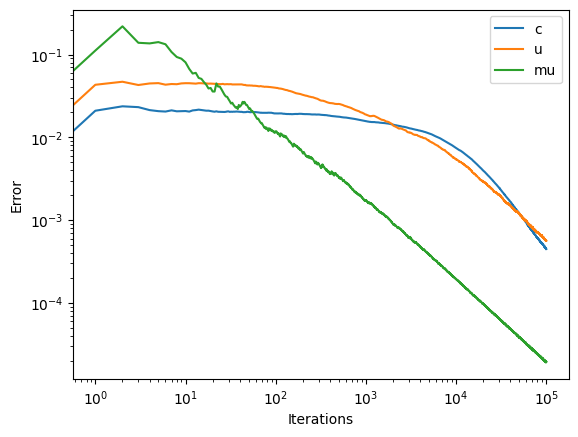}
         & \includegraphics[width=0.215\textwidth]{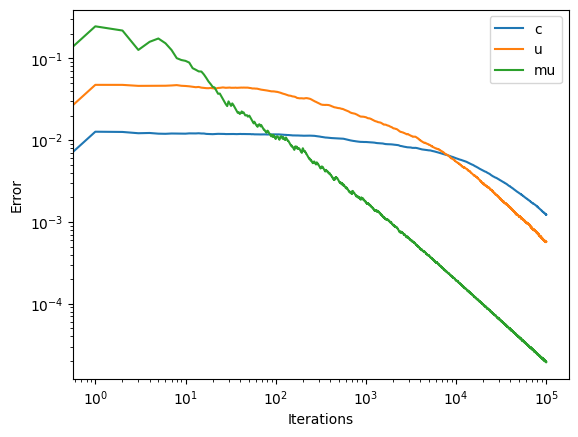}
         & \includegraphics[width=0.215\textwidth]{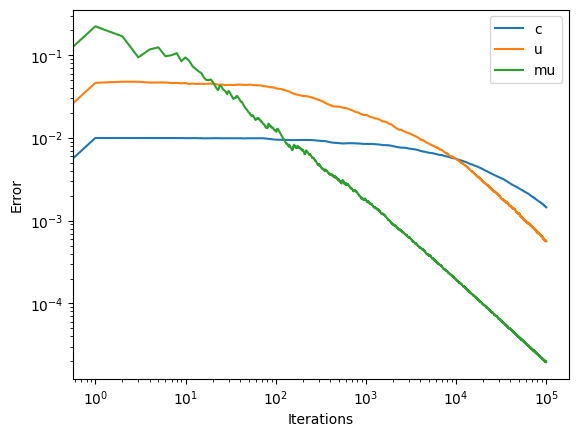} \\
    \end{tabular}
    \caption{Effect of regularization on the difference of solutions found between iterations.}
    \label{fig:effect-reg-conv}
\end{figure}

However, it is important to note that Theorem \ref{theorem:algconvergence} characterizes convergence in terms of the duality gap. In Figure \ref{fig:effect-reg-dualitygap}, we compute the duality gap of the solution every 25 iterations, using IPOPT \citep{Wachter2006}. The algorithm parameters remain unchanged, except that, due to computational constraints, we reduce the grid size to $6\times6$ and limit the execution to $T=10^4$ iterations. As expected from the results of Theorem \ref{theorem:algconvergence}, stronger regularization leads to slower convergence of the duality gap, a trend illustrated in this figure. While we did not use the theoretically prescribed step sizes due to their small magnitude, the figure still offers valuable insights into how regularization affects the convergence behavior.
\begin{figure}[ht]
    \centering
    \begin{tabular}{cccc}
      & $\alpha=0.25$  & $\alpha=0.1$  & $\alpha=0$ \\
         & \includegraphics[width=0.28\textwidth]{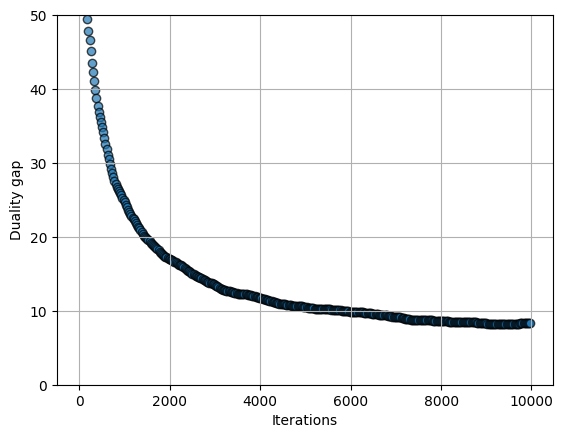}
         & \includegraphics[width=0.28\textwidth]{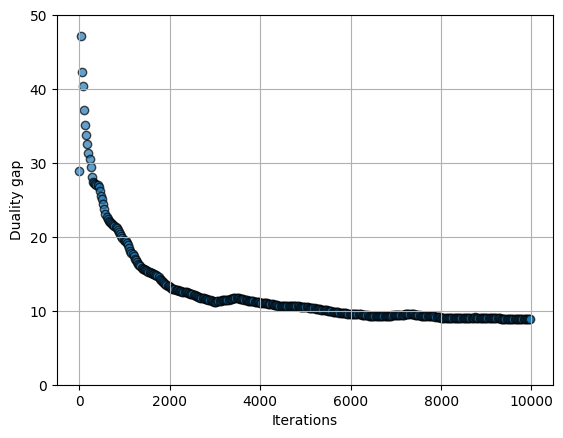}
         & \includegraphics[width=0.28\textwidth]{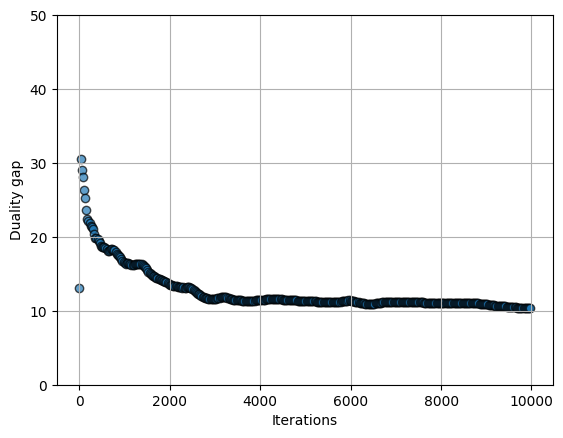} \\
    \end{tabular}
    \caption{Duality gap convergence.}
    \label{fig:effect-reg-dualitygap}
\end{figure}

\section{Conclusion}
This work establishes a unified perspective on IRL and AL by grounding them in the IO framework for MDPs. A key theoretical contribution is the demonstration that the convex-analytic AL formalism proposed by \cite{Kamoutsi2021} emerges as a specific relaxation of this broader framework. Central to the proposed methodology is the incorporation of prior beliefs on the cost function to address the inherent ill-posedness of the IRL problem, where multiple cost functions can explain the same expert behavior. By introducing a proxy cost vector $\hat{\vc}$ and a regularization parameter $\alpha$, we formulate the \eqref{eq:IO-AL} problem tailored for the scenarios in which the expert is suboptimal. This approach is designed to mitigate the impact of expert suboptimality, with the regularization term serving as a mechanism to modulate the trade-off between fidelity to the observed demonstrations and adherence to the informative prior.

To solve the optimization problem posed in \eqref{eq:IO-AL}, we leverage Lagrangian duality to recast the formulation as a convex-concave-min-max problem suitable for stochastic mirror descent methods. We introduce the SMD-RLfD algorithm, an adaptation of the method proposed by \cite{Jin2020} to solve MDPs, and derive the necessary gradient estimators for this framework. Furthermore, we establish convergence guarantees, explicitly characterizing the relationship between the expected $\epsilon$-approximate solutions generated by SMD-RLfD and the optimal solution of the original formulation \eqref{eq:IO-AL}.

Numerical experiments underscore the critical role of regularization on learning performance. Our ablation analysis reveals that in scenarios involving a suboptimal expert and an informative proxy, increasing the regularization parameter $\alpha$ significantly improves the alignment between the recovered cost vectors and the ground truth. The resulting induced policy is notably robust to variations in $\alpha$ and consistently outperforms the expert. In comparison to the convex hull approach of \citet{Kamoutsi2021}, we find that while the simplex-based formulation may offer faster convergence, our framework yields induced policies with superior performance as the state space expands. Furthermore, experiments on higher-dimensional Gridworld environments highlight the importance of regularization in learning a cost vector that correctly approximates the true cost vector and corroborate that stronger regularization accelerates the convergence of the cost vector; however, consistent with Theorem~\ref{theorem:algconvergence}, this comes at the expense of slower convergence for the duality gap.

Future research could extend our methodology in several meaningful directions. First, establishing a criterion for selecting the regularization parameter $\alpha$ remains an open challenge. Second, in scenarios where the cost vector is known to be sparse (see the Gridworld example in Section \ref{sec:gridworld}), it would be useful to analyze problem \eqref{eq:IO-AL} with the $\ell_0$ norm in the objective, $\Vert \vc\Vert_0$, to explicitly enforce sparsity. Finally, further work is warranted to assess the scalability of this method to large-scale, high-dimensional problems and to develop robust techniques for constructing the proxy cost vector in complex real-world scenarios.

\textbf{Acknowledgements}

We thank Mateo Campo for his helpful insights. M. Junca was supported by the Research Fund of Facultad de Ciencias, Universidad de Los Andes INV-2025-213-3470.

\bibliography{main}
\bibliographystyle{tmlr}

\appendix
\section{Appendix - Some proofs}
\subsection{MDPs}
\begin{lemma} \label{lemma:boundValueFunction}
    Given a MDP with discount factor $\gamma\in(0,1)$, then the value function satisfies $$\evV^\pi_\vc(s) \leq \Vert \vc \Vert_\infty = 1$$  for any policy $\pi\in\Pi_0$ and any state $s\in\St$.
\end{lemma}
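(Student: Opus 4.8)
The plan is to work directly from the definition of the normalized value function and reduce the whole statement to summing a geometric series. First I would write out, for an arbitrary policy $\pi\in\Pi_0$ and state $s\in\St$,
$$\evV^\pi_\vc(s) = (1-\gamma)\,\E^\pi_s\!\left[\sum_{t=0}^\infty \gamma^t \evc(s_t,a_t)\right],$$
and observe that along every realized trajectory the incurred cost satisfies the pointwise bound $\evc(s_t,a_t)\le \Vert\vc\Vert_\infty$, since $\Vert\vc\Vert_\infty$ is by definition the largest entry of $\vc$.

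Second, using monotonicity and linearity of the conditional expectation, I would pull this uniform bound out of the sum to get
$$\evV^\pi_\vc(s) \le (1-\gamma)\,\Vert\vc\Vert_\infty \sum_{t=0}^\infty \gamma^t.$$
The interchange of the expectation with the infinite series is legitimate because the partial sums are dominated by the summable sequence $\gamma^t\Vert\vc\Vert_\infty$, which is where the hypothesis $\gamma\in(0,1)$ is essential (it guarantees absolute convergence and lets one invoke monotone or dominated convergence). Evaluating the geometric series as $\sum_{t=0}^\infty \gamma^t = \tfrac{1}{1-\gamma}$, the normalizing prefactor $(1-\gamma)$ cancels exactly, leaving $\evV^\pi_\vc(s)\le\Vert\vc\Vert_\infty$. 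Since the cost class is taken to be $\C=\sB_1^{|\St||\Ac|}$, the box constraint forces $\Vert\vc\Vert_\infty\le 1$, which yields the stated bound.

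I do not anticipate any real obstacle: the argument is a routine domination-plus-geometric-series estimate, and the precise role of $\gamma\in(0,1)$ is only to make the series converge and justify the exchange of limit and expectation. The sole points deserving a word of care are that we need only the one-sided inequality (so no absolute values are strictly required, although carrying $|\evc(s_t,a_t)|$ through the same computation would deliver the two-sided bound $|\evV^\pi_\vc(s)|\le\Vert\vc\Vert_\infty$ used elsewhere in the paper), and that the bound is uniform in both $\pi$ and $s$ precisely because the pointwise cost estimate and the geometric factor do not depend on either.
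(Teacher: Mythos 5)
Your argument is correct and is essentially identical to the paper's own proof: both bound $\evc(s_t,a_t)$ by $\Vert\vc\Vert_\infty$ inside the expectation, sum the geometric series so that the $(1-\gamma)$ prefactor cancels, and invoke $\C\subseteq\sB_1^{|\St||\Ac|}$ to conclude. Your added remarks on justifying the limit--expectation interchange and on the one-sided versus two-sided bound are fine but not needed beyond what the paper does.
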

\begin{proof}
    Given a policy $\pi\in\Pi_0$ and a state $s\in\St$ the value function
    \begin{align*}
        \evV^\pi_\vc (s) &=  (1-\gamma)\E^\pi_{s}\left[ \sum\limits_{t=0}^\infty \gamma^t \evc(s_t,a_t)\right]\\
        &\leq (1-\gamma)\E^\pi_s \left[ \sum\limits_{t=0}^\infty \gamma^t \Vert \vc \Vert_\infty\right]\\
        &= (1-\gamma)\frac{1}{(1-\gamma)}\Vert\vc\Vert_\infty\\
        &= 1.
    \end{align*}
\end{proof}

\subsection{Gradient estimation}
\begin{proof}[Proof of Lemma \ref{lemma:cu_grad}]
        The unbiasedness follows from the following observations
        \begin{align*}
            &\sum\limits_{s,a}\frac{|\St||\Ac|}{|\St||\Ac|}\cdot 2\alpha\left(\evc_t(s,a)\ve^{(s,a)}-\hat{\evc}(s,a)\ve^{(s,a)}\right)\\
            &+\sum\limits_{s_E,a_E} \evmu_{\pi_E}(s_E,a_E)\ve^{(s_E,a_E)} - \sum\limits_{s_t,a_t}\evmu_t(s_t,a_t)\ve^{(s_t,a_t)}\\
            &= 2\alpha(\vc_t - \hat{\vc}) + \vmu_{\pi_E} - \vmu_t
        \end{align*}
        and
        \begin{align*}
            \frac{1}{(1-\gamma)} &\sum\limits_{s_t',a_t,s_t} \evmu_t(s_t,a_t)P(s^\prime_t\:|\:s_t,a_t)(\ve^{(s_t)} - \gamma \ve^{(s_t^\prime)})\\
            &= \frac{1}{(1-\gamma)}\left(\sum\limits_{s_t,a_t}\evmu_t(s_t,a_t)\ve^{(s_t)} -\gamma \sum\limits_{s_t',a_t,s_t} \evmu_t(s_t,a_t)P(s^\prime_t\:|\:s_t,a_t) \ve^{(s_t^\prime)} \right)\\
            &= \frac{1}{(1-\gamma)}(\mB\vmu - \gamma\mP\vmu)\\
            &= \mTg\vmu.
        \end{align*}
        Thus, we obtain
        \begin{align*}
        \E\left[ \Tilde{g}^{(\vc,\vu)}((\vc,\vu),\vmu) \right] &=                                                 \begin{pmatrix}
                    2\alpha(\vc_t-\hat{\vc}) + \vmu_{\pi_E} - \vmu_t \\
                    \mTg\vmu - \mTg\vmu_{\pi_E}
                    \end{pmatrix}.
        \end{align*}
        To prove the bound on the second-moment, note that
        \begin{align*}
            &\E\left[\Big\Vert |\St||\Ac|\cdot 2\alpha\left(\evc_t(s,a)\ve^{(s,a)}-\hat{\evc}(s,a)\ve^{(s,a)}\right) + \ve^{(s_E,a_E)} - \ve^{(s_t,a_t)}\Big\Vert_2^2\right] \\
            &\leq 2\E\left[4|\St||\Ac|\alpha^2\Big\Vert\evc_t(s,a)\ve^{(s,a)}-\hat{\evc}(s,a)\ve^{(s,a)} \Big\Vert_2^2 + \Big\Vert \ve^{(s_E,a_E)} - \ve^{(s_t,a_t)}\Big\Vert_2^2\right]\\
            &\leq 2\E\left[4|\St||\Ac|\alpha^2\cdot4 + 2\right]\\
            &= 32|\St||\Ac|\alpha^2 + 4,
        \end{align*}
        and 
        \begin{align*}
            \E\left[\Big\Vert \frac{1}{(1-\gamma)}\left( \ve^{(s_t)} - \gamma \ve^{(s_t^\prime)} - (\ve^{(s_E)} - \gamma\ve^{(s_E^\prime)})\right) \Big\Vert_2^2\right] &\leq \E\left[\frac{2(1+\gamma^2)}{(1-\gamma)^2}\right]\\
            &= \frac{2(1+\gamma^2)}{(1-\gamma)^2}.
        \end{align*}
        Hence, we can provide the bound
        \begin{align*}
            \E\left[ \big\Vert \Tilde{g}^{(\vc,\vu)}((\vc,\vu),\vmu) \big\Vert^2_2 \right]
            &\stackrel{(i)}{\leq} 2\left[ 32|\St||\Ac|\alpha^2 + 4 + \frac{2(1+\gamma^2)}{(1-\gamma)^2}\right] \\
            &= 64\alpha^2\cdot |\St||\Ac| + \frac{4(1+\gamma^2)}{(1-\gamma)^2} + 8.
        \end{align*}
        where in $(i)$ we used $\Vert \vx+ \vy\Vert^2 \leq 2[\Vert \vx \Vert^2 + \Vert \vy \Vert^2]$.
     \end{proof}

\begin{proof}[Proof of Lemma \ref{lemma:mu_grad}]
    The unbiasedness follows from
    \begin{align*}
        \E[\Tilde{g}^{\vmu}((\vc_t,\vu_t),\vmu_t)] &= \sum\limits_{s,a,s'}\frac{1}{|\St||\Ac|}P(s'\:|\:s,a)\cdot|\St||\Ac|\left(\evc_t(s,a)\ve^{(s,a)} - \frac{1}{(1-\gamma)}(\evu_t(s)\ve^{(s,a)}-\gamma\evu_t(s')\ve^{(s,a)})\right)\\
        &= \sum\limits_{s,a}\evc_t(s,a)\ve^{(s,a)} - \frac{1}{(1-\gamma)}\left(\sum\limits_{s,a}\evu_t(s)\ve^{(s,a)}-\gamma\sum\limits_{s,a}\left(\sum\limits_{s'}P(s'\:|\:s,a)\evu_t(s')\right)\ve^{(s,a)}\right)\\
        &= \vc_t - \frac{1}{(1-\gamma)}\left(\mB^\top\vu - \gamma\mP^\top\vu\right)\\
        &= \vc_t - \mTg^\top \vu.
    \end{align*}
    For the bound on the maximum entry observe that
    \begin{align*}
        \Vert \Tilde{g}^{\vmu} ((\vc_t,\vu_t),\vmu_t) \Vert_\infty &= |\St||\Ac|\max_{s,a,s'}\left\{\bigg\vert \evc_t(s,a) - \frac{\evu_t(s) - \gamma \evu_t(s')}{(1-\gamma)} \bigg\vert\right\}\\
        &\leq |\St||\Ac|\left(\Vert \vc_t \Vert_\infty + \Vert\vu_t\Vert_\infty\frac{(1+\gamma)}{(1-\gamma)}\right)\\
        &= \frac{2|\St||\Ac|}{(1-\gamma)}
    \end{align*}
    Finally, the bound on the second-moment can be obtained by:
    \begin{align*}
        \E\left[ \Vert \Tilde{g}^{(\vmu)}((\vc_t,\vu_t),\vmu_t) \Vert^2_{\vmu'} \right] &\stackrel{(i)}{\leq} |\St|^2|\Ac|^2\cdot\E \left[ 2\Vert \evc_t(s,a)\ve^{(s,a)}\Vert^2_{\vmu'} + \frac{4}{(1-\gamma)^2}(\Vert \evu_t(s)\ve^{(s,a)}\Vert^2_{\vmu'} + \Vert \gamma \evu_t(s')\ve^{(s,a)}\Vert^2_{\vmu'})\right] \\
        &=  |\St|^2|\Ac|^2\cdot \E \left[ \evmu'(s,a)\left(2(\evc_t(s,a))^2 + \frac{4}{(1-\gamma)^2}(\evu_t(s))^2 + \frac{4\gamma^2}{(1-\gamma)^2}(\evu_t(s'))^2\right) \right] \\
        &= |\St||\Ac|\Bigg[\sum\limits_{s,a}\evmu'(s,a)\left(2(\evc_t(s,a))^2 + \frac{4}{(1-\gamma)^2}(\evu_t(s))^2\right)\\
        &\quad + \sum\limits_{s',s,a}\evmu'(s,a)P(s'\mid s,a)\frac{4\gamma^2}{(1-\gamma)^2}(\evu_t(s'))^2 \Bigg]\\
        &\stackrel{(ii)}{\leq} |\St||\Ac|\left[\left(2 + \frac{4}{(1-\gamma)^2}\right) \sum\limits_{s,a}\evmu'(s,a) +\frac{4\gamma^2}{(1-\gamma)^2}\sum\limits_{s',s,a}\evmu'(s,a)P(s'\mid s,a)\right]\\
        &= |\St||\Ac|\left(2 + \frac{4(1+\gamma^2)}{(1-\gamma)^2}\right),
    \end{align*}
    where we used $\Vert \vx+ \vy\Vert^2 \leq 2[\Vert \vx \Vert^2 + \Vert \vy \Vert^2]$ two times for $(i)$ and that $(\vc_t,\vu_t) \in \sB^{|\St||\Ac|}_{1}\times \sB^{|\St|}_{1} $ for $(ii)$. 
\end{proof}

\subsection{Algorithm convergence}
We will follow \cite{Jin2020} and show how their results accommodate our problem.
\begin{definition}[$\ell_\infty$-$\ell_1$ convex-concave min-max problem] 
    Let \(f: \R^n \times \R^m \to \R\) be a differentiable function that is convex in \(\vx\in\R^n\) and concave in \(\vy\in\R^m\). We define the \(\ell_\infty\)-\(\ell_1\) convex-concave min-max problem as
\begin{equation}\label{eq:convconc}
    \min_{\vx \in \sB^n_b} \max_{\vy \in \Delta^m} \; f(\vx, \vy).
\end{equation}
\end{definition}

Furthermore, define the operator $$G(\vz)=G(\vx,\vy)=[\nabla_\vx f(\vx,\vy), - \nabla_\vy f(\vx,\vy)]= [g^\vx(\vx,\vy), g^\vy(\vx,\vy)].$$

\begin{lemma}[cf. Appendix A.1 in \cite{Carmon2019}] \label{lemma:Gap}
For every $\vz_1,...,\vz_K \in \mathcal{Z} = \sB^n_b \times \Delta^m$ it holds that
\begin{align*}
    \text{Gap}\left(\frac{1}{K}\sum\limits_{k=1}^K \vz_k\right) \leq \sup_{\vu\in\mathcal{Z}} \frac{1}{K}\sum\limits_{k=1}^K \langle G(\vz_k), \vz_k - \vu\rangle.
\end{align*}
\end{lemma}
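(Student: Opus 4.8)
The plan is to run the standard ``regret-to-duality-gap'' argument for convex-concave saddle problems: I fix an arbitrary comparator $\vu = (\tilde{\vx}, \tilde{\vy}) \in \mathcal{Z}$, bound each linearized term $\langle G(\vz_k), \vz_k - \vu\rangle$ from below by a difference of function values of $f$, and then average, invoke Jensen's inequality, and finally take the supremum over $\vu$. The whole argument rests on the two first-order characterizations of convexity (in $\vx$) and concavity (in $\vy$).

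First I would write $\vz_k = (\vx_k, \vy_k)$ and expand, using $G(\vz_k) = [\nabla_\vx f(\vx_k, \vy_k),\, -\nabla_\vy f(\vx_k, \vy_k)]$ and $\vz_k - \vu = (\vx_k - \tilde{\vx},\, \vy_k - \tilde{\vy})$,
$$\langle G(\vz_k), \vz_k - \vu\rangle = \langle \nabla_\vx f(\vx_k,\vy_k),\, \vx_k - \tilde{\vx}\rangle + \langle \nabla_\vy f(\vx_k,\vy_k),\, \tilde{\vy} - \vy_k\rangle.$$
Since $f(\cdot, \vy_k)$ is convex, the gradient inequality gives $\langle \nabla_\vx f(\vx_k,\vy_k),\, \vx_k - \tilde{\vx}\rangle \geq f(\vx_k, \vy_k) - f(\tilde{\vx}, \vy_k)$; since $f(\vx_k, \cdot)$ is concave, $\langle \nabla_\vy f(\vx_k,\vy_k),\, \tilde{\vy} - \vy_k\rangle \geq f(\vx_k, \tilde{\vy}) - f(\vx_k, \vy_k)$. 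Adding the two, the $f(\vx_k,\vy_k)$ terms cancel and I obtain the per-iterate bound
$$\langle G(\vz_k), \vz_k - \vu\rangle \geq f(\vx_k, \tilde{\vy}) - f(\tilde{\vx}, \vy_k).$$

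Next I would average over $k = 1,\dots,K$ and apply Jensen's inequality to each averaged term separately: convexity of $f$ in its first argument gives $\tfrac{1}{K}\sum_k f(\vx_k, \tilde{\vy}) \geq f(\bar{\vx}, \tilde{\vy})$, and concavity in the second gives $\tfrac{1}{K}\sum_k f(\tilde{\vx}, \vy_k) \leq f(\tilde{\vx}, \bar{\vy})$, where $\bar{\vz} = (\bar{\vx}, \bar{\vy}) = \tfrac{1}{K}\sum_k \vz_k$. Combining yields, for every $\vu \in \mathcal{Z}$,
$$f(\bar{\vx}, \tilde{\vy}) - f(\tilde{\vx}, \bar{\vy}) \leq \frac{1}{K}\sum_{k=1}^K \langle G(\vz_k), \vz_k - \vu\rangle.$$
Taking the supremum over $\vu$ on both sides closes the argument: because the comparator splits into independent blocks $\tilde{\vx} \in \sB^n_b$ and $\tilde{\vy} \in \Delta^m$, the supremum of the left-hand side decouples into $\max_{\tilde{\vy}\in\Delta^m} f(\bar{\vx}, \tilde{\vy}) - \min_{\tilde{\vx}\in\sB^n_b} f(\tilde{\vx}, \bar{\vy})$, which is exactly $\text{Gap}(\bar{\vz})$, while the right-hand side only grows under the supremum.

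I do not expect a serious obstacle; the argument is elementary once the gradient inequalities are in place. The only points needing care are keeping the directions of the two inequalities consistent (one from convexity, one from concavity) so that the $f(\vx_k, \vy_k)$ terms cancel with the correct sign, and observing that the supremum over $\vu = (\tilde{\vx}, \tilde{\vy})$ separates across the two blocks so that it reproduces the definition of the gap exactly.
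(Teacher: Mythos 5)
Your proof is correct and is essentially the paper's argument: the paper packages the two gradient inequalities into the single statement that $\mathrm{gap}(\vz,\vu)=f(\vz^\vx,\vu^\vy)-f(\vu^\vx,\vz^\vy)$ is concave in $\vu$ (first-order bound at $\vu=\vz$) and convex in $\vz$ (Jensen), which is exactly your per-iterate bound followed by your averaging step. No substantive difference.
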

\begin{proof}
    For all $\vz\in\mathcal{Z}$, $f(\vz^\vx, \vu^\vy)$ is concave in $\vu^\vy$ and $-f(\vu^\vx,\vz^\vy)$ is concave in $\vu^\vx$. Therefore, $\text{gap}(\vz,\vu)$ is concave in $\vu$ for every $\vz$ and we have
    \begin{align*}
        \text{gap}(\vz,\vu) &\leq \text{gap}(\vz,\vz) + \langle \nabla_\vu\text{gap}(\vz,\vz), \vu-\vz \rangle\\
        &= \langle \nabla_\vu\text{gap}(\vz,\vz), \vu-\vz \rangle\\
        &= \langle -G(\vz), \vu-\vz \rangle\\
        &= \langle G(\vz), \vz-\vu \rangle.
    \end{align*}
    Similarly, $\text{gap}(\vz;\vu)$ is convex in $\vz$ for every $\vu$. Therefore,
    \begin{align*}
        \text{gap}\left(\frac{1}{K}\sum\limits_{k=1}^K \vz_k;\vu \right) \leq \frac{1}{K}\sum\limits_{k=1}^K\text{gap}(\vz_k;\vu) \leq \frac{1}{K}\sum\limits_{k=1}^K \langle G(\vz_k), \vz_k - \vu \rangle,
    \end{align*}
    where the first inequality follows from convexity in $\vz$ and the second inequality from the result above. Taking the supremum over the inequality yields the result.
\end{proof}

The two divergences that we will use in the stochastic mirror descent algorithm for the $\ell_\infty$-$\ell_1$ convex-concave min-max problem are the following:
\begin{enumerate}
    \item given the euclidean distance $h(\vx) = \frac{1}{2}\Vert \vx \Vert_2^2$, we obtain the divergence $V_\vx(\vx') = \frac{1}{2}\Vert\vx -\vx'\Vert^2_2$;
    \item given $h(\vy) = \sum_{i}\evy_i\log \evy_i$, we obtain the Kullback-Leibler divergence $V_\vy(\vy') = \sum \evy_i \log\left(\frac{\evy_i'}{\evy_i}\right)$.
\end{enumerate}
\begin{theorem}[c.f. Theorem 3 \citep{Jin2020}]\label{theorem:SMDconvergence}
    Given a $\ell_\infty$-$\ell_1$ convex-concave-min-max problem \eqref{eq:convconc},  desired accuracy $\epsilon$, $(v^\vx, \Vert \cdot \Vert_2)$-bounded estimators $\Tilde{g}^\vx$ of $g^\vx$, and $(\frac{2v^\vy}{\epsilon}, v^\vy, \Vert \cdot \Vert_{\Delta^m})$-bounded estimators $\Tilde{g}^\vy$ of $g^\vy$. Algorithm \ref{alg:our_alg} with choice of parameters $\eta_\vx \leq \frac{\epsilon}{4v^\vx}$, $\eta_\vy \leq \frac{\epsilon}{4v^\vy}$ outputs an expected $\epsilon$-approximate optimal solution within any iteration number $T\geq \max\{\frac{16nb^2}{\epsilon\eta_x}, \frac{8\log(m)}{\epsilon\eta_y}\}$.
\end{theorem}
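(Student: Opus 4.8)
The plan is to follow the standard stochastic-mirror-descent regret analysis for saddle-point problems, specialized to the mixed geometry of $\mathcal{Z} = \sB^n_b \times \Delta^m$, and to check that the estimator hypotheses of Definition \ref{def:bounded_estimator} supply exactly the quantities the analysis consumes. First I would invoke Lemma \ref{lemma:Gap} to reduce bounding $\E[\text{Gap}(\bar{\vz})]$ for the averaged iterate $\bar{\vz}=\frac1T\sum_t \vz_t$ to bounding the expected linearized regret $\E\big[\sup_{\vu\in\mathcal{Z}}\frac1T\sum_t\langle G(\vz_t),\vz_t-\vu\rangle\big]$ against the \emph{true} operator $G=[g^\vx,g^\vy]$. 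Since $\mathcal{Z}$ is a product and the two mirror maps decouple ($h(\vx)=\frac12\|\vx\|_2^2$ on the box, negative entropy on the simplex), both the inner product and the update split into an $\vx$-block and a $\vy$-block that I would treat separately and recombine.

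For each block I would use the one-step mirror-descent prox inequality: for any comparator and any $t$, $\eta\langle \tilde g_t,\cdot_t-\vu\rangle\le V_{\cdot_t}(\vu)-V_{\cdot_{t+1}}(\vu)+\tfrac{\eta^2}{2}\|\tilde g_t\|_*^2$, obtained from the optimality of the prox step, the Bregman three-point identity, strong convexity of $h$, and Young's inequality. Summing in $t$ telescopes the divergence terms, leaving the initial divergence plus a sum of squared gradient norms. On the box I would bound $V_{\vx_0}(\vu)=\frac12\|\vx_0-\vu\|_2^2\le 2nb^2$ and, from the $(v^\vx,\|\cdot\|_2)$-bound, $\E\|\tilde g^\vx_t\|_2^2\le v^\vx$; on the simplex (uniform start) $V_{\vy_0}(\vu)=\text{KL}(\vu\|\vy_0)\le\log m$, and the entropic regret bound with local norms $\|\cdot\|_{\vy_t}$ applies \emph{provided} $\eta_\vy\|\tilde g^\vy_t\|_\infty\le 1$, which is where the max-entry part of the $(\tfrac{2v^\vy}{\epsilon},v^\vy,\|\cdot\|_{\Delta^m})$-bound enters: $\eta_\vy\le\frac{\epsilon}{4v^\vy}$ gives $\eta_\vy\cdot\frac{2v^\vy}{\epsilon}=\frac12\le1$, so the quadratic upper bound on the multiplicative-weights step is valid and $\E\|\tilde g^\vy_t\|_{\vy_t}^2\le v^\vy$ controls the variance sum.

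The delicate point is converting the true-gradient regret into the stochastic-step regret while the supremum over $\vu$ sits \emph{inside} the expectation. I would write $\langle G(\vz_t),\vz_t-\vu\rangle=\langle\tilde G_t,\vz_t-\vu\rangle+\langle G(\vz_t)-\tilde G_t,\vz_t\rangle-\langle G(\vz_t)-\tilde G_t,\vu\rangle$. The first piece is governed by the per-block regret bounds above. The second piece is a martingale difference: conditioning on the filtration, $\vz_t$ is predictable and $\tilde G_t$ unbiased, so it vanishes in expectation. \textbf{The main obstacle is the third piece}, $\E\big[\sup_{\vu}\frac1T\sum_t\langle\tilde G_t-G(\vz_t),\vu\rangle\big]$, whose supremum can adversarially correlate with the noise. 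I would handle it by exploiting the product geometry: the box part equals $b\big\|\frac1T\sum_t(\tilde g^\vx_t-g^\vx_t)\big\|_1$ and the simplex part equals $\max_i\frac1T\sum_t(\tilde g^\vy_{t,i}-g^\vy_{t,i})$, both controllable in expectation via a maximal/martingale inequality using the same second-moment and max-entry bounds (alternatively, an auxiliary ``ghost-iterate'' mirror-descent sequence driven by the gradient errors absorbs this term into an extra diameter-plus-variance contribution of the same order).

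Finally I would collect the pieces. The step-size choices force $\frac{\eta_\vx v^\vx}{2}\le\frac{\epsilon}{8}$ and $\frac{\eta_\vy v^\vy}{2}\le\frac{\epsilon}{8}$, while the iteration thresholds force the diameter terms $\frac{2nb^2}{\eta_\vx T}\le\frac{\epsilon}{8}$ and $\frac{\log m}{\eta_\vy T}\le\frac{\epsilon}{8}$; the remaining budget covers the comparator-noise contribution, yielding $\E[\text{Gap}(\bar{\vz})]\le\epsilon$. I expect no single computation to be hard individually — the prox inequality, telescoping, and diameter bounds are routine — so the real work is the careful bookkeeping of the in-expectation supremum and confirming that the two estimator conditions (local second moment for the quadratic terms, max-entry for both entropic-step validity and the maximal inequality) are precisely what close the argument with the stated constants.
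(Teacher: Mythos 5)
Your proposal follows essentially the same route as the paper's proof: reduction via Lemma \ref{lemma:Gap}, per-block mirror-descent regret bounds with the Euclidean prox on the box and the entropic prox on the simplex, the max-entry condition $\eta_\vy\Vert\Tilde{g}^\vy_t\Vert_\infty\leq\tfrac12$ to validate the multiplicative-weights step, and the decomposition of the true-gradient regret into a stochastic part, a martingale-difference part, and a comparator-noise part. The one substantive caveat concerns your handling of that third, comparator-noise term. Your \emph{primary} suggestion --- a maximal/martingale inequality on $\max_i\frac1T\sum_t(\tilde g^\vy_{t,i}-g^\vy_{t,i})$ --- is delicate on the simplex block: the only uniform control available per coordinate is the $\ell_\infty$ bound $z=\tfrac{2v^\vy}{\epsilon}$ (the second-moment bound is a $\vy_t$-weighted local norm, not a per-coordinate variance), so an Azuma-plus-union-bound argument yields a contribution of order $(v^\vy/\epsilon)\sqrt{\log m/T}$ and would require $T\gtrsim (v^\vy)^2\log m/\epsilon^4$, which exceeds the stated threshold $\frac{8\log m}{\epsilon\eta_\vy}=\frac{32 v^\vy\log m}{\epsilon^2}$. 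The paper avoids this entirely by the device you relegate to a parenthesis: it introduces ghost iterates $\hat\vx_t,\hat\vy_t$ driven by the gradient errors $\hat g_t=g_t-\tilde g_t$, applies the \emph{same} two regret lemmas to those sequences (checking $\Vert\eta^\vy\hat g^\vy_t\Vert_\infty\leq 1$ via the triangle inequality and unbiasedness), and absorbs the comparator-dependence into a second diameter-plus-variance term while the residual $\langle\hat g_t,\vz_t-\hat\vz_t\rangle$ is a genuine martingale difference (both arguments are predictable) that vanishes in expectation, with $\E\Vert\hat g_t\Vert^2\leq\E\Vert\tilde g_t\Vert^2$ keeping the constants. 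So your plan closes the argument with the stated iteration count only if you commit to the ghost-iterate alternative rather than the maximal-inequality route.
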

\begin{proof}
    Note that $V_\vx$ is 1-strongly convex. Since $\eta_\vy\leq \frac{\epsilon}{4v^\vy}$, we have that
    \begin{align*}
        \Vert \eta^\vy \Tilde{g}^\vy_t \Vert_\infty &\leq \frac{\epsilon}{4v^\vy} \cdot \Vert \Tilde{g}^\vy_t \Vert_\infty \leq \frac{\epsilon}{4v^\vy} \cdot \frac{2v^\vy}{\epsilon} = \frac{1}{2}.
    \end{align*}
    Hence, by Lemma 1 and Lemma 2 in \cite{Jin2020} we know that
    \begin{align*}
        \sum\limits_{t\in[T]} \langle \eta^\vx \Tilde{g}^\vx_t, \vx_t - \vx \rangle &\leq V_{\vx_1}(\vx) + \frac{{\eta^\vx}^2}{2}\sum\limits_{t\in[T]}\Vert \Tilde{g}^\vx_t\Vert^2_2,\\
        \sum\limits_{t\in[T]} \langle \eta^\vy \Tilde{g}^\vy_t, \vy_t - \vy \rangle &\leq V_{\vy_1}(\vy) + \frac{{\eta^\vy}^2}{2}\sum\limits_{t\in[T]}\Vert \Tilde{g}^\vy_t\Vert^2_{\vy_t}.
    \end{align*}
    Now, define $\hat{g}^\vx_t := g^\vx_t - \Tilde{g}^\vx_t$, $\hat{g}^\vy_t := g^\vy_t - \Tilde{g}^\vy_t$, and the sequences $\hat{\vx}_1,...,\hat{\vx}_T$ and $\hat{\vy}_1,...,\hat{\vy}_T$ by
    \begin{align*}
        \hat{\vx}_1 = \vx_1, \: \hat{\vx}_{t+1} = \argmin_{\vx\in\sB^n_b} \langle \eta^\vx\hat{g}^\vx_t, \vx \rangle + V_{\hat{\vx}_t}(\vx),\\
        \hat{\vy}_1 = \vy_1, \: \hat{\vy}_{t+1} = \argmin_{\vy\in\Delta^m} \langle \eta^\vy\hat{g}^\vy_t, \vy \rangle + V_{\hat{\vy}_t}(\vy).
    \end{align*}
    In a similar way to $\eta^\vy g^\vy_t$, we can bound the $\ell_\infty$-norm of $\eta^\vy \hat{g}^\vy_t$
    \begin{align*}
        \Vert \eta^\vy \hat{g}^\vy_t\Vert_\infty \leq \Vert \eta^\vy \Tilde{g}^\vy_t\Vert_\infty + \Vert \eta^\vy g^\vy_t\Vert_\infty = \Vert \eta^\vy \Tilde{g}^\vy_t\Vert_\infty + \Vert \E[\eta^\vy \Tilde{g}^\vy_t]\Vert_\infty \leq 2\Vert \eta^\vy \Tilde{g}^\vy_t\Vert_\infty \leq 1.
    \end{align*}
    Therefore, using the lemmas as above we get
    \begin{align*}
        \sum\limits_{t\in[T]} \langle \eta^\vx \hat{g}^\vx_t, \vx_t - \vx \rangle &\leq V_{\vx_1}(\vx) + \frac{{\eta^\vx}^2}{2}\sum\limits_{t\in[T]}\Vert \hat{g}^\vx_t\Vert^2_2,\\
        \sum\limits_{t\in[T]} \langle \eta^\vy \hat{g}^\vy_t, \vy_t - \vy \rangle &\leq V_{\vy_1}(\vy) + \frac{{\eta^\vy}^2}{2}\sum\limits_{t\in[T]}\Vert \hat{g}^\vy_t\Vert^2_{\vy_t}.
    \end{align*}
    Since $ g^\vx_t = \hat{g}^\vx_t + \Tilde{g}^\vx_t$ and $ g^\vy_t = \hat{g}^\vy_t + \Tilde{g}^\vy_t$, 
    \begin{align*}
        &\sum\limits_{t\in[T]} \left[ \langle g^\vx_t, \vx_t - \vx\rangle + \langle g^\vy_t, \vy_t -\vy \rangle \right]\\
        = &\sum\limits_{t\in[T]}\left[ \frac{1}{\eta^\vx}\langle \eta^\vx\Tilde{g}^\vx_t, \vx_t - \vx\rangle + \frac{1}{\eta^\vy}\langle \eta^\vy\Tilde{g}^\vy_t, \vy_t - \vy\rangle\right] + \sum\limits_{t\in[T]}\left[ \frac{1}{\eta^\vx}\langle \eta^\vx\hat{g}^\vx_t, \hat{\vx}_t - \vx\rangle + \frac{1}{\eta^\vy}\langle \eta^\vy\hat{g}^\vy_t, \hat{\vy}_t - \vy\rangle \right]\\
        + &\sum\limits_{t\in[T]}\left[\langle \hat{g}^\vx_t, \vx_t - \hat{\vx}_t \rangle + \langle \hat{g}^\vy_t, \vy_t - \hat{\vy}_t \rangle\right]\\
        = &\frac{1}{\eta^\vx}\sum\limits_{t\in[T]}\left[\langle \eta^\vx\Tilde{g}^\vx_t, \vx_t - \vx\rangle \right] + \frac{1}{\eta^\vx}\sum\limits_{t\in[T]}\left[ \langle \eta^\vx\hat{g}^\vx_t, \hat{\vx}_t - \vx\rangle \right] + \frac{1}{\eta^\vy}\sum\limits_{t\in[T]}\left[\langle \eta^\vy\Tilde{g}^\vy_t, \vy_t - \vy\rangle\right] \\
+& \frac{1}{\eta^\vy} \sum\limits_{t\in[T]} \left[ \langle \eta^\vy\hat{g}^\vy_t, \hat{\vy}_t - \vy\rangle \right]
        + \sum\limits_{t\in[T]}\left[\langle \hat{g}^\vx_t, \vx_t - \hat{\vx}_t \rangle + \langle \hat{g}^\vy_t, \vy_t - \hat{\vy}_t \rangle\right]\\
        \leq &\frac{2}{\eta^\vx}V_{\vx_1}(\vx) + \frac{{\eta^\vx}}{2}\sum\limits_{t\in[T]} \left[ \Vert\Tilde{g}^\vx_t\Vert^2_2 + \Vert\hat{g}^\vx_t\Vert^2_2 \right] + \sum\limits_{t\in[T]}\langle \hat{g}^\vx_t, \vx_t - \hat{\vx}_t \rangle\\
        + &\frac{2}{\eta^\vy}V_{\vy_1}(\vy) + \frac{{\eta^\vy}}{2}\sum\limits_{t\in[T]} \left[ \Vert\Tilde{g}^\vy_t\Vert^2_{\vy_t} + \Vert\hat{g}^\vy_t\Vert^2_{\vy_t} \right] + \sum\limits_{t\in[T]}\langle \hat{g}^\vy_t, \vy_t - \hat{\vy}_t \rangle.
        \end{align*}
        Consider the operator $G(\vz) := [g^\vx(\vx,\vy), -g^\vy(\vx,\vy)]$. As min-max problem \ref{eq:convconc} is convex in its first argument and concave in its second argument, by Lemma \ref{lemma:Gap}, if we show that 
        \begin{align*}
            \sup_{\vu\in\mathcal{Z}} \frac{1}{T}\sum\limits_{t\in[T]} \langle G(\vz_t), \vz_t - \vu\rangle \leq \epsilon,
        \end{align*}
        we obtain that $\text{Gap}(\frac{1}{T}\sum\limits_{t=1}^T \vz_t) \leq \epsilon$. Now take the expectation on both sides: 
        \begin{align*}
            \E\sup_{\vu\in\mathcal{Z}} \frac{1}{T}\sum\limits_{t\in[T]} \langle G(\vz_t), \vz_t - \vu\rangle &=  \E \frac{1}{T} \sup_{(\vx,\vy)}\left[ \sum\limits_{t\in[T]} \langle g^\vx(\vx,\vy), \vx_t - \vx \rangle + \sum\limits_{t\in[T]} \langle  g^\vy(\vx,\vy), \vy_t - \vy \rangle \right]\\
            &\stackrel{(i)}{\leq} \frac{1}{T} \E \sup_{(\vx,\vy)} \Bigg[ \frac{2}{\eta^\vx}V_{\vx_1}(\vx) + \frac{{\eta^\vx}}{2}\sum\limits_{t\in[T]} \left[ \Vert\Tilde{g}^\vx_t\Vert^2_2 + \Vert\hat{g}^\vx_t\Vert^2_2 \right]\\
            &+ \frac{2}{\eta^\vy}V_{\vy_1}(\vy) + \frac{{\eta^\vy}}{2}\sum\limits_{t\in[T]} \left[ \Vert\Tilde{g}^\vy_t\Vert^2_2 + \Vert\hat{g}^\vy_t\Vert^2_2 \right] \Bigg]\\
            &\stackrel{(ii)}{\leq} \frac{1}{T} \E \sup_{(\vx,\vy)} \Bigg[ \frac{2}{\eta^\vx}V_{\vx_1}(\vx) + \eta^\vx \sum\limits_{t\in[T]}\Vert\Tilde{g}^\vx_t\Vert^2_2 + \frac{2}{\eta^\vy}V_{\vy_1}(\vy) + \eta^\vy \sum\limits_{t\in[T]} \Vert\Tilde{g}^\vy_t\Vert^2_2 \Bigg]\\
            &\stackrel{(iii)}{\leq}  \sup_{\vx}\frac{2}{\eta^\vx T}V_{\vx_1}(\vx) + \eta^\vx v^\vx + \sup_{\vy}\frac{2}{\eta^\vy T}V_{\vy_1}(\vy) + \eta^\vy v^\vy\\
            &\stackrel{(iv)}{\leq}  \frac{4nb^2}{\eta^\vx T} + \eta^\vx v^\vx + \frac{2\log m}{\eta^\vy T} + \eta^\vy v^\vy\\
            &\stackrel{(v)}{\leq}  \epsilon,
        \end{align*}
        where in $(i)$ we used that $\E[\langle \hat{g}^\vx_t, \vx_t - \hat{\vx}_t \rangle \: | \: 1,...,T] = \E[\langle \hat{g}^\vy_t, \vy_t - \hat{\vy}_t \rangle \: | \: 1,...,T] = 0$; $(ii)$ $\E[\Vert \hat{g}^\vx_t \Vert^2_2] \leq \E[\Vert \Tilde{g}^\vx_t \Vert^2_2]$ and $\E[ \sum_i [\hat{\evy}_t]_i  [\hat{g}^\evy_t]^2_i] \leq \E[ \sum_i [\hat{\evy}_t]_i  [\Tilde{g}^\evy_t]^2_i]$ due to $\E[(X-\E[X])^2] \leq \E[X^2]$; $(iii)$ due to the assumptions on the estimators; $(iv)$ by properties of KL-divergence and that $\frac{1}{2}\Vert\vx-\vx_0\Vert^2_2 \leq 2nb^2$; $(v)$ the choice of $\eta^\vx = \frac{\epsilon}{4v^\vx}$, $\eta^\vy = \frac{\epsilon}{4v^\vy}$, and $T \geq \max\{\frac{16nb^2}{\epsilon\eta^\vx}, \frac{8\log m}{\epsilon\eta^\vy}\}$.
\end{proof}

Now, Theorem \ref{theorem:algconvergence}  follows directly from the bounds of the gradient estimators and Theorem \ref{theorem:SMDconvergence}.

\section{Numerical experiments}

\subsection{Gridworld}
We extend the analysis to Gridworld environments with four random obstacle-goal configurations to assess the method's applicability to arbitrary cost structures. We retain the experimental setup described in Section \ref{sec:gridworld}, fixing the regularization parameter to $\alpha=0.005$ and generating a suboptimal expert via early solver termination. For each configuration, we compare the resulting apprentice policy against both the optimal and expert policies, and examine the learned cost vector for the action ``down" relative to the ground truth given by the color coding of the policies' grid.

\begin{figure}[ht]
    \centering
    \begin{tabular}{ M{0.75cm} M{0.19\textwidth} M{0.19\textwidth} M{0.19\textwidth} M{0.19\textwidth} }
    
         & Optimal policy & Suboptimal expert & Learned cost (down) & Apprentice policy \\
         
         \text{Grid 1} 
            & \includegraphics[width=\linewidth]{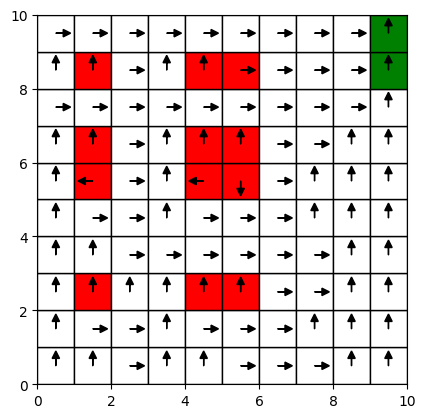}
            & \includegraphics[width=\linewidth]{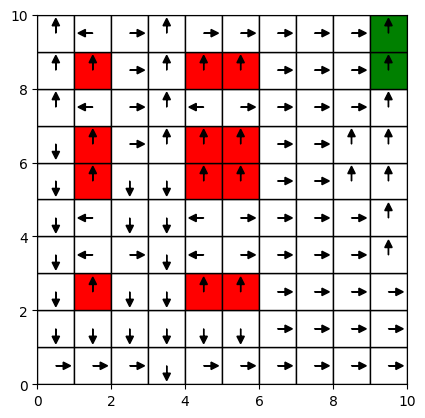}
            & \includegraphics[width=\linewidth]{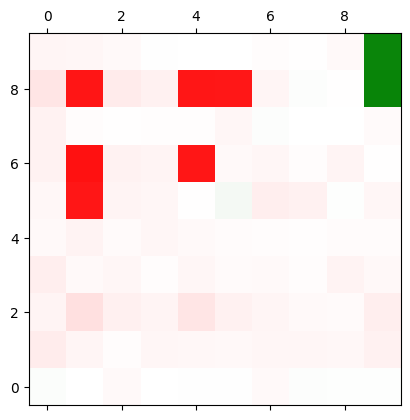}
            & \includegraphics[width=\linewidth]{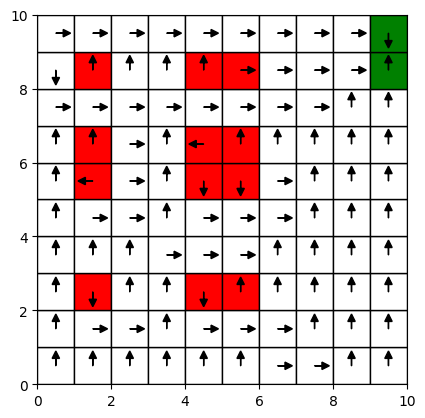} \\
            
         \text{Grid 2}
            & \includegraphics[width=\linewidth]{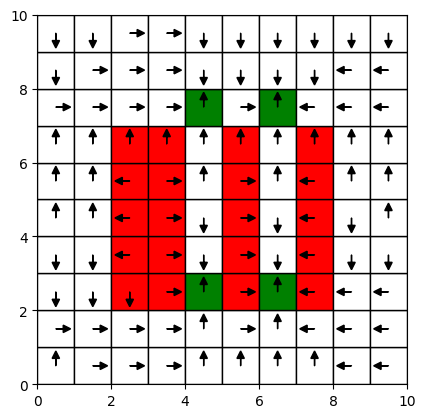}
            & \includegraphics[width=\linewidth]{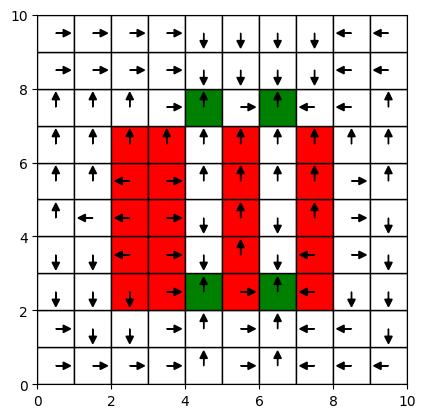}
            & \includegraphics[width=\linewidth]{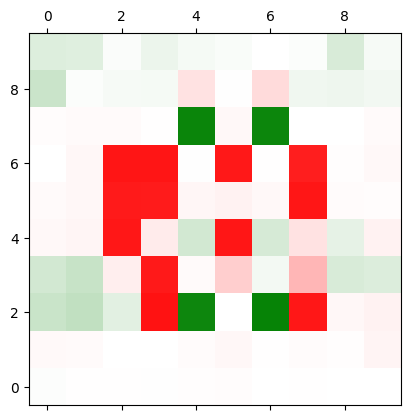}
            & \includegraphics[width=\linewidth]{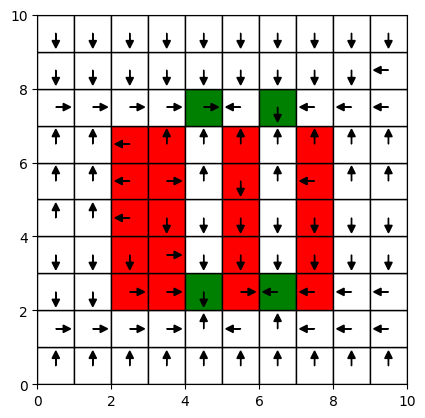}\\
            
        \text{Grid 3}
            & \includegraphics[width=\linewidth]{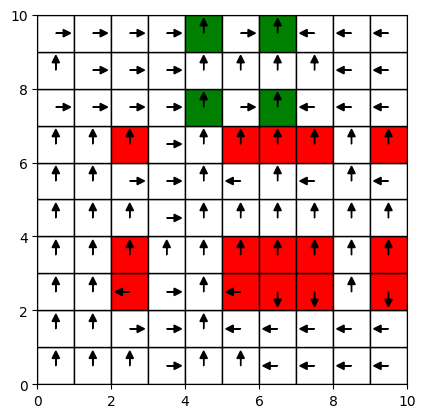}
            & \includegraphics[width=\linewidth]{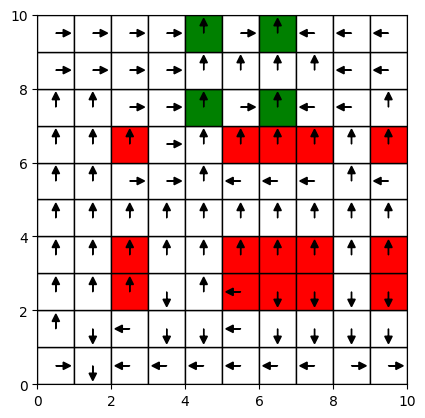}
            & \includegraphics[width=\linewidth]{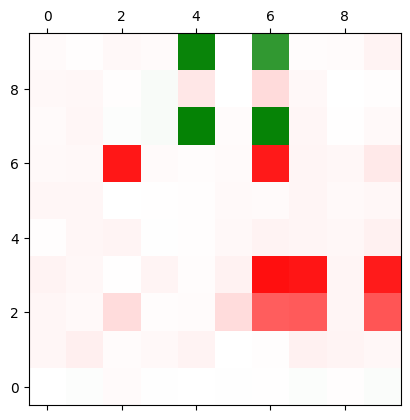}
            & \includegraphics[width=\linewidth]{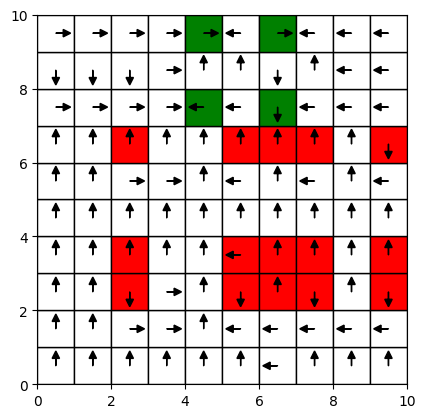} \\
            
        \text{Grid 4}
            & \includegraphics[width=\linewidth]{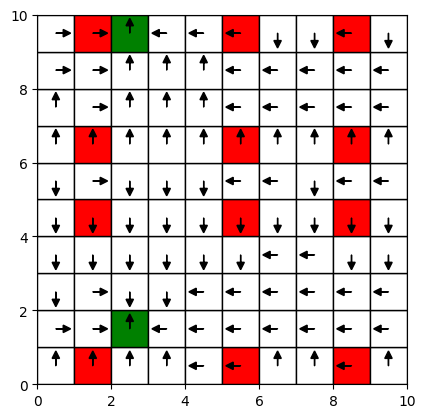}
            & \includegraphics[width=\linewidth]{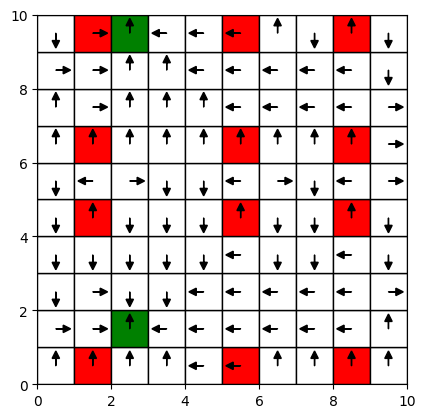}
            & \includegraphics[width=\linewidth]{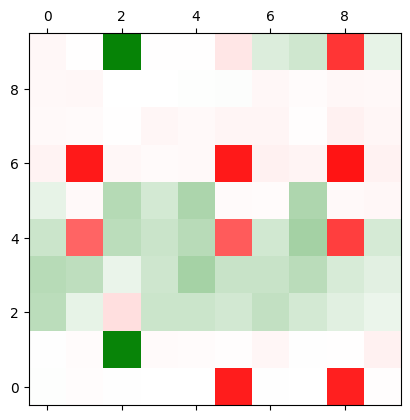}
            & \includegraphics[width=\linewidth]{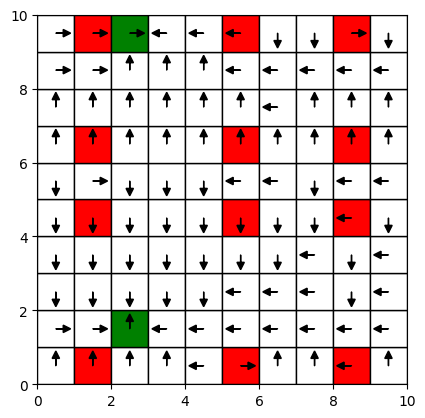}
            
    \end{tabular}
    \caption{Effect of the regularization on the cost vector.}
    \label{fig:gridappendix}
\end{figure}

\end{document}